%
%
%
%
%
%
%

\documentclass[9pt]{sigplanconf}



\usepackage{times}
\usepackage{helvet}
\usepackage{courier}
\usepackage{braket}
\usepackage{mathrsfs}
\usepackage{amsmath, amsthm, amssymb, amsfonts, mathtools}
\usepackage{graphicx, caption, subcaption, tikz}
\usepackage{appendix}
\usepackage{eqproof}
\usepackage{natbib}
\usepackage{tikz}
\usetikzlibrary{decorations.markings}
\usetikzlibrary{positioning, arrows, shapes}

\pgfdeclarelayer{edgelayer}
\pgfdeclarelayer{nodelayer}
\pgfsetlayers{edgelayer,nodelayer,main}

\tikzstyle{none}=[inner sep=0pt]
\tikzstyle{small circ}=[inner sep=0pt, circle, minimum size = 0.2cm,fill=white,draw=black]
\tikzstyle{small_node}=[inner sep=0pt, circle, minimum size = 0.2cm,fill=white,draw=black]
\tikzstyle{blank}=[inner sep=0pt, circle,fill=white,draw=white]
\tikzstyle{box}=[rectangle, minimum size = 0.5cm,fill=white,draw=black]

\usepackage{etoolbox}
\newtoggle{COMMENTS}

\iftoggle{COMMENTS}{
  \newcommand{\margincomment}[2]{\marginpar{\tiny\color{#1}#2}}
  \newcommand{\dm}[1]{\margincomment{green}{#1}}
  \newcommand{\ml}[1]{\margincomment{red}{#1}}
  \newcommand{\bc}[1]{\margincomment{blue}{#1}}
}{
  \newcommand{\dm}[1]{}
  \newcommand{\ml}[1]{}
  \newcommand{\bc}[1]{}
}

\usepackage{stmaryrd}

\DeclareMathOperator{\trace}{tr}
\newcommand{\loewner}{\sqsubseteq}
\newcommand{\satisfies}{\Vdash}
\newcommand{\semantics}[1]{\llbracket #1 \rrbracket}
\newcommand{\lang}[1]{\ensuremath{\textit{#1}}}

\newcommand{\ot}{\otimes}   
\newcommand{\ep}{\varepsilon}

\newcommand {\hypo}{\curlyeqprec} 
\newcommand{\psd}{\succeq} 

\newcommand{\FHilb}{\mathbf{FHilb}}
\newcommand{\CPM}[1]{\mathbf{CPM}(#1)}
\newcommand{\CPMC}{\CPM{\mathcal{C}}}

\newcommand{\freepreg}[1]{\mathsf{Preg}_{#1}}

\newcommand{\functor}[1]{\mathsf{#1}}
\newcommand{\hilbsem}{\functor{Q}}
\newcommand{\cpmsem}{\functor{S}}
\newcommand{\cpmpure}{\functor{E}}

\allowdisplaybreaks
\newtheorem{dfn}{Definition}
\newtheorem{lem}{Lemma}

\newtheorem{thm}{Theorem}
\newtheorem{cor}{Corollary}

\newtheorem {exa}{Example}

\newtheorem*{thm*}{Theorem}
\newtheorem*{exa*}{Example}

\setlength{\pdfpageheight}{\paperheight}
\setlength{\pdfpagewidth}{\paperwidth}

\conferenceinfo{}{}
\copyrightyear{}
\copyrightdata{}
\copyrightdoi{}


\titlebanner{banner above paper title}        
\preprintfooter{short description of paper}   

\title{Graded Entailment for Compositional Distributional Semantics}

\authorinfo{Dea~Bankova}
			{Microsoft}
			{d.y.bankova@gmail.com}
\authorinfo{Bob~Coecke \: Martha~Lewis \: Dan~Marsden}
           {University of Oxford}
           {\{coecke, marlew, daniel.marsden\}@cs.ox.ac.uk}

\begin{document}
\maketitle
\begin{abstract}
  The categorical compositional distributional model of natural language provides a conceptually motivated procedure to compute the meaning of sentences, given grammatical structure and the meanings of its words.
  This approach has outperformed other models in mainstream empirical language processing tasks.
  However, until recently it has lacked the crucial feature of lexical entailment -- as do other distributional models of meaning.

  In this paper we solve the problem of entailment for categorical compositional distributional semantics.
  Taking advantage of the abstract categorical framework allows us to vary our choice of model.
  This enables the introduction of a notion of entailment, exploiting ideas from the categorical semantics of partial knowledge in quantum computation.

  The new model of language uses density matrices, on which we introduce a novel robust graded order capturing the \emph{entailment strength} between concepts.
  This graded measure emerges from a general framework for approximate entailment, induced by any commutative monoid.
Quantum logic embeds in our graded order.

Our main theorem  shows that  entailment strength lifts compositionally to the sentence level,  giving a lower bound on sentence entailment. We describe the essential properties of graded entailment such as continuity, and provide a procedure for calculating entailment strength.
\end{abstract}

\category{Artifical Intelligence}{Natural Language Processing}{Lexical Semantics}


\keywords
Categorical~Compositional~Distributional~Semantics, Computational~Linguistics, Entailment, Density~Operator.

\section{Introduction}
Finding a formalization of language in which the meaning of a sentence can be computed from the meaning of its parts has been a long-standing goal in formal and computational linguistics.

Distributional semantics represent individual word meanings as vectors in finite dimensional real vector spaces.
On the other hand, symbolic accounts of meaning combine words via compositional rules to form phrases and sentences.
These two approaches are in some sense orthogonal.
Distributional schemes have no obvious compositional structure, whereas compositional models lack a canonical way of determining the meaning of individual words.
In~\cite{coecke2010}, the authors develop the categorical compositional distributional model of natural language semantics.
This model leverages the shared categorical structure of pregroup grammars and vector spaces to provide a compositional structure for distributional semantics.
It has produced state-of-the-art results in measuring sentence similarity~\cite{Kartsaklis2012, Grefenstette2011},
effectively describing aspects of human understanding of sentences. 

A satisfactory account of natural language should incorporate a suitable notion of lexical entailment.
Until recently, categorical compositional distributional models of meaning have lacked this crucial feature.
In order to address the entailment problem, we exploit the freedom inherent in our abstract categorical framework
to change models. We move from a pure state setting to a category
used to describe mixed states and partial knowledge in the semantics of categorical quantum mechanics.
Meanings are now represented by density~matrices rather than simple vectors.
We use this extra flexibility to capture the concept of hyponymy,
where one word may be seen as an instance of another. For example, \lang{red} is a hyponym of \lang{colour}.
The hyponymy relation can be associated with a notion of logical entailment.
Some entailment is crisp, for example: \lang{dog} entails \lang{animal}. However, we may also wish to permit entailments of differing strengths.
For example, the concept \lang{dog} gives high support to the the concept \lang{pet}, but does not completely entail it: some dogs are working dogs.
The hyponymy~relation we describe here can account for these phenomena.
We should also be able to measure entailment strengths at the sentence level.
For example, we require that \lang{Cujo is a dog} crisply entails \lang{Cujo is an animal},
but that the statement \lang{Cujo is a dog} does not completely entail \lang{Cujo is a pet}.
Again, the relation we describe here will successfully describe this behaviour at the sentence level.

An obvious choice for a logic built upon vector spaces is quantum logic~\cite{birkhoff1936}.
Briefly, this logic represents propositions about quantum systems as projection operators on an appropriate Hilbert space.
These projections form an orthomodular lattice where the distributive law fails in general.
The logical structure is then inherited from the lattice structure in the usual way.
In the current work, we propose an order that embeds the orthomodular lattice of projections, and so contains quantum logic.
This order is based on the L\"owner ordering with propositions represented by density matrices.
When this ordering is applied to density matrices with the standard trace normalization,
no propositions compare, and therefore the L\"owner ordering is useless as applied to density operators.
The trick we use is to develop an approximate entailment relationship which arises naturally from any commutative monoid.
We introduce this in general terms and describe conditions under which this gives a graded measure of entailment.
This grading becomes continuous with respect to noise.
Our framework is flexible enough to subsume the Bayesian partial ordering of~\citet{coecke2011a} and provides it with a grading.

Most closely related to the current work are the ideas in~\cite{balkir2014, balkir2015, balkir2015b}.
In this work, the authors develop a graded form of entailment based on von Neumann entropy and with links to the distributional inclusion hypotheses developed by~\cite{geffet2005}.
The authors show how entailment at the word level carries through to entailment at the sentence level.
However, this is done without taking account of the grading.
In contrast, the measure that we develop here provides a lower bound for the entailment strength between sentences, based on the entailment strength between words.
Further, the measure presented here is applicable to a wider range of sentence types than in~\cite{balkir2015b}. Some of the work presented here was developed here in the first author's MSc thesis \cite{bankova2015}.

Density matrices have also been used in other areas of distributional semantics.
They are exploited in~\citep{Kartsaklis2015, piedeleu2014, piedeleu2015} to encode ambiguity.
\citet{blacoe2013} use density operators to encode the contexts in which a word occurs, but do not use these operators in a compositional structure.

Quantum logic has been applied to distributional semantics in~\cite{widdows2003}, allowing queries of the form `suit NOT lawsuit'.
Here, the vector for `suit' is projected onto the subspace orthogonal to `lawsuit'.
A similar approach, in the field of information retrieval, is described in \citet{van2004}.
In this setting, document retrieval is modelled as a form of quantum logical inference. 

The majorization preordering on density matrices has been extensively used in quantum information \cite{nielsen1999}, however it cannot be turned into a partial order and therefore it is of no use as an entailment relation.

\subsection{Background} 

Within distributional semantics, word meanings are derived from text corpora using word co-occurrence statistics~\cite{Lund1996, Mitchell2010, Bullinaria2007}. Other methods for deriving such meanings may be carried out. In particular, we can view the dimensions of the vector space as attributes of the concept, and experimentally determined attribute importance as the weighting on that dimension as in~\cite{Hampton1987, McRae2005, Vinson2008, Devereux2014}. Distributional models of language have been shown to effectively model various facets of human meaning, such as similarity judgements \cite{mcdonald2001}, word sense discrimination \cite{schutze1998, mccarthy2004} and text comprehension \cite{landauer1997, foltz1998}.

Entailment is  an important and thriving area of research within distributional semantics. The PASCAL Recognising Textual Entailment Challenge~\cite{dagan2006} has attracted a large number of researchers in the area and generated a number of approaches.
Previous lines of research on entailment for distributional semantics investigate the development of
directed similarity measures which can characterize entailment~\cite{weeds2004, kotlerman2010, lenci2012}.
\citet{geffet2005} introduce a pair of \emph{distributional inclusion hypotheses}, where if a word~$v$ entails another word~$w$,
then all the typical features of the word~$v$ will also occur with the word~$w$. Conversely, if all the typical features of~$v$ also occur with~$w$ ,
$v$ is expected to entail $w$. \citet{clarke2009} defines a vector lattice for word vectors, and a notion of graded entailment with the properties of a conditional probability.
\citet{rimell2014} explores the limitations of the distributional inclusion hypothesis by examining the the properties of those features that are not shared between words.
An interesting approach in~\cite{kiela2015} is to incorporate other modes of input into the representation of a word.
Measures of entailment are based on the dispersion of a word representation, together with a similarity measure.

Attempts have also been made to incorporate entailment measures with elements of compositionality.
\citet{baroni2012} exploit the entailment relations between adjective-noun and noun pairs to train a classifier that can detect similar relations.
They further develop a theory of entailment for quantifiers.


\section{Categorical Compositional Distributional Meaning}
\label{sec:DisCo}
Compositional and distributional account of meaning are unified in~\cite{coecke2010}, constructing the meaning of sentences from the meanings of their component parts using their syntactic structure.
\subsection{Pregroup Grammars}
\label{sec:PregroupGrammars}
In order to describe syntactic structure we use Lambek's pregroup grammars \citep{lambek1999}. This choice of grammar is not essential, and other forms of categorial grammar can be used, as argued in \cite{coecke2013}. A pregroup   $(P, \leq, \cdot, 1, (-)^l, (-)^r)$ is a partially ordered monoid $(P, \leq, \cdot, 1)$ where each element $p\in P$ has a left adjoint $p^l$ and a right adjoint $p^r$, such that the following inequalities hold:
\begin{equation}
  \label{eq:preg}
  p^l\cdot p \leq 1 \leq p\cdot p^l \quad \text{ and } \quad p\cdot p^r \leq 1 \leq p^r \cdot p
\end{equation}
Intuitively, we think of the elements of a pregroup as linguistic types. The monoidal structure allows us to form composite types,
and the partial order encodes type reduction. The important right and left adjoints then enable the introduction of types requiring
further elements on either their left or right respectively.

The pregroup grammar~$\freepreg{\mathcal{B}}$ over an alphabet~$\mathcal{B}$ is freely constructed from the atomic types in~$\mathcal{B}$. 
In what follows we use an alphabet $\mathcal{B} = \{n, s\}$. We use the type $s$ to denote a declarative sentence and $n$ to denote a noun. A transitive verb can then be denoted $n^r s n^l$. If a string of words and their types reduces to the type $s$, the sentence is judged grammatical. The sentence \lang{John kicks cats} is typed $n~(n^r s n^l)~ n$, and can be reduced to $s$ as follows: 
\[
n~(n^r s n^l)~ n \leq 1\cdot s n^l n \leq 1 \cdot s \cdot 1 \leq s
\]
This symbolic reduction can also be expressed graphically, as shown in figure~\ref{fig:reduction}.
In this diagrammatic notation, the elimination of types by means of the inequalities~$n \cdot n^r \leq 1$ and~$n^l \cdot n \leq 1$
is denoted by a `cup' while the fact that the type~$s$ is retained is represented by a straight wire.
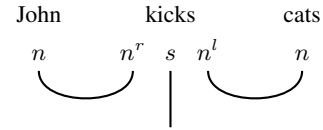
\begin{figure}[htbp]
\centering
\begin{tikzpicture}[text height=1.5 ex]
	\begin{pgfonlayer}{nodelayer}
		\node [style=none] (0) at (-1.75, 3.5) {John};
		\node [style=none] (1) at (0, 3.5) {kicks};
		\node [style=none] (2) at (1.75, 3.5) {cats};
		\node [style=none] (3) at (-1.75, 3) {$n$};
		\node [style=none] (4) at (0, 3) {$s$};
		\node [style=none] (5) at (1.75, 3) {$n$};
		\node [style=none] (6) at (-0.5, 3) {$n^r$};
		\node [style=none] (7) at (0.5, 3) {$n^l$};
		\node [style=none] (8) at (-1.75, 2.75) {};
		\node [style=none] (9) at (-0.5, 2.75) {};
		\node [style=none] (10) at (0, 2.75) {};
		\node [style=none] (11) at (0.5, 2.75) {};
		\node [style=none] (12) at (1.75, 2.75) {};
		\node [style=none] (13) at (0, 2) {};
	\end{pgfonlayer}
	\begin{pgfonlayer}{edgelayer}
		\draw [thick, bend right=90] (8.center) to (9.center);
		\draw [thick, bend right=90] (11.center) to (12.center);
		\draw [thick] (10.center) to (13.center);
	\end{pgfonlayer}
\end{tikzpicture}
\caption{A transitive sentence in the graphical calculus}
\label{fig:reduction}
\end{figure}

\subsection{Compositional Distributional Models}
The symbolic account and distributional approaches are linked by the fact that they share the common structure of a compact closed category.
This compatibility allows the compositional rules of the grammar to be applied in the vector space model. In this way we can map
syntactically well-formed strings of words into one shared meaning space.

A \emph{compact closed category} is a monoidal category in which for each object $A$ there are left and right dual objects $A^l$ and $A^r$, and corresponding
unit and counit morphisms~$\eta^l:I \rightarrow A\otimes A^l$, $\eta^r:I \rightarrow A^r \otimes A$, $\epsilon^l:A^l\otimes A \rightarrow I$, $\epsilon^r:A\otimes A^r \rightarrow I$
such that the following \emph{snake equations} hold:
\[
(1_A\otimes \epsilon^l) \circ (\eta^l \otimes 1_A) = 1_A \qquad (\epsilon^r \otimes 1_{A}) \circ(1_{A} \otimes \eta^r) = 1_A
\]
\[
(\epsilon^l \otimes 1_{A^l}) \circ(1_{A^l} \otimes \eta^l) = 1_{A^l} \qquad (1_{A^r}\otimes \epsilon^r) \circ(\eta^r \otimes 1_{A^r}) = 1_{A^r}
\]

The underlying poset of a pregroup can be viewed as a compact closed category with the monoidal structure given by the pregroup monoid,
and~$\epsilon^l, \eta^l, \eta^r, \epsilon^r$ the unique morphisms witnessing the inequalities of~\eqref{eq:preg}.

Distributional vector space models live in the category $\FHilb$ of finite dimensional real Hilbert spaces and linear maps.
$\FHilb$ is compact closed. Each object $V$ is its own dual and the left and right unit and counit morphisms coincide.
Given a fixed basis $\{ \ket{v_i} \}_i$ of $V$, we define the unit:
\begin{align*}
  \eta : \mathbb{R} &\rightarrow V \otimes V :: 1 \mapsto \sum_i \ket{v_i} \otimes \ket{v_i}
\end{align*}
and counit:
\begin{align*}
  \epsilon: V\otimes V &\rightarrow \mathbb{R} :: \sum_{ij} c_{ij}\ket{v_i} \otimes \ket{v_j} \mapsto  \sum_{i} c_{ii}
\end{align*}
Here we use the physicists bra-ket notation, for details see \cite{Nielsen2010}.

\subsection{Graphical Calculus}
\label{sec:graphcalc}
The morphisms of compact closed categories can be expressed in a convenient graphical calculus~\cite{KellyLaplaza1980} which we
will exploit in the sequel.
Objects are labelled wires, and morphisms are given as vertices with input and output wires.
Composing morphisms consists of connecting input and output wires, and the tensor product is formed by juxtaposition, as
shown in figure~\ref{fig:mongraph}.

\begin{figure}[htbp]
\centering
\begin{tikzpicture}
	\begin{pgfonlayer}{nodelayer}
		\node [style=none] (0) at (-5, 1) {};
		\node [style=none] (1) at (-5, -1) {};
		\node [style=none] (2) at (-3, 1) {};
		\node [style=box] (3) at (-3, 0) {$f$};
		\node [style=none] (4) at (-3, -1) {};
		\node [style=none] (5) at (-0.75, 1) {};
		\node [style=none] (6) at (-0.75, -1) {};
		\node [style=box] (7) at (-0.75, 0) {$g$};
		\node [style=none] (8) at (1.5, -1.25) {};
		\node [style=box] (9) at (1.5, -0.5) {$g$};
		\node [style=none] (10) at (0.25, 1) {};
		\node [style=box] (11) at (0.25, 0) {$f$};
		\node [style=none] (12) at (0.25, -1) {};
		\node [style=none] (13) at (1.5, 1.25) {};
		\node [style=box] (14) at (1.5, 0.5) {$f$};
		\node [style=none] (15) at (-5.5, -1.75) {};
		\node [style=box] (16) at (-5.5, -2.5) {$f$};
		\node [style=none] (17) at (-5.5, -3.25) {};
		\node [style=none] (18) at (-4.5, -1.75) {};
		\node [style=box] (19) at (-4.5, -2.5) {$g$};
		\node [style=none] (20) at (-4.5, -3.25) {};
		\node [style=none] (21) at (-3, -1.75) {};
		\node [style=box] (22) at (-3, -2.5) {$f$};
		\node [style=none] (23) at (-3, -3.25) {};
		\node [style=none] (24) at (-2, -1.75) {};
		\node [style=box] (25) at (-2, -2.5) {$g$};
		\node [style=none] (26) at (-2, -3.25) {};
		\node [style=none] (27) at (-4.5, 0) {$A$};
		\node [style=none] (28) at (-2.75, 0.75) {$A$};
		\node [style=none] (29) at (-2.75, -0.75) {$B$};
		\node [style=none] (30) at (-0.5, 0.75) {$B$};
		\node [style=none] (31) at (-0.5, -0.75) {$C$};
		\node [style=none] (32) at (0.5, 0.75) {$A$};
		\node [style=none] (33) at (0.5, -0.75) {$B$};
		\node [style=none] (34) at (1.75, 1) {$A$};
		\node [style=none] (35) at (1.75, 0) {$B$};
		\node [style=none] (36) at (1.75, -1) {$C$};
		\node [style=none] (37) at (-5.25, -2) {$A$};
		\node [style=none] (38) at (-5.25, -3) {$B$};
		\node [style=none] (39) at (-4.25, -2) {$C$};
		\node [style=none] (40) at (-4.25, -3) {$D$};
		\node [style=none] (41) at (-2.75, -2) {$A$};
		\node [style=none] (42) at (-2.75, -3) {$B$};
		\node [style=none] (43) at (-1.75, -2) {$C$};
		\node [style=none] (44) at (-1.75, -3) {$D$};
		\node [style=none] (45) at (-5, -2.5) {$\otimes$};
		\node [style=none] (46) at (-0.25, 0) {$\circ$};
		\node [style=none] (47) at (1, 0) {$=$};
		\node [style=none] (48) at (-3.75, -2.5) {$=$};
		\node [style=none] (49) at (0, -2.25) {};
		\node [style=none] (50) at (0.5, -2.25) {};
		\node [style=none] (51) at (0.5, -2.75) {};
		\node [style=none] (52) at (-0.25, -2.75) {};
		\node [style=none] (53) at (0.25, -1.75) {};
		\node [style=none] (54) at (0.25, -3.25) {};
		\node [style=none] (55) at (0.25, -2.5) {$f^*$};
		\node [style=none] (56) at (0.25, -2.25) {};
		\node [style=none] (57) at (0.25, -2.75) {};
		\node [style=none] (58) at (1, -2.5) {$=$};
		\node [style=none] (59) at (0.5, -2) {$A$};
		\node [style=none] (60) at (0.5, -3) {$B$};
		\node [style=none] (61) at (1.75, -1.75) {};
		\node [style=none] (62) at (2, -2) {$A$};
		\node [style=none] (63) at (2, -3) {$B$};
		\node [style=none] (64) at (1.5, -2.75) {};
		\node [style=none] (65) at (2, -2.25) {};
		\node [style=none] (66) at (1.5, -2.25) {};
		\node [style=none] (67) at (2.25, -2.75) {};
		\node [style=none] (68) at (1.75, -2.25) {};
		\node [style=none] (69) at (1.75, -3.25) {};
		\node [style=none] (70) at (1.75, -2.75) {};
		\node [style=none] (71) at (1.75, -2.5) {$f$};
	\end{pgfonlayer}
	\begin{pgfonlayer}{edgelayer}
		\draw [thick, >->] (0.center) to (1.center);
		\draw [thick, >-] (2.center) to (3);
		\draw [thick, ->] (3) to (4.center);
		\draw [thick, >-] (5.center) to (7);
		\draw [thick, ->] (7) to (6.center);
		\draw [thick, >-] (10.center) to (11);
		\draw [thick, ->] (11) to (12.center);
		\draw [thick, >-] (13.center) to (14);
		\draw [thick] (14) to (9);
		\draw [thick, ->] (9) to (8.center);
		\draw [thick, >-] (15.center) to (16);
		\draw [thick, ->] (16) to (17.center);
		\draw [thick, >-] (18.center) to (19);
		\draw [thick, ->] (19) to (20.center);
		\draw [thick, >-] (21.center) to (22);
		\draw [thick, ->] (22) to (23.center);
		\draw [thick, >-] (24.center) to (25);
		\draw [thick, ->] (25) to (26.center);
		\draw [thick, >-] (53.center) to (56.center);
		\draw (50.center) to (49.center);
		\draw (49.center) to (52.center);
		\draw (52.center) to (51.center);
		\draw (51.center) to (50.center);
		\draw [thick, ->] (57.center) to (54.center);
		\draw [thick, >-] (61.center) to (68.center);
		\draw (66.center) to (65.center);
		\draw (65.center) to (67.center);
		\draw (67.center) to (64.center);
		\draw (64.center) to (66.center);
		\draw [thick, ->] (70.center) to (69.center);
	\end{pgfonlayer}
\end{tikzpicture}
\caption{Monoidal Graphical Calculus}
\label{fig:mongraph}
\end{figure}
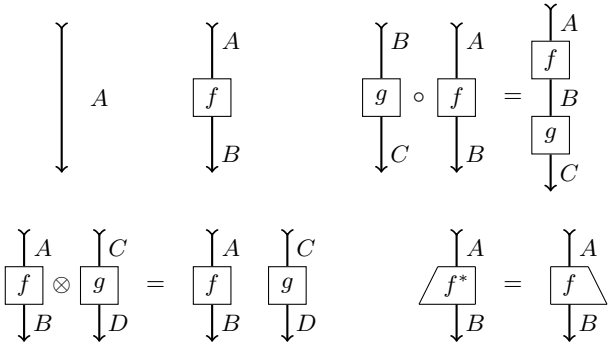
By convention the wire for the monoidal unit is omitted.
The morphisms $\epsilon$ and $\eta$ can then be represented by `cups' and `caps' as shown in figure~\ref{fig:comgraph}.
The snake equations can be seen as straightening wires, as shown in figure~\ref{fig:snake}.

\begin{figure}[htbp]
\centering
\begin{tikzpicture}
	\begin{pgfonlayer}{nodelayer}
		\node [style=none] (0) at (-2.5, 0.5) {$\epsilon^l$};
		\node [style=none] (1) at (-1, 0.5) {};
		\node [style=none] (2) at (2.5, 0.5) {$\epsilon^r$};
		\node [style=none] (3) at (2.5, -0.5) {$\eta^r$};
		\node [style=none] (4) at (-1, -0.5) {};
		\node [style=none] (5) at (1, -0.5) {};
		\node [style=none] (6) at (2, -0.5) {};
		\node [style=none] (7) at (1, 0.5) {};
		\node [style=none] (8) at (-2, 0.5) {};
		\node [style=none] (9) at (-2.5, -0.5) {$\eta^l$};
		\node [style=none] (10) at (2, 0.5) {};
		\node [style=none] (11) at (-2, -0.5) {};
	\end{pgfonlayer}
	\begin{pgfonlayer}{edgelayer}
		\draw [thick, >->, bend left=90, looseness=1.25] (1.center) to (8.center);
		\draw [thick, >->, bend right=90, looseness=1.25] (4.center) to (11.center);
		\draw [thick, >->, bend right=90, looseness=1.25] (7.center) to (10.center);
		\draw [thick, >->, bend left=90, looseness=1.25] (5.center) to (6.center);
	\end{pgfonlayer}
\end{tikzpicture}
\caption{Compact Structure Graphically}
\label{fig:comgraph}
\end{figure}
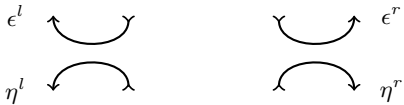

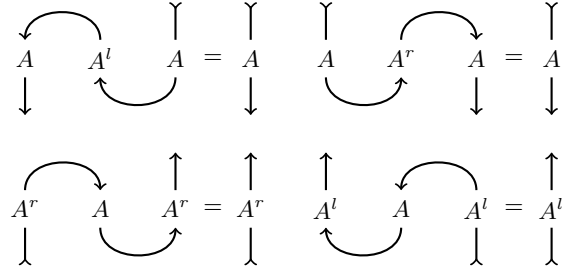
\begin{figure}[htbp]
\centering
\begin{tikzpicture}
	\begin{pgfonlayer}{nodelayer}
		\node [style=none] (0) at (-3.5, 1) {$A$};
		\node [style=none] (1) at (-2.5, 1) {$A^l$};
		\node [style=none] (2) at (-1.5, 1) {$A$};
		\node [style=none] (3) at (-1, 1) {$=$};
		\node [style=none] (4) at (-0.5, 1) {$A$};
		\node [style=none] (5) at (0.5, 1) {$A$};
		\node [style=none] (6) at (1.5, 1) {$A^r$};
		\node [style=none] (7) at (2.5, 1) {$A$};
		\node [style=none] (8) at (3, 1) {$=$};
		\node [style=none] (9) at (3.5, 1) {$A$};
		\node [style=none] (10) at (-3.5, 1.25) {};
		\node [style=none] (11) at (-3.5, 0.75) {};
		\node [style=none] (12) at (-3.5, 0.25) {};
		\node [style=none] (13) at (-2.5, 1.25) {};
		\node [style=none] (14) at (-2.5, 0.75) {};
		\node [style=none] (15) at (-1.5, 0.75) {};
		\node [style=none] (16) at (-1.5, 1.25) {};
		\node [style=none] (17) at (-1.5, 1.75) {};
		\node [style=none] (18) at (-0.5, 1.75) {};
		\node [style=none] (19) at (-0.5, 0.25) {};
		\node [style=none] (20) at (0.5, 1.75) {};
		\node [style=none] (21) at (0.5, 1.25) {};
		\node [style=none] (22) at (0.5, 0.75) {};
		\node [style=none] (23) at (1.5, 0.75) {};
		\node [style=none] (24) at (1.5, 1.25) {};
		\node [style=none] (25) at (2.5, 1.25) {};
		\node [style=none] (26) at (2.5, 0.75) {};
		\node [style=none] (27) at (2.5, 0.25) {};
		\node [style=none] (28) at (3.5, 1.75) {};
		\node [style=none] (29) at (3.5, 0.25) {};
		\node [style=none] (30) at (-0.5, 1.25) {};
		\node [style=none] (31) at (-0.5, 0.75) {};
		\node [style=none] (32) at (3.5, 1.25) {};
		\node [style=none] (33) at (3.5, 0.75) {};
		\node [style=none] (34) at (-0.5, -1.25) {};
		\node [style=none] (35) at (-0.5, -1.75) {};
		\node [style=none] (36) at (1.5, -1.25) {};
		\node [style=none] (37) at (3.5, -1.25) {};
		\node [style=none] (38) at (3.5, -1) {$A^l$};
		\node [style=none] (39) at (2.5, -1) {$A^l$};
		\node [style=none] (40) at (-2.5, -0.75) {};
		\node [style=none] (41) at (0.5, -0.75) {};
		\node [style=none] (42) at (1.5, -0.75) {};
		\node [style=none] (43) at (2.5, -1.75) {};
		\node [style=none] (44) at (-3.5, -1.75) {};
		\node [style=none] (45) at (3.5, -0.25) {};
		\node [style=none] (46) at (-3.5, -0.75) {};
		\node [style=none] (47) at (0.5, -1) {$A^l$};
		\node [style=none] (48) at (-0.5, -0.25) {};
		\node [style=none] (49) at (-1.5, -1) {$A^r$};
		\node [style=none] (50) at (0.5, -0.25) {};
		\node [style=none] (51) at (-3.5, -1.25) {};
		\node [style=none] (52) at (2.5, -1.25) {};
		\node [style=none] (53) at (-1, -1) {$=$};
		\node [style=none] (54) at (1.5, -1) {$A$};
		\node [style=none] (55) at (3.5, -0.75) {};
		\node [style=none] (56) at (3, -1) {$=$};
		\node [style=none] (57) at (-3.5, -1) {$A^r$};
		\node [style=none] (58) at (-1.5, -0.25) {};
		\node [style=none] (59) at (3.5, -1.75) {};
		\node [style=none] (60) at (0.5, -1.25) {};
		\node [style=none] (61) at (-1.5, -1.25) {};
		\node [style=none] (62) at (-0.5, -1) {$A^r$};
		\node [style=none] (63) at (-0.5, -0.75) {};
		\node [style=none] (64) at (2.5, -0.75) {};
		\node [style=none] (65) at (-2.5, -1.25) {};
		\node [style=none] (66) at (-1.5, -0.75) {};
		\node [style=none] (67) at (-2.5, -1) {$A$};
	\end{pgfonlayer}
	\begin{pgfonlayer}{edgelayer}
		\draw [thick, >-] (17.center) to (16.center);
		\draw [thick, ->, bend left=90, looseness=1.25] (15.center) to (14.center);
		\draw [thick, ->, bend right=90, looseness=1.25] (13.center) to (10.center);
		\draw [thick, ->] (11.center) to (12.center);
		\draw [thick, >-] (18.center) to (30.center);
		\draw [thick, ->] (31.center) to (19.center);
		\draw [thick, >-] (20.center) to (21.center);
		\draw [thick, ->, bend right=90, looseness=1.25] (22.center) to (23.center);
		\draw [thick, ->, bend left=90, looseness=1.25] (24.center) to (25.center);
		\draw [thick, ->] (26.center) to (27.center);
		\draw [thick, >-] (28.center) to (32.center);
		\draw [thick, ->] (33.center) to (29.center);
		\draw [thick, <-] (58.center) to (66.center);
		\draw [thick, <-, bend left=90, looseness=1.25] (61.center) to (65.center);
		\draw [thick, <-, bend right=90, looseness=1.25] (40.center) to (46.center);
		\draw [thick, -<] (51.center) to (44.center);
		\draw [thick, <-] (48.center) to (63.center);
		\draw [thick, -<] (34.center) to (35.center);
		\draw [thick, <-] (50.center) to (41.center);
		\draw [thick, <-, bend right=90, looseness=1.25] (60.center) to (36.center);
		\draw [thick, <-, bend left=90, looseness=1.25] (42.center) to (64.center);
		\draw [thick, -<] (52.center) to (43.center);
		\draw [thick, <-] (45.center) to (55.center);
		\draw [thick, -<] (37.center) to (59.center);
	\end{pgfonlayer}
\end{tikzpicture}
\caption{The Snake Equations}
\label{fig:snake}
\end{figure}

\subsection{Grammatical Reductions in Vector Spaces}
Following \cite{preller2011}, reductions of the pregroup grammar  may be mapped into the category $\FHilb$ of finite dimensional Hilbert spaces and linear maps
using an appropriate strong monoidal functor $\hilbsem$:
\[
\hilbsem: \mathbf{Preg} \rightarrow \FHilb
\]
Strong monoidal functors automatically preserve the compact closed structure.
For our example~$\freepreg{\{n,s\}}$, we must map the noun and sentence types to appropriate finite dimensional vector spaces:
\[
\hilbsem(n) = N \qquad \hilbsem(s) = S
\]
Composite types are then constructed functorially using the corresponding structure in $\FHilb$.
Each morphism $\alpha$ in the pregroup is mapped to a linear map interpreting sentences of that grammatical type. Then, given word vectors $\ket{w_i}$ with types $p_i$, and  a type reduction $\alpha: p_1, p_2, ... p_n \rightarrow s$, the meaning of the sentence $w_1 w_2 ... w_n$ is given by:
\[
\ket {w_1 w_2 ... w_n} = \hilbsem(\alpha)(\ket{w_1} \otimes \ket{w_2} \otimes ... \otimes \ket{w_n})
\]
For example, as described in section \ref{sec:PregroupGrammars},
transitive verbs have type $n^r s n^l$,  and can therefore represented in $\FHilb$ as a rank 3 space $N \otimes S \otimes N$.
The transitive sentence \lang{John kicks cats} has type $n (n^r s n^l) n$, which reduces to the sentence type via $\epsilon^r \otimes 1_s \otimes \epsilon^l$.
So if we represent $\ket{\lang{kicks}}$ by:  
\[
\ket{\lang{kicks}} = \sum_{ijk} c_{ijk} \ket{e_i} \otimes \ket{s_j} \otimes \ket{e_k}
\]
using the definitions of the counits in $\FHilb$ we then we have:
\begin{align*}
\ket{\lang{John kicks cats}} &= \epsilon_N \otimes 1_S \otimes \epsilon_N(\ket{\lang{John}} \otimes \ket{\lang{kicks}} \otimes \ket{\lang{cats}})\\
& = \sum_{ijk} c_{ijk} \braket{\lang{John} | e_i} \otimes \ket{s_j} \otimes \braket{e_k | \lang{cats}}\\
& = \sum_j\sum_{ik} c_{ijk} \braket{\lang{John} | e_i} \braket{ e_k| \lang{cats}} \ket{s_j}
\end{align*}
The category $\FHilb$ is actually a $\dagger$-compact~closed~category. A $\dagger$-compact~closed~category is a compact~closed~category with an additional \emph{dagger functor}
that is an identity on objects involution, satisfying natural coherence conditions. In the graphical calculus, the dagger operation ``flips diagrams upside-down''.
In the case of~$\FHilb$ the dagger sends a linear map to its adjoint, and this allows us to reason about inner products in
a general categorical setting.

Meanings of sentences may be compared using the inner product to calculate the cosine distance between vector representations.
So, if sentence~$s$ has vector representation~$\ket{s}$
and sentence~$s'$ has representation~$\ket{s'}$, their degree of synonymy is given by: 
\begin{equation*}
  \frac{\braket{s | s'}}
       {\sqrt{\braket{s | s}\braket{s' | s'}}}
\end{equation*}
The abstract categorical framework we have introduced allows meanings to be interpreted not just in~$\FHilb$, but in any $\dagger$-compact closed category.
We will exploit this freedom when we move to density matrices.
Detailed presentations of the ideas in this section are given in~\citep{coecke2010, preller2011} and an introduction to relevant category theory given in~\citep{coecke2011}.

\section{Density Matrices in Categorical Compositional Distributional Semantics}
\label{sec:dmds}

\subsection{Positive Operators and Density Matrices}
The methods outlined in section \ref{sec:DisCo} can be applied to the richer setting of density~matrices.
Density~matrices are used in quantum~mechanics to express uncertainty about the state of a system.
For unit~vector $\ket{v}$, the projection operator~$\ket{v}\bra{v}$ onto the subspace spanned by~$\ket{v}$ is called a \emph{pure~state}.
Pure states can be thought of as giving sharp, unambiguous information.
In general, density~matrices are given by a convex sum of pure~states, describing a probabilistic mixture.
States that are not pure are referred to as \emph{mixed~states}.
Necessary and sufficient conditions for an operator~$\rho$ to encode such a probabilistic mixture are:
\begin{itemize}
\item $\forall{v} \in V. \braket{v |\rho | v} \geq 0$
\item $\rho$ is self-adjoint.
\item $\rho$ has trace 1.
\end{itemize}
Operators satisfying the first two axioms are called \emph{positive operators}.
The third axiom ensures that the operator represents a convex mixture of pure states.
However, relaxing this condition gives us different choices for normalization, which we will outline in section~\ref{sec:norm}.

In distributional models of meaning, we can consider the meaning of a word $w$ to be given by a collection of unit vectors~$\{\ket{w_i}\}_i$,
where each~$\ket{w_i}$ represents an instance of the concept expressed by the word.
Each~$\ket{w_i}$ is weighted by~$p_i \in [0,1]$, such that $\sum_i p_i = 1$. These weights describe the meaning of~$w$ as a weighted combination of exemplars.
Then the density~operator:
\[
\semantics{w} = \sum_i p_i \ket{w_i}\bra{w_i}
\]
represents the word~$w$. For example a cat is a fairly typical pet, and a tarantula is less typical, so a simple density operator for the word~\lang{pet} might be:
\begin{equation*}
  \semantics{\lang{pet}} =  0.9 \times \ket{\lang{cat}}\bra{\lang{cat}} + 0.1 \times \ket{\lang{tarantula}}\bra{\lang{tarantula}}
\end{equation*}

\subsection{The CPM Construction}
Applying Selinger's CPM construction~\cite{selinger} to~$\FHilb$ produces a new $\dagger$-compact closed category in which the states are positive operators.
This construction has previously been exploited in a linguistic setting in~\cite{Kartsaklis2015, piedeleu2015, balkir2015}.

Throughout this section~$\mathcal{C}$ denotes an arbitrary $\dag$-compact closed category.
\begin{dfn}[Completely positive morphism]
  A $\mathcal{C}$-morphism~$\varphi: A^* \otimes A \rightarrow B^* \otimes B$ is said to be completely positive~\cite{selinger} if there exists~$C \in \mathsf{Ob}(\mathcal{C})$
  and~$k \in \mathcal{C}(C\otimes A, B)$, such that~$\varphi$ can be written in the form:
\[
 (k_* \otimes k) \circ (1_{A^*} \otimes \eta_C \otimes 1_A)
\] 
\end{dfn}
Identity morphisms are completely positive, and completely positive morphisms are closed under composition in~$\mathcal{C}$, leading to the following:
\begin{dfn}
  If~$\mathcal{C}$ is a $\dag$-compact closed category then~$\CPMC$ is a category with the same objects as~$\mathcal{C}$ and its morphisms are the completely positive morphisms.
\end{dfn}
The $\dagger$-compact structure required for interpreting language in our setting lifts to~$\CPM{\mathcal{C}}$:
\begin{thm}
  $\CPM{\mathcal{C}}$ is also a $\dagger$-compact closed category.
  There is a functor:
  \begin{align*}
    \cpmpure : \mathcal{C} &\rightarrow \CPM{\mathcal{C}}\\
    k &\mapsto k_* \otimes  k
  \end{align*}
  This functor preserves the $\dagger$-compact closed structure, and is faithful ``up to a global phase''~\cite{selinger}.   
\end{thm}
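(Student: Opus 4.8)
The plan is to follow Selinger's construction, working throughout in the graphical calculus for $\dagger$-compact closed categories, where the defining form of a completely positive morphism is visually a box $k$ placed alongside its conjugate $k_*$, joined by a cap on the auxiliary object $C$. First I would fix the candidate structure on $\CPMC$: objects are those of $\mathcal{C}$; the monoidal product on objects and the monoidal unit are inherited; and the tensor of completely positive morphisms $\varphi : A^* \ot A \to B^* \ot B$ and $\psi : C^* \ot C \to D^* \ot D$ is obtained from $\varphi \ot \psi$ in $\mathcal{C}$ by pre- and post-composing with the symmetry isomorphisms that regroup the four factors into $(B \ot D)^* \ot (B \ot D)$. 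Using the Kraus form one checks this is again completely positive, with Kraus morphism $k \ot k'$ up to symmetry. The dual of $A$ is its dual $A^*$ in $\mathcal{C}$, and the cup and cap of $\CPMC$ are declared to be the doubled morphisms $(\eta_A)_* \ot \eta_A$ and $(\epsilon_A)_* \ot \epsilon_A$ (again up to symmetry), which are completely positive by inspection of the Kraus form. The dagger of a morphism presented with Kraus $k$ is the morphism presented with Kraus $k^\dagger$; the one genuine check here is that this is independent of the chosen Kraus morphism, which follows because any two Kraus presentations of the same completely positive map are related by an isometry on the auxiliary object, and $k^\dagger$ transforms accordingly.

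Then I would verify the axioms. The snake equations for $\CPMC$ follow from the snake equations in $\mathcal{C}$ applied independently on the ``ket'' wire and the conjugate ``bra'' wire — straightening a doubled snake is just straightening two ordinary snakes. The dagger is an identity-on-objects contravariant involution because $\dagger$ is so in $\mathcal{C}$ and $k^{\dagger\dagger} = k$; compatibility with the compact structure (that the $\CPMC$-cap is the $\dagger$ of the $\CPMC$-cup, up to symmetry) and the coherence of the symmetry isomorphisms used to define $\ot$ are inherited from the corresponding facts in $\mathcal{C}$. Closure of completely positive morphisms under composition — needed for $\CPMC$ to be a category at all — is already recorded in the text preceding the statement; the only addition is that every structure morphism introduced above is completely positive, which is immediate from its explicit doubled form.

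For the functor $\cpmpure : \mathcal{C} \to \CPMC$, $k \mapsto k_* \ot k$, functoriality is the interchange law: $(g \circ f)_* \ot (g \circ f) = (g_* \ot g) \circ (f_* \ot f)$, using that conjugation is a functor, together with $\cpmpure(1_A) = (1_A)_* \ot 1_A = 1_{A^*} \ot 1_A$, which is the identity on $A$ in $\CPMC$. Preservation of the monoidal product is a matter of reshuffling factors by the symmetry, matching the definition of $\ot$ on $\CPMC$; preservation of the unit and counit holds by the very definition of the compact structure of $\CPMC$; and preservation of the dagger uses that in a $\dagger$-compact closed category conjugation and dagger commute, so $(k^\dagger)_* \ot k^\dagger = (k_* \ot k)^\dagger$. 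Strong monoidality together with this data yields that $\cpmpure$ preserves the $\dagger$-compact closed structure on the nose, up to the canonical coherence isomorphisms.

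The main obstacle is faithfulness ``up to a global phase''. Here a global phase is a scalar $\lambda : I \to I$ in $\mathcal{C}$ with $\lambda^\dagger \circ \lambda = 1_I$, and the claim is that $\cpmpure(k) = \cpmpure(k')$, i.e. $k_* \ot k = k'_* \ot k'$, forces $k = \lambda \cdot k'$ for some such $\lambda$. I would first transpose $k$ and $k'$ via compact closure to states $I \to A^* \ot B$, turning the hypothesis into an equality of the associated ``pure'' states of the doubled object; then one argues, exactly as in the Hilbert space case, that feeding arbitrary effects into one tensor factor and a fixed effect into the other extracts a scalar $\lambda$ with $k = \lambda \cdot k'$, and re-substituting this into the original equality pins $\lambda$ down to $\lambda^\dagger \circ \lambda = 1_I$. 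Transposing back gives the result. This is the one step that genuinely uses structural properties of $\dagger$-compact closed categories — the interplay of daggers, conjugates and scalars — rather than bare diagram rewriting, and it is precisely the content drawn from~\cite{selinger}.
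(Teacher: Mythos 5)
The paper does not actually prove this theorem: it is imported wholesale from Selinger's work, so the only fair comparison is against the construction in~\cite{selinger}. Your reconstruction follows that standard route (inherit objects and tensor, double the cups and caps, check closure of complete positivity under every structural operation, verify functoriality of $\cpmpure$ by interchange and preservation of structure by definition), and most of it is sound, including the correct observation that the snake equations just hold ``twice over'' on the doubled wires.

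There is one genuine flaw: your treatment of the dagger. A morphism of $\CPMC$ is not an equivalence class of Kraus presentations --- it is literally a $\mathcal{C}$-morphism $A^* \ot A \to B^* \ot B$ that happens to admit at least one Kraus form. Consequently the dagger on $\CPMC$ is simply the dagger of $\mathcal{C}$ restricted to these hom-sets, and there is no well-definedness obligation at all; the only thing to check is \emph{closure}, i.e.\ that $\varphi^\dagger$ again admits a Kraus form. Your proposed discharge of the (spurious) well-definedness obligation invokes the claim that any two Kraus presentations of the same completely positive map are related by an isometry on the auxiliary object. That is the unitary-freedom theorem for Stinespring/Kraus representations, a fact about Hilbert spaces; it is not available in an arbitrary $\dagger$-compact closed category and should not be used here. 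Note also a type mismatch in your candidate definition: if $k : C \ot A \to B$ then $k^\dagger : B \to C \ot A$ is not of Kraus type $D \ot B \to A$; one must bend the auxiliary $C$-wire around using compact closure to produce the new Kraus morphism, which is how the closure proof actually goes. Two smaller caveats: $(A \ot B)^*$ is $B^* \ot A^*$, so the regrouping isomorphisms in your definition of the tensor need slightly more care than ``the symmetry''; and your faithfulness argument divides by scalars, which is legitimate in $\FHilb$ but not in a general $\dagger$-compact closed category, where the scalars form only a commutative monoid --- this is precisely why the paper hedges with scare quotes and defers to~\cite{selinger}.
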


\subsection{Diagrammatic calculus for $\CPMC$}
As $\CPMC$ is also a $\dagger$-compact closed category, we can use the graphical calculus described in section \ref{sec:graphcalc}.
By convention, the diagrammatic calculus for~$\CPMC$ is drawn using thick wires.
The corresponding diagrams in~$\mathcal{C}$ are given in table~\ref{tab:dc}.

\begin{table}
\centering
\caption{Table of diagrams in $\CPMC$ and $\mathcal{C}$}
\label{tab:dc} 
\begin{tabular}{ c c}
  $\CPMC$ & $\mathcal{C}$\\
  \hline
  $E(\ep) = \ep_* \otimes \ep$ & $\ep : A^* \otimes A^* \otimes A \otimes A \rightarrow I$ \\
  \begin{tikzpicture}[scale=0.8]
	\begin{pgfonlayer}{nodelayer}
		\node [style=none] (0) at (-1, 0.25) {$A^*$};
		\node [style=none] (1) at (1, 0.25) {$A$};
		\node [style=none] (2) at (1, 0) {};
		\node [style=none] (3) at (-1, 0) {};
	\end{pgfonlayer}
	\begin{pgfonlayer}{edgelayer}
		\draw [ultra thick, bend left=90] (2.center) to (3.center);
	\end{pgfonlayer}
\end{tikzpicture}&\begin{tikzpicture}[scale=0.8]
	\begin{pgfonlayer}{nodelayer}
		\node [style=none] (0) at (-1, 0.25) {$A^*$};
		\node [style=none] (1) at (0, 0.25) {$A^*$};
		\node [style=none] (2) at (1, 0.25) {$A$};
		\node [style=none] (3) at (2, 0.25) {$A$};
		\node [style=none] (4) at (-1, 0) {};
		\node [style=none] (5) at (0, 0) {};
		\node [style=none] (6) at (1, 0) {};
		\node [style=none] (7) at (2, 0) {};
	\end{pgfonlayer}
	\begin{pgfonlayer}{edgelayer}
		\draw [thick, bend left=90] (6.center) to (4.center);
		\draw [thick, bend left=90] (7.center) to (5.center);
	\end{pgfonlayer}
\end{tikzpicture}\\
  \multicolumn{2}{c}{$\ep: \ket{e_i} \otimes \ket{e_j} \otimes \ket{e_k} \otimes \ket{e_l} \mapsto \braket{e_i | e_k}\braket{e_j| e_l}$} \\
&\\
 $E(\eta) = \eta_* \otimes \eta$ &  $\eta : I \rightarrow A \otimes A\otimes A^* \otimes A^* $\\
\begin{tikzpicture}[scale=0.8]
	\begin{pgfonlayer}{nodelayer}
		\node [style=none] (0) at (-1, -0.25) {$A^*$};
		\node [style=none] (1) at (1, -0.25) {$A$};
		\node [style=none] (2) at (1, 0) {};
		\node [style=none] (3) at (-1, 0) {};
	\end{pgfonlayer}
	\begin{pgfonlayer}{edgelayer}
		\draw [ultra thick, bend right=90, looseness=1.25] (2.center) to (3.center);
	\end{pgfonlayer}
\end{tikzpicture}&\begin{tikzpicture}[scale=0.8]
	\begin{pgfonlayer}{nodelayer}
		\node [style=none] (0) at (-1, -0.25) {$A^*$};
		\node [style=none] (1) at (0, -0.25) {$A^*$};
		\node [style=none] (2) at (1, -0.25) {$A$};
		\node [style=none] (3) at (2, -0.25) {$A$};
		\node [style=none] (4) at (-1, 0) {};
		\node [style=none] (5) at (0, 0) {};
		\node [style=none] (6) at (1, 0) {};
		\node [style=none] (7) at (2, 0) {};
	\end{pgfonlayer}
	\begin{pgfonlayer}{edgelayer}
		\draw [thick, bend right=90, looseness=1.25] (6.center) to (4.center);
		\draw [thick, bend right=90, looseness=1.25] (7.center) to (5.center);
	\end{pgfonlayer}
\end{tikzpicture}\\
\multicolumn{2}{c}{$\eta: 1 \mapsto \sum_{ij}\ket{e_i} \otimes \ket{e_j} \otimes \ket{e_i} \otimes \ket{e_j} $}\\
&\\
\begin{tikzpicture}[scale=0.8]
	\begin{pgfonlayer}{nodelayer}
		\node [style=box, ultra thick] (0) at (0.5, 0) {$f_2$};
		\node [style=none] (1) at (0.5, 1) {};
		\node [style=none] (2) at (-0.5, 1) {};
		\node [style=box, ultra thick] (3) at (-0.5, 0) {$f_1$};
		\node [style=none] (4) at (0.5, -1) {};
		\node [style=none] (5) at (-0.5, -1) {};
		\node [style=none] (6) at (-0.5, -1.25) {$A$};
		\node [style=none] (7) at (0.5, -1.25) {$C$};
		\node [style=none] (8) at (-0.5, 1.25) {$B$};
		\node [style=none] (9) at (0.5, 1.25) {$D$};
	\end{pgfonlayer}
	\begin{pgfonlayer}{edgelayer}
		\draw [ultra thick] (5.center) to (3);
		\draw [ultra thick] (4.center) to (0);
		\draw [ultra thick] (3) to (2.center);
		\draw [ultra thick] (0) to (1.center);
	\end{pgfonlayer}
\end{tikzpicture} &\begin{tikzpicture}[scale=0.8]
	\begin{pgfonlayer}{nodelayer}
		\node [style=none] (0) at (-1.25, 1) {};
		\node [style=none] (1) at (-0.75, -0.25) {};
		\node [style=none] (2) at (-0.75, -1) {};
		\node [style=none] (3) at (0.75, 0) {$f_2$};
		\node [style=none] (4) at (0.5, -1) {};
		\node [style=none] (5) at (1, -0.25) {};
		\node [style=none] (6) at (1, 1) {};
		\node [style=none] (7) at (0.25, -0.25) {};
		\node [style=none] (8) at (-1, 0) {$f_1$};
		\node [style=none] (9) at (-0.75, 0.25) {};
		\node [style=none] (10) at (-1.5, -0.25) {};
		\node [style=none] (11) at (1, 0.25) {};
		\node [style=none] (12) at (-1.25, 0.25) {};
		\node [style=none] (13) at (1.25, 0.25) {};
		\node [style=none] (14) at (1, -1) {};
		\node [style=none] (15) at (0.25, 0.25) {};
		\node [style=none] (16) at (-0.75, 1) {};
		\node [style=none] (17) at (-1.25, -1) {};
		\node [style=none] (18) at (0.5, 1) {};
		\node [style=none] (19) at (-0.5, 0.25) {};
		\node [style=none] (20) at (0.5, 0.25) {};
		\node [style=none] (21) at (-0.5, -0.25) {};
		\node [style=none] (22) at (1.25, -0.25) {};
		\node [style=none] (23) at (-1.5, 0.25) {};
		\node [style=none] (24) at (0.5, -0.25) {};
		\node [style=none] (25) at (-1.25, -0.25) {};
		\node [style=none] (26) at (-1.25, -1.25) {$A^*$};
		\node [style=none] (27) at (-0.75, -1.25) {$C^*$};
		\node [style=none] (28) at (0.5, -1.25) {$C$};
		\node [style=none] (29) at (1, -1.25) {$A$};
		\node [style=none] (30) at (-1.25, 1.25) {$B^*$};
		\node [style=none] (31) at (-0.75, 1.25) {$D^*$};
		\node [style=none] (32) at (0.5, 1.25) {$D$};
		\node [style=none] (33) at (1, 1.25) {$B$};
	\end{pgfonlayer}
	\begin{pgfonlayer}{edgelayer}
		\draw [thick] (10.center) to (23.center);
		\draw [thick] (23.center) to (19.center);
		\draw [thick] (19.center) to (21.center);
		\draw [thick] (21.center) to (10.center);
		\draw [thick] (7.center) to (15.center);
		\draw [thick] (15.center) to (13.center);
		\draw [thick] (13.center) to (22.center);
		\draw [thick] (22.center) to (7.center);
		\draw [thick] (17.center) to (25.center);
		\draw [thick, in=-90, out=91, looseness=0.50] (2.center) to (24.center);
		\draw [thick, in=90, out=-90, looseness=0.50] (1.center) to (14.center);
		\draw [thick, in=-90, out=90] (4.center) to (5.center);
		\draw [thick] (12.center) to (0.center);
		\draw [thick, in=-90, out=90, looseness=0.50] (9.center) to (6.center);
		\draw [thick, in=-90, out=90] (11.center) to (18.center);
		\draw [thick, in=-90, out=90, looseness=0.50] (20.center) to (16.center);
	\end{pgfonlayer}
\end{tikzpicture}\\
\multicolumn{2}{c}{$f_1 \ot f_2:A^* \ot C^*\ot C\ot A \rightarrow B^* \ot D^* \ot D \ot B $}\\
  \hline
\end{tabular}
\end{table}

\subsubsection{Sentence Meaning in the category $\CPM{\FHilb}$}
In the vector space model of distributional models of meaning
the transition between syntax and semantics was achieved via a strong monoidal functor~$\hilbsem : \mathbf{Preg} \rightarrow \FHilb$.
Language can be assigned semantics in~$\CPM{\FHilb}$ in an entirely analogous way via a strong monoidal functor:
\begin{equation*}
  \cpmsem: \mathbf{Preg} \rightarrow \CPM{\FHilb}
\end{equation*}
\begin{dfn}
  Let~$w_1,w_2... w_n$ be a string of words with corresponding grammatical types~$t_i$ in~$\mathbf{Preg}_\mathcal{B}$.
  Suppose that the type reduction is given by~$t_1,...t_n \xrightarrow{r} x$ for some~$x \in \mathsf{Ob}(\mathbf{Preg}_\mathcal{B}$.
  Let~$\semantics{w_i}$ be the meaning of word~$w_i$ in~$\CPM{\FHilb}$, i.e. a state of the form $I\rightarrow \cpmsem(t_i)$. Then the meaning of~$w_1 w_2 ... w_n$ is given by:
\[
\semantics{w_1 w_2 ... w_n} = \cpmsem(r)(\semantics{w_1} \otimes ... \otimes \semantics{w_n})
\]
\end{dfn}
We now have all the ingredients to derive sentence meanings in~$\CPM{\FHilb}$.
\begin{exa}\em
We firstly show that the results from $\FHilb$ lift to $\CPM{\FHilb}$.
  Let the noun space~$N$ be a real Hilbert space with basis vectors given by~$\{\ket{n_i}\}_i$, where for some  $i$, $\ket{n_i} = \ket{\lang{Clara}}$ and for some~$j$,
  $\ket{n_j} = \ket{\lang{beer}}$.
  Let the sentence space be another space~$S$ with basis $\{\ket{s_i}\}_i$. The verb $\ket{\lang{likes}}$ is given by:
\[
\ket{\lang{likes}} = \sum_{pqr} C_{pqr} \ket{n_p} \ot \ket{s_q} \ot \ket{n_r}
\]
The density matrices for the nouns \lang{Clara} and \lang{beer} are in fact pure states given by:
\[
\semantics{\lang{Clara}} = \ket{n_i}\bra{n_i} \qquad \text{and} \qquad \semantics{\lang{beer}} = \ket{n_j}\bra{n_j}
\]
and similarly, $\semantics{\lang{likes}}$ in $\CPM{\FHilb}$ is:
\[
\semantics{\lang{likes}} =  \sum_{pqrtuv} C_{pqr}C_{tuv} \ket{n_p}\bra{n_t} \otimes \ket{s_q}\bra{s_u} \otimes \ket{n_r}\bra{n_v}
\]
The meaning of the composite sentence is simply~$(\ep_N \otimes 1_S \otimes \ep_N)$
applied to~$(\semantics{\lang{Clara}} \otimes \semantics{\lang{likes}} \otimes \semantics{\lang{beer}})$ as shown in figure~\ref{fig:ts},
with interpretation in~$\FHilb$ shown in figure~\ref{fig:pure_tscpmc}.
\begin{figure}[htbp]
\centering
\begin{tikzpicture}[scale=0.8]
	\begin{pgfonlayer}{nodelayer}
		\node [style=none] (0) at (-2, 2) {Clara};
		\node [style=none] (1) at (0, 2) {likes};
		\node [style=none] (2) at (2, 2) {beer};
		\node [style=blank] (3) at (0, 0.5) {$S$};
		\node [style=blank] (4) at (2, 0.5) {$N$};
		\node [style=none] (5) at (-2.5, 1) {};
		\node [style=none] (6) at (-1.5, 1) {};
		\node [style=none] (7) at (-2, 1.5) {};
		\node [style=none] (8) at (-0.75, 1) {};
		\node [style=none] (9) at (0.75, 1) {};
		\node [style=none] (10) at (0, 1.5) {};
		\node [style=blank] (11) at (-0.5, 0.5) {$N$};
		\node [style=blank] (12) at (0.5, 0.5) {$N$};
		\node [style=none] (13) at (1.5, 1) {};
		\node [style=none] (14) at (2.5, 1) {};
		\node [style=none] (15) at (2, 1.5) {};
		\node [style=none] (16) at (0, -0.5) {};
		\node [style=blank] (17) at (-2, 0.5) {$N$};
		\node [style=none] (18) at (-2, 1) {};
		\node [style=none] (19) at (0, 1) {};
		\node [style=none] (20) at (2, 1) {};
		\node [style=none] (21) at (0.5, 1) {};
		\node [style=none] (22) at (-0.5, 1) {};
	\end{pgfonlayer}
	\begin{pgfonlayer}{edgelayer}
		\draw [ultra thick] (7.center) to (5.center);
		\draw [ultra thick] (5.center) to (6.center);
		\draw [ultra thick] (6.center) to (7.center);
		\draw [ultra thick] (8.center) to (10.center);
		\draw [ultra thick] (8.center) to (9.center);
		\draw [ultra thick] (9.center) to (10.center);
		\draw [ultra thick] (15.center) to (13.center);
		\draw [ultra thick] (13.center) to (14.center);
		\draw [ultra thick] (14.center) to (15.center);
		\draw [ultra thick, bend right=90, looseness=1.25] (17) to (11);
		\draw [ultra thick] (3) to (16.center);
		\draw [ultra thick, bend left=90, looseness=1.25] (4) to (12);
		\draw [ultra thick] (11) to (22.center);
		\draw [ultra thick] (3) to (19.center);
		\draw [ultra thick] (12) to (21.center);
		\draw [ultra thick] (4) to (20.center);
		\draw [ultra thick] (17) to (18.center);
	\end{pgfonlayer}
\end{tikzpicture}
\caption{A transitive sentence in $\CPMC$}
\label{fig:ts}
\end{figure}
\begin{figure}[htbp]
\centering
\begin{tikzpicture}[scale=0.8]
	\begin{pgfonlayer}{nodelayer}
		\node [style=none] (0) at (0.25, -0.75) {};
		\node [style=none] (1) at (0.75, 0) {};
		\node [style=none] (2) at (1.25, -0.75) {};
		\node [style=none] (3) at (3.25, -0.75) {};
		\node [style=none] (4) at (3.75, 0) {};
		\node [style=none] (5) at (3, 0) {};
		\node [style=none] (6) at (3, 0.5) {};
		\node [style=none] (7) at (3.5, 0.5) {};
		\node [style=none] (8) at (3.25, 0.5) {};
		\node [style=none] (9) at (2.75, 0.5) {};
		\node [style=none] (10) at (2, 0) {};
		\node [style=none] (11) at (2.25, 0.5) {};
		\node [style=none] (12) at (2.5, 0.5) {};
		\node [style=none] (13) at (2.5, -0.75) {};
		\node [style=none] (14) at (2.75, 0) {};
		\node [style=none] (15) at (1.75, 0) {};
		\node [style=none] (16) at (1.5, 0.5) {};
		\node [style=none] (17) at (0, 0.5) {};
		\node [style=none] (18) at (0, 0) {};
		\node [style=blank] (19) at (0.25, -0.5) {$N$};
		\node [style=blank] (20) at (0.75, -0.5) {$S$};
		\node [style=blank] (21) at (1.25, -0.5) {$N'$};
		\node [style=blank] (22) at (2.5, -0.5) {$N'$};
		\node [style=blank] (23) at (3.25, -0.5) {$N'$};
		\node [style=none] (24) at (0.25, 0.5) {};
		\node [style=none] (25) at (0.75, 0.5) {};
		\node [style=none] (26) at (1.25, 0.5) {};
		\node [style=none] (27) at (0.75, -2) {};
		\node [style=none] (28) at (-1, 0) {};
		\node [style=none] (29) at (-3, 0.5) {};
		\node [style=none] (30) at (-1.5, 0.5) {};
		\node [style=blank] (31) at (-0.5, -0.5) {$N$};
		\node [style=blank] (32) at (-2.75, -0.5) {$N$};
		\node [style=none] (33) at (-1.75, 0.5) {};
		\node [style=none] (34) at (-0.5, -0.75) {};
		\node [style=none] (35) at (-3.25, 0) {};
		\node [style=none] (36) at (-2, 0) {};
		\node [style=blank] (37) at (-3.5, -0.5) {$N$};
		\node [style=none] (38) at (-2.75, -0.75) {};
		\node [style=none] (39) at (-2.25, 0) {};
		\node [style=none] (40) at (-4, 0) {};
		\node [style=none] (41) at (-3.5, 0.5) {};
		\node [style=none] (42) at (-0.25, 0) {};
		\node [style=none] (43) at (-2.5, 0.5) {};
		\node [style=none] (44) at (-1, -2) {};
		\node [style=none] (45) at (-3.25, 0.5) {};
		\node [style=blank] (46) at (-1.5, -0.5) {$N'$};
		\node [style=none] (47) at (-1, 0.5) {};
		\node [style=none] (48) at (-0.5, 0.5) {};
		\node [style=none] (49) at (-1.5, -0.75) {};
		\node [style=none] (50) at (-2.75, 0.5) {};
		\node [style=none] (51) at (-3, 0) {};
		\node [style=none] (52) at (-3.75, 0.5) {};
		\node [style=none] (53) at (-0.25, 0.5) {};
		\node [style=blank] (54) at (-1, -0.5) {$S$};
		\node [style=none] (55) at (-3.5, -0.75) {};
		\node [style=none] (56) at (-3.5, 0) {};
		\node [style=none] (57) at (-2.75, 0) {};
		\node [style=none] (58) at (-1.5, 0) {};
		\node [style=none] (59) at (-0.5, 0) {};
		\node [style=none] (60) at (0.25, 0) {};
		\node [style=none] (61) at (1.25, 0) {};
		\node [style=none] (62) at (2.5, 0) {};
		\node [style=none] (63) at (3.25, 0) {};
		\node [style=none] (64) at (-3, 1) {Clara};
		\node [style=none] (65) at (0, 1) {likes};
		\node [style=none] (66) at (2.75, 1) {beer};
	\end{pgfonlayer}
	\begin{pgfonlayer}{edgelayer}
		\draw (52.center) to (40.center);
		\draw (40.center) to (35.center);
		\draw (35.center) to (45.center);
		\draw (45.center) to (52.center);
		\draw (29.center) to (51.center);
		\draw (51.center) to (39.center);
		\draw (39.center) to (43.center);
		\draw (43.center) to (29.center);
		\draw (33.center) to (36.center);
		\draw (36.center) to (42.center);
		\draw (42.center) to (53.center);
		\draw (53.center) to (33.center);
		\draw (17.center) to (18.center);
		\draw (18.center) to (15.center);
		\draw (15.center) to (16.center);
		\draw (16.center) to (17.center);
		\draw (54) to (44.center);
		\draw (20) to (27.center);
		\draw (11.center) to (10.center);
		\draw (9.center) to (14.center);
		\draw (10.center) to (14.center);
		\draw (9.center) to (11.center);
		\draw (6.center) to (5.center);
		\draw (5.center) to (4.center);
		\draw (4.center) to (7.center);
		\draw (7.center) to (6.center);
		\draw (28.center) to (54);
		\draw (1.center) to (20);
		\draw (56.center) to (37);
		\draw (57.center) to (32);
		\draw (58.center) to (46);
		\draw (59.center) to (31);
		\draw (60.center) to (19);
		\draw (61.center) to (21);
		\draw (62.center) to (22);
		\draw (63.center) to (23);
		\draw [bend left=90, looseness=0.75] (3.center) to (2.center);
		\draw [bend left=90, looseness=0.50] (13.center) to (49.center);
		\draw [bend right=90, looseness=0.50] (55.center) to (34.center);
		\draw [bend right=90, looseness=0.75] (38.center) to (0.center);
	\end{pgfonlayer}
\end{tikzpicture}
\caption{A transitive sentence in $\mathcal{C}$ with pure states}
\label{fig:pure_tscpmc}
\end{figure}

In terms of linear algebra, this corresponds to:
\begin{align*}
  \semantics{\lang{Clara likes beer}} &= \varphi(\semantics{\lang{Clara}} \otimes \semantics{\lang{likes}} \otimes \semantics{\lang{beer}})\\
&= \sum_{qu}C_{iqj}C_{iuj}\ket{s_q}\bra{s_u}
\end{align*}
This is a pure state corresponding to the vector $\sum_q C_{iqj} \ket{s_q}$.
\end{exa}
However, in $\CPM{\FHilb}$ we can work with more than the pure states.
\begin{exa}\em
Let the noun space $N$ be a real Hilbert space with basis vectors given by~$\{\ket{n_i}\}_i$. Let:
\begin{gather*}
\ket{\lang{Annie}} = \sum_i a_i\ket{n_i}, \: \ket{\lang{Betty}} = \sum_i b_i\ket{n_i}, \: \ket{\lang{Clara}} = \sum_i c_i\ket{n_i}\\
\ket{\lang{beer}} = \sum_i d_i\ket{n_i}, \quad \ket{\lang{wine}} = \sum_i e_i\ket{n_i}
\end{gather*}
and with the sentence space $S$, we define:
\begin{align*}
\ket{\lang{likes}} &= \sum_{pqr} C_{pqr} \ket{n_p} \ot \ket{s_q} \ot \ket{n_r}\\
\ket{\lang{appreciates}} &= \sum_{pqr} D_{pqr} \ket{n_p} \ot \ket{s_q} \ot \ket{n_r}
\end{align*}
Then, we can set:
\begin{align*}
\semantics{\lang{the sisters}} &= \frac{1}{3}(\ket{\lang{Annie}}\bra{\lang{Annie}} + \ket{\lang{Betty}}\bra{\lang{Betty}} + \ket{\lang{Clara}}\bra{\lang{Clara}})\\
\semantics{\lang{drinks}} &= \frac{1}{2}(\ket{\lang{beer}}\bra{\lang{beer}} + \ket{\lang{wine}}\bra{\lang{wine}})\\
\semantics{\lang{enjoy}} &= \frac{1}{2}(\ket{\lang{like}}\bra{\lang{like}} + \ket{\lang{appreciate}}\bra{\lang{appreciate}})
\end{align*}
Then, the meaning of the sentence:
\[
s = \lang{The sisters enjoy drinks}
\]
is given by:
\[
\semantics{s} = (\ep_N \otimes 1_S \otimes \ep_N)(\semantics{\lang{the sisters}} \otimes \semantics{\lang{enjoy}} \otimes \semantics{\lang{drinks}})
\]
Diagrammatically, this is shown in figure \ref{fig:mixed_tscpmc}.

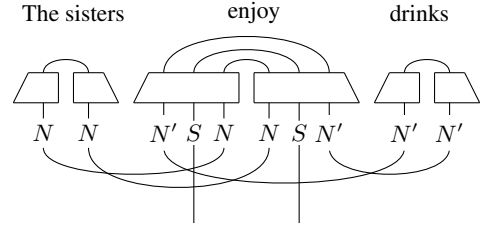
\begin{figure}[htbp]
\centering
\begin{tikzpicture}[scale=0.8]
	\begin{pgfonlayer}{nodelayer}
		\node [style=none] (0) at (0.25, -0.75) {};
		\node [style=none] (1) at (0.75, 0) {};
		\node [style=none] (2) at (1.25, -0.75) {};
		\node [style=none] (3) at (3.25, -0.75) {};
		\node [style=none] (4) at (3.75, 0) {};
		\node [style=none] (5) at (3, 0) {};
		\node [style=none] (6) at (3, 0.5) {};
		\node [style=none] (7) at (3.5, 0.5) {};
		\node [style=none] (8) at (3.25, 0.5) {};
		\node [style=none] (9) at (2.75, 0.5) {};
		\node [style=none] (10) at (2, 0) {};
		\node [style=none] (11) at (2.25, 0.5) {};
		\node [style=none] (12) at (2.5, 0.5) {};
		\node [style=none] (13) at (2.5, -0.75) {};
		\node [style=none] (14) at (2.75, 0) {};
		\node [style=none] (15) at (1.75, 0) {};
		\node [style=none] (16) at (1.5, 0.5) {};
		\node [style=none] (17) at (0, 0.5) {};
		\node [style=none] (18) at (0, 0) {};
		\node [style=blank] (19) at (0.25, -0.5) {$N$};
		\node [style=blank] (20) at (0.75, -0.5) {$S$};
		\node [style=blank] (21) at (1.25, -0.5) {$N'$};
		\node [style=blank] (22) at (2.5, -0.5) {$N'$};
		\node [style=blank] (23) at (3.25, -0.5) {$N'$};
		\node [style=none] (24) at (0.25, 0.5) {};
		\node [style=none] (25) at (0.75, 0.5) {};
		\node [style=none] (26) at (1.25, 0.5) {};
		\node [style=none] (27) at (0.75, -2) {};
		\node [style=none] (28) at (-1, 0) {};
		\node [style=none] (29) at (-3, 0.5) {};
		\node [style=none] (30) at (-1.5, 0.5) {};
		\node [style=blank] (31) at (-0.5, -0.5) {$N$};
		\node [style=blank] (32) at (-2.75, -0.5) {$N$};
		\node [style=none] (33) at (-1.75, 0.5) {};
		\node [style=none] (34) at (-0.5, -0.75) {};
		\node [style=none] (35) at (-3.25, 0) {};
		\node [style=none] (36) at (-2, 0) {};
		\node [style=blank] (37) at (-3.5, -0.5) {$N$};
		\node [style=none] (38) at (-2.75, -0.75) {};
		\node [style=none] (39) at (-2.25, 0) {};
		\node [style=none] (40) at (-4, 0) {};
		\node [style=none] (41) at (-3.5, 0.5) {};
		\node [style=none] (42) at (-0.25, 0) {};
		\node [style=none] (43) at (-2.5, 0.5) {};
		\node [style=none] (44) at (-1, -2) {};
		\node [style=none] (45) at (-3.25, 0.5) {};
		\node [style=blank] (46) at (-1.5, -0.5) {$N'$};
		\node [style=none] (47) at (-1, 0.5) {};
		\node [style=none] (48) at (-0.5, 0.5) {};
		\node [style=none] (49) at (-1.5, -0.75) {};
		\node [style=none] (50) at (-2.75, 0.5) {};
		\node [style=none] (51) at (-3, 0) {};
		\node [style=none] (52) at (-3.75, 0.5) {};
		\node [style=none] (53) at (-0.25, 0.5) {};
		\node [style=blank] (54) at (-1, -0.5) {$S$};
		\node [style=none] (55) at (-3.5, -0.75) {};
		\node [style=none] (56) at (-3.5, 0) {};
		\node [style=none] (57) at (-2.75, 0) {};
		\node [style=none] (58) at (-1.5, 0) {};
		\node [style=none] (59) at (-0.5, 0) {};
		\node [style=none] (60) at (0.25, 0) {};
		\node [style=none] (61) at (1.25, 0) {};
		\node [style=none] (62) at (2.5, 0) {};
		\node [style=none] (63) at (3.25, 0) {};
		\node [style=none] (64) at (-3, 1.5) {The sisters};
		\node [style=none] (65) at (0, 1.5) {enjoy};
		\node [style=none] (66) at (2.75, 1.5) {drinks};
	\end{pgfonlayer}
	\begin{pgfonlayer}{edgelayer}
		\draw (52.center) to (40.center);
		\draw (40.center) to (35.center);
		\draw (35.center) to (45.center);
		\draw (45.center) to (52.center);
		\draw (29.center) to (51.center);
		\draw (51.center) to (39.center);
		\draw (39.center) to (43.center);
		\draw (43.center) to (29.center);
		\draw (33.center) to (36.center);
		\draw (36.center) to (42.center);
		\draw (42.center) to (53.center);
		\draw (53.center) to (33.center);
		\draw (17.center) to (18.center);
		\draw (18.center) to (15.center);
		\draw (15.center) to (16.center);
		\draw (16.center) to (17.center);
		\draw (54) to (44.center);
		\draw (20) to (27.center);
		\draw (11.center) to (10.center);
		\draw (9.center) to (14.center);
		\draw (10.center) to (14.center);
		\draw (9.center) to (11.center);
		\draw (6.center) to (5.center);
		\draw (5.center) to (4.center);
		\draw (4.center) to (7.center);
		\draw (7.center) to (6.center);
		\draw (28.center) to (54);
		\draw (1.center) to (20);
		\draw [bend left=90] (41.center) to (50.center);
		\draw [bend left=90] (48.center) to (24.center);
		\draw [bend left=90, looseness=0.75] (47.center) to (25.center);
		\draw [bend left=90, looseness=0.75] (30.center) to (26.center);
		\draw [bend right=90] (8.center) to (12.center);
		\draw (56.center) to (37);
		\draw (57.center) to (32);
		\draw (58.center) to (46);
		\draw (59.center) to (31);
		\draw (60.center) to (19);
		\draw (61.center) to (21);
		\draw (62.center) to (22);
		\draw (63.center) to (23);
		\draw [bend left=90, looseness=0.75] (3.center) to (2.center);
		\draw [bend left=90, looseness=0.50] (13.center) to (49.center);
		\draw [bend right=90, looseness=0.50] (55.center) to (34.center);
		\draw [bend right=90, looseness=0.75] (38.center) to (0.center);
	\end{pgfonlayer}
\end{tikzpicture}
\caption{A transitive sentence in $\mathcal{C}$ with impure states}
\label{fig:mixed_tscpmc}
\end{figure}
The impurity is indicated by the fact that the pairs of states are connected by wires \cite{selinger}.
\end{exa}

\section{Predicates and Entailment}
If we consider a model of (non-deterministic) classical computation, a state of a set~$X$ is just a subset~$\rho \subseteq X$. Similarly, a predicate is
a subset~$A \subseteq X$. We say that~$\rho$ satisfies~$A$ if:
\begin{equation*}
  \rho \subseteq A
\end{equation*}
which we write as~$\rho \satisfies A$. Predicate~$A$ entails predicate~$B$, written~$A \models B$ if for every state~$\rho$:
\begin{equation*}
  \rho \satisfies A \quad\Rightarrow\quad \rho \satisfies B
\end{equation*}
Clearly this is equivalent to requiring~$A \subseteq B$.

\subsection{The L\"owner Order}
As our linguistic models derive from a quantum mechanical formalism, positive operators form a natural analogue for
subsets as our predicates. This follows ideas in~\citep{DHondt2006} and earlier work in a probabilistic setting in~\citep{Kozen1983}.
Crucially, we can order positive operators~\cite{Lowner1934}.
\begin{dfn}[L\"owner Order] 
\label{dfn:lowner}
For positive operators~$A$ and~$B$, we define:
\begin{equation*}
  A \loewner B \iff  B -  A \text{ is positive}
\end{equation*}
\end{dfn}
If we consider this as an entailment relationship, we can follow our intuitions from the non-deterministic setting.
Firstly we introduce a suitable notion of satisfaction. For positive operator $A$ and density matrix $\rho$, we define $\rho \satisfies A$ as the positive real number $\trace(\rho A)$.
%
This generalizes satisfaction from a binary relation to a binary function into the positive reals.
We then find that the L\"owner order can equivalently be phrased in terms of satisfaction as follows:
\begin{lem}[\citet{DHondt2006}]
  Let $A$ and $B$ be positive operators. $A \loewner B$ if and only if for all density operators~$\rho$:
  \begin{equation*}
    \rho \satisfies A \quad\leq\quad \rho \satisfies B
  \end{equation*}
\end{lem}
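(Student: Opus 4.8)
The plan is to prove both implications via the spectral characterization of positivity: an operator $C$ is positive if and only if $\trace(\rho C) \geq 0$ for every density operator $\rho$. Recall also that the density operators are exactly the positive operators of trace $1$, and that every nonzero positive operator is a positive scalar multiple of a density operator.

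For the forward direction, assume $A \loewner B$, i.e.\ $B - A$ is positive. Fix an arbitrary density operator $\rho$. By linearity of the trace, $\rho \satisfies B - \rho \satisfies A = \trace(\rho B) - \trace(\rho A) = \trace(\rho(B-A))$. Since $\rho$ is positive (being a density operator) and $B - A$ is positive, the trace of their product is nonnegative; hence $\rho \satisfies A \leq \rho \satisfies B$. The only fact needed here is that $\trace(\rho C) \geq 0$ whenever $\rho$ and $C$ are both positive, which follows from writing $C$ in terms of its positive square root and using cyclicity of the trace, $\trace(\rho C) = \trace(\sqrt{C}\rho\sqrt{C}) \geq 0$.

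For the converse, suppose $\trace(\rho A) \leq \trace(\rho B)$ for all density operators $\rho$; equivalently $\trace(\rho(B-A)) \geq 0$ for all density operators $\rho$. I want to conclude $B - A$ is positive, i.e.\ $\braket{v | (B-A) | v} \geq 0$ for all vectors $v$. For $v = 0$ this is trivial; for $v \neq 0$, the rank-one operator $\ket{v}\bra{v} / \braket{v|v}$ is a density operator, so plugging it in gives $\trace\bigl(\ket{v}\bra{v}(B-A)\bigr)/\braket{v|v} \geq 0$, and $\trace(\ket{v}\bra{v}(B-A)) = \braket{v|(B-A)|v}$. Hence $B - A$ is positive, which is exactly $A \loewner B$.

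The main obstacle, such as it is, is purely a matter of care rather than depth: one must make sure the test density operators used in the converse direction actually range over enough of the state space to detect positivity — and the rank-one projections $\ket{v}\bra{v}/\braket{v|v}$ suffice precisely because positivity of a self-adjoint operator is witnessed pointwise by expectation values $\braket{v|C|v}$. A small subtlety worth noting is that $B - A$ is automatically self-adjoint since $A$ and $B$ are, so the condition $\braket{v|(B-A)|v}\geq 0$ for all $v$ genuinely characterizes positivity of $B-A$; no separate argument for self-adjointness is required. I would present the proof as the short chain of equivalences $A \loewner B \iff B-A \text{ positive} \iff \forall v.\ \braket{v|(B-A)|v} \geq 0 \iff \forall \rho.\ \trace(\rho(B-A)) \geq 0 \iff \forall \rho.\ \rho \satisfies A \leq \rho \satisfies B$, citing \citet{DHondt2006} for the middle equivalence connecting density-operator tests to arbitrary positive-vector tests.
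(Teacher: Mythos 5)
Your proof is correct. The paper does not actually supply its own argument for this lemma --- it is imported verbatim from \citet{DHondt2006} with no proof in the appendix --- so there is nothing to diverge from; your argument is the standard one and would serve as a self-contained justification. Both directions are handled properly: the forward direction via $\trace(\rho(B-A)) = \trace\bigl(\sqrt{B-A}\,\rho\,\sqrt{B-A}\bigr) \geq 0$, and the converse by testing against the normalized rank-one density operators $\ket{v}\bra{v}/\braket{v|v}$. Your remark that self-adjointness of $B-A$ comes for free from that of $A$ and $B$ is worth keeping, since the paper works over real Hilbert spaces where $\braket{v|C|v}\geq 0$ for all $v$ does not by itself force $C$ to be self-adjoint.
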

Linguistically, we can interpret this condition as saying that every noun, for example, satisfies predicate~$B$ at least
as strongly as it satisfies predicate~$A$.

\subsection{Quantum Logic}
Quantum logic \citep{birkhoff1936} views the projection operators on a Hilbert space as propositions about
a quantum system. As the L\"owner order restricts to the usual ordering on projection operators, we can
embed quantum logic within the poset of projection operators, providing a direct link to existing theory.

\subsection{A General Setting for Approximate Entailment}
\label{sec:general}
We can build an entailment preorder on any commutative monoid, viewing the underlying set as a collection of propositions.
We then write:
\begin{equation*}
  A \models B
\end{equation*}
and say~$A$ entails~$B$ if there exists a proposition~$D$ such that:
\begin{equation*}
  A + D = B
\end{equation*}
If our commutative monoid is the powerset of some set~$X$, with union the binary operation and unit the empty set, then we recover
our non-deterministic computation example from the previous section. If on the other hand we take our commutative monoid to be the positive operators
on some Hilbert space, with addition of operators and the zero operator as the monoid structure, we recover the L\"owner ordering.

In linguistics, we may ask ourselves does~\lang{dog} entail~\lang{pet}? Na\"ively, the answer is clearly no, not every dog is a pet. This seems
too crude for realistic applications though, most dogs are pets, and so we might say \lang{dog} entails \lang{pet} to some extent. This
motivates our need for an approximate notion of entailment.

For proposition~$E$, we say that $A$ entails~$B$ to the extent~$E$ if:
\begin{equation*}
  A \models B + E
\end{equation*}
We think of~$E$ as a error term, for instance in our dogs and pets example, $E$ adds back in dogs that are not pets.
Expanding definitions, we find~$A$ entails~$B$ to extent~$E$ if there exists~$D$ such that:
\begin{equation}
  \label{eq:symmetricalformulation}
  A + D = B + E
\end{equation}
From this more symmetrical formulation it is easy to see that for arbitrary propositions~$A$, $B$, proposition~$A$ trivially entails~$B$ to extent~$A$, as by commutativity:
\begin{equation*}
  A + B = B + A
\end{equation*}
It is therefore clear that the mere existence of a suitable error term is not sufficient for a weakened notion of entailment. If we restrict our
attention to errors in a complete meet semilattice~$\mathcal{E}_{A,B}$, we can take the lower bound on the~$E$ satisfying equation~\eqref{eq:symmetricalformulation}
as our canonical choice. Finally, if we wish to be able to compare entailment strengths globally, this can be achieved by choosing a
partial order~$\mathcal{K}$ of ``error sizes'' and monotone functions:
\begin{equation*}
  \mathcal{E}_{A,B} \xrightarrow{\kappa_{A,B}} \mathcal{K}
\end{equation*}
sending errors to their corresponding size.

For example, if~$A$ and~$B$ are positive operators, we take our complete lattice of error terms~$\mathcal{E}_{A,B}$ to be all operators
of the form~$(1-k)A$ for~$k \in [0,1]$, ordered by the size of~$1 - k$. We then take~$k$ as the strength of the entailment, and
refer to it as \mbox{\emph{k-hyponymy}}.

In the case of finite sets~$A$, $B$, we take~$\mathcal{E}_{A,B} = \mathcal{P}(A)$, and take the size of the error terms as:
\begin{equation*}
  \frac{\text{cardinality of } E}{\text{cardinality of } A}
\end{equation*}
measuring ``how much'' of~$A$ we have to supplement~$B$ with, as indicated in the shaded region below:
\begin{equation*}
  \begin{tikzpicture}[scale=0.5]
    \begin{scope}
      \fill[black!50] (0.9,-0.9) circle (1.2);
    \end{scope}
    \draw[fill=white] (0,0) circle (2);
    \draw (0.9,-0.9) circle (1.2);
    \node at (-1,1) {$B$};
    \node at (0.9,-0.9) {$A$};
  \end{tikzpicture}
\end{equation*}
In terms of conditional probability, the error size is then:
\begin{equation*}
  P(A \mid \neg B)
\end{equation*}

\subsubsection{$k$-hyponymy Versus General Error Terms}
We can see that the general error terms are strictly more general than considering
the $k$-hyponymy case. If we consider positive operators with matrix representations:
\begin{equation*}
  A = \begin{pmatrix}
    1 & 0 & 0\\
    0 & 1 & 0\\
    0 & 0 & 0
  \end{pmatrix}
  \qquad
  B =
  \begin{pmatrix}
    1 & 0 & 0\\
    0 & 0 & 0\\
    0 & 0 & 1
  \end{pmatrix}
\end{equation*}
Predicate $A$ cannot entail $B$ with any positive strength $k$. We can see $B - kA$ is never a positive operator
as the following expression is always negative:
\begin{equation*}
  \begin{pmatrix}
    0 & 1 & 0
  \end{pmatrix}
  \begin{pmatrix}
    1 - k & 0 & 0\\
    0 & -k & 0\\
    0 & 0 & 1
  \end{pmatrix}
  \begin{pmatrix}
    0 \\ 1 \\ 0
  \end{pmatrix}
\end{equation*}
We can find a more general positive operator $E$ such that $A \loewner B + E$ though, as:
\begin{equation*}
  \begin{pmatrix}
    1 & 0 & 0\\
    0 & 1 & 0\\
    0 & 0 & 0
  \end{pmatrix}
  +
  \begin{pmatrix}
    0 & 0 & 0\\
    0 & 0 & 0\\
    0 & 0 & 1
  \end{pmatrix}
  =
  \begin{pmatrix}
    1 & 0 & 0\\
    0 & 0 & 0\\
    0 & 0 & 1
  \end{pmatrix}
  +
  \begin{pmatrix}
    0 & 0 & 0\\
    0 & 1 & 0\\
    0 & 0 & 0
  \end{pmatrix}
\end{equation*}
Therefore the general error terms offer strictly more freedom than $k$-hyponymy.

\section{Hyponymy in Categorical Compositional Distributional Semantics}

Modelling hyponymy in the categorical compositional distributional semantics framework was first considered in~\cite{balkir2014}.
She introduced an asymmetric similarity measure called \emph{representativeness} on density matrices based on quantum relative entropy. This can be used to translate hyponym-hypernym relations to the level of positive transitive sentences. Our aim here will be to provide an alternative measure which relies only on the properties of density matrices and the fact that they are the states in~$\CPM{\FHilb}$. This will enable us to quantify the \emph{strength} of the hyponymy relationship, described as $k$-hyponymy. The measure of hyponymy that we use has two advantages over the representativeness measure. Firstly, it combines with the linear sentence maps so that we can work with sentence-level entailment across a larger range of sentences. Secondly, due to the way it combines with linear maps, we can give a quantitative measure to sentence-level entailment based on the entailment strengths between words, whereas representativeness is not shown to combine in this way.

\dm{Obvious reviewer question: ``what is better about $k$-hyponymy versus Esma's stuff?'' This is not explained forcefully enough here}
\ml{added sentence to strengthen}

\subsection{Properties of hyponymy} 
Before proceeding with defining the concept of \emph{$k$-hyponymy}, we will list a couple of properties of hyponymy. We will show later that these can be captured by our new measure.

\begin{itemize}
\item \textbf{Asymmetry.} If A is a hyponym of B, then this does not imply that Y is a hyponym of X. In fact, we may even assume that only one of these relationships is possible, and that they are mutually exclusive. For example, \lang{football} is a type of \lang{sport} and hence {\lang{football}-\lang{sport}} is a hyponym-hypernym pair.
  However, \lang{sport} is not a type of \lang{football}. 
\item \textbf{Pseudo-transitivity.} If X is a hyponym of Y and Y is a hyponym of Z, then X is a hyponym of Z.
  However, if the hyponymy is not perfect, then we get a weakened form of transitivity.
  For example, \lang{dog} is a hyponym of \lang{pet}, and \lang{pet} is a hyponym of \lang{things that are cared for}.
  However, not every dog is well cared-for, so the transitivity weakens.
  An outstanding question is where the entailment strength reverses. For example, \lang{dog} imperfectly entails
  \lang{pet}, and \lang{pet} imperfectly entails \lang{mammal}, but \lang{dog} perfectly entails \lang{mammal}.
\end{itemize}

The measure of hyponymy that we described above and named $k$-hyponymy will be defined in terms of density matrices - the containers for word meanings. The idea is then to define a quantitative order on the density matrices, which is not a partial order, but does give us an indication of the asymmetric relationship between words. 

\subsection{Ordering Positive Matrices}
\label{sec:opm}
A density matrix can be used to encode the extent of precision that is needed when describing an action. In the sentence \lang{I took my pet to the vet},
we do not know whether the pet is a dog, cat, tarantula and so on. The sentence \lang{I took my dog to the vet} is more specific. We can think of the meaning of the word
\lang{pet} as represented by:
\begin{align*}
  \semantics{\lang{pet}} = & p_d\ket{\lang{dog}}\bra{\lang{dog}} + p_c\ket{\lang{cat}}\bra{\lang{cat}} +\\
  &\quad  p_t\ket{\lang{tarantula}}\bra{\lang{tarantula}} + ...\\\
  &\mbox{where}\quad \forall i. p_i \geq 0 \quad\mbox{and}\quad \sum_i p_i = 1
\end{align*}
We then wish to develop an order on density matrices so that \lang{dog}, as represented by $\ket{\lang{dog}}\bra{\lang{dog}}$
is more specific than \lang{pet} as represented by $\semantics{\lang{pet}}$.
This ordering may then be viewed as an entailment relation, and we wish to show that entailment between words can lift to the level of sentences,
so that the sentence \lang{I took my dog to the vet} entails the sentence \lang{I took my pet to the vet}.

We now define our notion of approximate entailment, following the discussions of section \ref{sec:general}:
\begin{dfn}[$k$-hyponym]
We say that $A$ is a \emph{$k$-hyponym} of $B$ for a given value of \emph{$k$} in the range $(0,1]$ and write $ A  \hypo_k B $ if:
  \[
  0 \loewner B - kA
  \]
\end{dfn}
Note that such a \emph{$k$} need not be unique or even exist at all. We will consider the interpretation and implications of this later on.  Moreover, whenever we do have \emph{$k$}-hyponymy between A and B, there is necessarily a largest such \emph{$k$}. 
\begin{dfn}[$k$-max hyponym]
If $A$ is a $k$-hyponym of $B$ for any $k \in (0,1]$, then there is necessarily a  maximal possible such $k$. We denote it by $k_{max}$ and define it to be the maximum value of $k$ in the range $(0,1]$ for which we have $A \hypo_k B$, in the sense that there does not exist $k' \in (0,1]$ s.t. $k' > k$ and $A \hypo_{k'} B$. 
\end{dfn}

In general, we are interested in the maximal value $k$ for which $k$-hyponymy holds between two positive operators. This $k$-max value quantifies the strength of the entailment between the two operators.
\dm{The next part should be emphasized more, we have a usual theorem here, but we spend much more discussion on the normalization stuff where the benefits are a lot less clear}
In what follows, for operator $A$ we write $A^+$ for the corresponding Moore-Penrose pseudo-inverse and $supp(A)$ for the support of $A$.
\begin{lem}[\citet{balkir2014}]
  \label{lem:supp}
  Let~$A,B$ be positive operators.
  \begin{equation*}
    supp(A) \subseteq supp(B) \iff \exists k. k > 0 \mbox{ and } B - kA \geq 0
  \end{equation*}
\end{lem}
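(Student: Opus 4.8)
The plan is to prove the two implications separately, after first recasting the support condition. In the finite-dimensional setting of $\FHilb$, for a positive operator $C$ we have $supp(C) = \ker(C)^\perp = \mathrm{range}(C)$, so that $supp(A) \subseteq supp(B)$ is equivalent to $\ker(B) \subseteq \ker(A)$, and also equivalent to $P A P = A$, where $P$ denotes the orthogonal projection onto $supp(B)$. I will also use repeatedly the elementary fact that for positive $C$ one has $Cv = 0 \iff \langle v \mid C \mid v \rangle = 0$ (pass to $C^{1/2}$ and note $\langle v\mid C\mid v\rangle = \| C^{1/2} v \|^2$).

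For the ($\Leftarrow$) direction, suppose $B - kA \geq 0$ for some $k > 0$. Take any $v \in \ker(B)$; then $\langle v \mid B \mid v\rangle = 0$, so $0 \le \langle v \mid (B - kA) \mid v\rangle = -k\,\langle v \mid A \mid v\rangle$, whence $\langle v \mid A \mid v\rangle \le 0$. Since $A \geq 0$ this forces $\langle v \mid A \mid v\rangle = 0$, i.e. $v \in \ker(A)$. Thus $\ker(B) \subseteq \ker(A)$, i.e. $supp(A) \subseteq supp(B)$. This direction is routine.

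For the ($\Rightarrow$) direction, assume $supp(A) \subseteq supp(B)$; if $A = 0$ then $B - kA = B \geq 0$ for every $k > 0$, so assume $A \neq 0$. Let $B^{1/2}$ be the positive square root and $(B^{1/2})^{+} = (B^{+})^{1/2}$ its Moore--Penrose pseudo-inverse, and set $M = (B^{1/2})^{+} A\, (B^{1/2})^{+}$, which is again positive. Since $B^{1/2}(B^{1/2})^{+} = P$ (projection onto $\mathrm{range}(B^{1/2}) = supp(B)$) and, by the support hypothesis, $P A P = A$, we obtain $B^{1/2} M B^{1/2} = P A P = A$. Conjugating the bound $M \le \|M\|\, I$ by $B^{1/2}$ then gives $A = B^{1/2} M B^{1/2} \le \|M\|\, B$, so $B - kA \geq 0$ for $k = \|M\|^{-1} > 0$. (A more hands-on alternative: restrict to $W = supp(B)$, on which $B$ is strictly positive with least eigenvalue $\lambda_{\min} > 0$; then for unit $w \in W$, $\langle w \mid (B - kA) \mid w\rangle \ge \lambda_{\min} - k\|A\| \ge 0$ once $k \le \lambda_{\min}/\|A\|$, and one extends to the whole space because both $A$ and $B$ annihilate $W^{\perp}$ and map $W$ into $W$.)

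The only point that genuinely uses the hypothesis — and hence the expected main obstacle to state cleanly — is the identity $B^{1/2} M B^{1/2} = A$, equivalently $P A P = A$: this is exactly where $supp(A) \subseteq supp(B)$ enters, since without it the conjugation only recovers the compression $P A P$, which is strictly smaller than $A$. Everything else is bookkeeping with positivity, square roots, and operator norms.
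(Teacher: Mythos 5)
Your proof is correct. Note that the paper itself does not prove this lemma --- it imports it from the cited reference --- so there is no in-paper argument to compare against line by line; your proposal supplies a complete, self-contained proof. The ($\Leftarrow$) direction is the standard kernel-inclusion argument and is fine, including the appeal to $\langle v\mid C\mid v\rangle = \|C^{1/2}v\|^2$ to pass from vanishing quadratic form to vanishing of $Cv$. The ($\Rightarrow$) direction is also sound: $M = (B^{1/2})^{+}A(B^{1/2})^{+}$ is positive, $B^{1/2}(B^{1/2})^{+}$ is the projection $P$ onto $supp(B)$, the hypothesis gives $PAP = A$, and conjugating $M \loewner \|M\|\,I$ by $B^{1/2}$ yields $A \loewner \|M\|\,B$ with $\|M\| > 0$ whenever $A \neq 0$. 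Your fallback argument via the least nonzero eigenvalue of $B$ on $W = supp(B)$ also goes through, since $B - kA$ vanishes on $W^{\perp}$ and preserves $W$, so the cross terms drop out. A pleasant bonus you did not remark on: since $(B^{1/2})^{+}A(B^{1/2})^{+}$ and $B^{+}A$ have the same nonzero spectrum (products $XY$ and $YX$ share nonzero eigenvalues), your constant $k = \|M\|^{-1}$ is exactly $1/\lambda_{\max}(B^{+}A)$, i.e.\ the \emph{optimal} $k$ of Theorem~\ref{thm:kmax}; so your single computation proves both the existence claim of the lemma and, with the monotonicity of conjugation, most of the content of that theorem, whereas the paper's proof of Theorem~\ref{thm:kmax} proceeds by a separate variational argument (locating a vector where $\bra{x_0}(B-k_0A)\ket{x_0}=0$ and differentiating).
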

\begin{lem}
  For positive self-adjoint matrices~$A$, $B$ such that:
  \begin{equation*}
    supp(A) \subseteq supp(B)
  \end{equation*}
  $B^+A$ has non-negative eigenvalues.
\end{lem}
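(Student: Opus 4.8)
The plan is to cut the problem down to the subspace $V := supp(B)$, on which $B$ is invertible, and there invoke the classical fact that for positive operators $C,D$ with $C$ invertible the product $C^{-1}D$ is similar to the positive operator $C^{-1/2}DC^{-1/2}$ and hence has spectrum in $[0,\infty)$.

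First I would record the standard properties of the Moore--Penrose pseudo-inverse in the self-adjoint setting: $B^+$ is again positive and self-adjoint, $supp(B^+) = supp(B) = V$, the operator $B^+$ restricts to the genuine inverse of $B|_V$ on $V$, and $B^+$ vanishes on $V^\perp$. The key structural step is then to observe that $B^+A$ is block diagonal for the decomposition $\mathcal H = V \oplus V^\perp$. This is where the hypothesis enters: since $supp(A)\subseteq V$ we have $\ker A = supp(A)^\perp \supseteq V^\perp$, so $A$ annihilates $V^\perp$ and sends $V$ into $supp(A)\subseteq V$; consequently $B^+A$ also annihilates $V^\perp$, while its range lies in $\operatorname{ran} B^+ = V$. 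Thus $B^+A$ acts as $0$ on $V^\perp$ and as $(B|_V)^{-1}(A|_V)$ on $V$, so that $\operatorname{spec}(B^+A) = \operatorname{spec}\!\big((B|_V)^{-1}(A|_V)\big)\cup\{0\}$.

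Finally, writing $B_V = B|_V$ (positive and invertible on $V$) and $A_V = A|_V$ (positive on $V$, since $\langle v, Av\rangle\ge 0$ and $Av\in V$ for $v \in V$), conjugation by $B_V^{1/2}$ gives
\[
B_V^{1/2}\,(B_V^{-1}A_V)\,B_V^{-1/2} \;=\; B_V^{-1/2}A_V\,B_V^{-1/2},
\]
and the right-hand side is manifestly positive semidefinite. Hence $B_V^{-1}A_V$ has non-negative real eigenvalues, and combining with the previous step, so does $B^+A$.

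The only real obstacle is the bookkeeping with supports: one must verify both that $A$ genuinely annihilates $V^\perp$ (this is exactly the point at which $supp(A)\subseteq supp(B)$ is used) and that $B^+A$ has range inside $V$, so that no off-diagonal blocks appear and the spectrum of $B^+A$ is really the disjoint union claimed above; everything past that point is routine linear algebra. One could alternatively bypass the restriction and simply cite the general theorem that a product of two positive semidefinite matrices has non-negative spectrum, but the argument above has the advantage of being self-contained and of explicitly producing $(B|_V)^{-1}(A|_V)$, whose largest eigenvalue is precisely what governs $k_{max}$.
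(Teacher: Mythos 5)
Your argument is correct. Note that the paper does not actually supply a proof of this lemma; the closest it comes is a parenthetical remark inside the proof of the continuity theorem, where it is observed that $B^+A$ has the same eigenvalues as the Hermitian, manifestly positive semidefinite matrix $A^{\frac{1}{2}}B^{+}A^{\frac{1}{2}}$ --- i.e.\ the standard fact that $XY$ and $YX$ share their nonzero eigenvalues, applied to $X = B^{+}A^{\frac{1}{2}}$ and $Y = A^{\frac{1}{2}}$. Your route is genuinely different: you restrict everything to $V = supp(B)$, verify that $B^{+}A$ is block diagonal with a zero block on $V^{\perp}$ (which is where the hypothesis $supp(A)\subseteq supp(B)$ does real work), and then conjugate $(B|_V)^{-1}(A|_V)$ by $B_V^{1/2}$ to land on the positive semidefinite operator $B_V^{-1/2}A_V B_V^{-1/2}$; all the individual steps (the support and range bookkeeping, the similarity computation, the positivity of $A|_V$) check out. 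Comparing the two: the $XY$/$YX$ trick is shorter and, as you yourself note, shows that the support hypothesis is not actually needed --- the product of any two positive semidefinite matrices has non-negative spectrum. Your version spends the hypothesis, but buys something the shorter argument does not: it explicitly produces the invertible restricted operator $(B|_V)^{-1}(A|_V)$ and identifies the spectrum of $B^{+}A$ as that of this restriction together with $0$, which is exactly the decomposition needed to make sense of the subsequent theorem identifying $k_{max}$ with the reciprocal of the largest eigenvalue of $B^{+}A$. Either proof would be an acceptable replacement for the paper's missing one.
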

We now develop an expression for the optimal~$k$ in terms of the matrices~$A$ and~$B$.
\begin{thm}
\label{thm:kmax}
For positive self-adjoint matrices~$A$, $B$ such that:
\begin{equation*}
  supp(A) \subseteq supp(B)
\end{equation*}
the maximum~$k$ such that~$B - kA \geq 0$ is given by~$1/\lambda$ where~$\lambda$ is the maximum eigenvalue of~$B^+A$.
\end{thm}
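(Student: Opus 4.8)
\emph{Proof proposal.} The plan is to reduce the problem to the support of $B$, where $B$ becomes invertible, and then diagonalize.

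First I would split the underlying space as $V = supp(B) \oplus \ker(B)$. The hypothesis $supp(A)\subseteq supp(B)$ says $(\ker A)^\perp \subseteq (\ker B)^\perp$, equivalently $\ker B \subseteq \ker A$, so $A$ vanishes on $\ker B$; moreover $A(V) \subseteq supp(A) \subseteq supp(B)$, so $A$ also restricts to an operator on $supp(B)$, as does $B$. Hence, with respect to this decomposition, $B - kA$ is block-diagonal, equal to the zero operator on $\ker(B)$ and to (the restriction of) $B - kA$ on $supp(B)$. A self-adjoint block-diagonal operator is positive iff each block is, so $B - kA \geq 0$ iff its restriction to $supp(B)$ is positive, and we may assume without loss of generality that $B$ is positive definite.

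With $B$ invertible, let $B^{1/2}$ be its positive definite square root and $B^{-1/2}$ its inverse. Conjugation by the invertible self-adjoint operator $B^{-1/2}$ preserves positivity, and $B^{-1/2}BB^{-1/2} = I$, so
\[
B - kA \geq 0 \iff I - k\,B^{-1/2} A B^{-1/2} \geq 0 .
\]
Set $M := B^{-1/2} A B^{-1/2}$, which is self-adjoint and positive. Diagonalizing $M$, the condition $I - kM \geq 0$ is equivalent to $k\mu \leq 1$ for every eigenvalue $\mu$ of $M$; since all eigenvalues of $M$ are non-negative and, assuming $A \neq 0$, the largest of them $\mu_{\max}(M)$ is strictly positive, this holds for $k>0$ precisely when $k \leq 1/\mu_{\max}(M)$. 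So the maximal admissible $k$ is $1/\mu_{\max}(M)$. It remains to identify $\mu_{\max}(M)$ with $\lambda$: on $supp(B)$ the pseudo-inverse $B^+$ coincides with the genuine inverse $B^{-1}$, while $B^+A = 0$ on $\ker(B)$, so the eigenvalues of $B^+A$ are exactly those of $B^{-1}A$ on $supp(B)$ together with (possibly) $0$; and $B^{-1}A = B^{-1/2} M B^{1/2}$ is similar to $M$, hence has the same eigenvalues. By the preceding lemma these are non-negative, so $\lambda = \mu_{\max}(B^+A) = \mu_{\max}(M)$, giving the maximal $k$ as $1/\lambda$.

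The main obstacle is really just careful bookkeeping around the pseudo-inverse: one must check that $B^+A$, which is in general neither self-adjoint nor normal, nonetheless has a well-defined largest eigenvalue agreeing with $\mu_{\max}(M)$, and that the directions in $\ker(B)$ contribute only the harmless eigenvalue $0$. The degenerate case $A = 0$ (where no maximal $k$ exists and $\lambda = 0$) should be excluded or flagged separately.
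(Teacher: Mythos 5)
Your proof is correct, and it takes a genuinely different route from the one in the paper. The paper argues variationally: it decreases $k$ until the quadratic form $\bra{x}(B-kA)\ket{x}$ first touches zero at some $\ket{x_0}$, uses the vanishing gradient $2(B-k_0A)\ket{x_0}=0$ to extract the generalized eigenvalue equation $B\ket{x_0}=k_0A\ket{x_0}$, applies $B^+$ to identify $1/k_0$ as an eigenvalue of $B^+A$, and then rules out any eigenvalue other than the largest by exhibiting a violating vector. Your argument instead reduces to $supp(B)$, where $B$ is invertible, and applies the congruence $X\mapsto B^{-1/2}XB^{-1/2}$, which preserves positivity and converts the condition to $I-kM\geq 0$ for the manifestly self-adjoint positive $M=B^{-1/2}AB^{-1/2}$; the answer is then immediate spectral calculus, and the similarity $B^{-1}A=B^{-1/2}MB^{1/2}$ transfers the spectrum to $B^+A$. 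Your route has two concrete advantages: it obtains the paper's preceding lemma (non-negativity of the eigenvalues of $B^+A$) for free as a byproduct of the similarity with $M\geq 0$, and it sidesteps the slightly delicate continuity-and-critical-point reasoning in the paper (e.g.\ the implicit claim that the zero set of the quadratic form consists of unconstrained minima where the gradient vanishes — true here because the form is non-negative, but left tacit). The paper's version is more constructive in that it exhibits the extremal vector $\ket{v_0}$ witnessing tightness. Your flagging of the degenerate case $A=0$ (where $\lambda=0$ and no maximal $k$ exists) is a legitimate caveat that the paper does not address.
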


\subsection{Properties of $k$-hyponymy}
\paragraph{Reflexivity}
$k$-hyponymy is reflexive for $k$ = 1. For any operator $A$, $A - A = 0$.
\paragraph{Symmetry}
$k$-hyponymy is neither symmetric nor anti-symmetric. For example it is not symmetric since:
\[
  \begin{pmatrix}
    1 & 0\\
    0 & 0\\
  \end{pmatrix}
\hypo_1
  \begin{pmatrix}
    1 & 0\\
    0 & 1\\
  \end{pmatrix}
\text{ but } 
  \begin{pmatrix}
    1 & 0\\
    0 & 1\\
  \end{pmatrix}
\not \hypo_k
  \begin{pmatrix}
    1 & 0\\
    0 & 0\\
  \end{pmatrix}
\]
but it is also not anti-symmetric, since

\[
  \begin{pmatrix}
    1 & 0\\
    0 & 1/2\\
  \end{pmatrix}
\hypo_{1/2}
  \begin{pmatrix}
    1/2 & 0\\
    0 & 1\\
  \end{pmatrix}
\text{ and } 
  \begin{pmatrix}
    1/2 & 0\\
    0 & 1\\
  \end{pmatrix}
\hypo_{1/2}
  \begin{pmatrix}
    1 & 0\\
    0 & 1/2\\
  \end{pmatrix}
\]

\paragraph{Transitivity}
$k$-hyponymy satisfies a version of transitivity.
Suppose $A \hypo_k B$ and $B \hypo_l C$. Then $A \hypo_{kl} C$, since:
\[
B \loewner kA \text{ and } C \loewner lB \implies C \loewner klA
\]
by transitivity of the L\"owner order.

For the maximal values $k_\lang{max}$, $l_\lang{max}$, $m_\lang{max}$ such that $A \hypo_{k_\lang{max}} B$, $B \hypo_{l_\lang{max}} C$ and $A \hypo_{m_\lang{max}} C$, we have the inequality
\[
m_\lang{max} \geq k_\lang{max} l_\lang{max}
\]

\paragraph{Continuity}
For $A \hypo_k B$, when there is a small perturbation to $A$, there is a correspondingly small decrease in the value of $k$. The perturbation must lie in the support of $B$, but can introduce off-diagonal elements.

\begin{thm}
\label{thm:continuity}
Given $A \hypo_k B$ and density operator $\rho$ such that $supp(\rho) \subseteq supp(B)$, then for any $\varepsilon >0$ we can choose a $\delta >0$ such that:
\[
A' = A + \delta \rho \implies A' \hypo_{k'} B \text{ and } |k - k'| < \varepsilon.
\]
\end{thm}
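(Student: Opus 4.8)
The plan is to bypass any eigenvalue perturbation theory and instead write down an explicit admissible $k'$ together with an elementary estimate on $k-k'$. The one substantive ingredient is a \emph{sandwiching estimate}: for a density operator $\rho$ with $supp(\rho)\subseteq supp(B)$ one has
\[
  \rho \loewner c\,B, \qquad c := \norm{(B^{+})^{1/2}\,\rho\,(B^{+})^{1/2}},
\]
where $(B^{+})^{1/2}$ is the square root of the Moore--Penrose pseudo-inverse of $B$ and $\norm{\cdot}$ is the operator norm. This is the step in which the support hypothesis is genuinely used, and it is the main (mild) obstacle. Since $supp(\rho)\subseteq supp(B)$, the operator $(B^{+})^{1/2}\rho(B^{+})^{1/2}$ is positive with support contained in $supp(B)$, so $c<\infty$ and $c\,\proj{B} - (B^{+})^{1/2}\rho(B^{+})^{1/2} \psd 0$; conjugating this inequality by $B^{1/2}$ — which preserves positivity and restores $\rho$ on $supp(B)$ — yields $cB-\rho\psd 0$. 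Note $c>0$, since $\trace(\rho)=1$ forces $\rho\neq 0$.

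Now fix $\varepsilon>0$, choose any $\delta$ with $0<\delta<\varepsilon/(k^{2}c)$, set $A'=A+\delta\rho$, and define
\[
  k' := \frac{k}{1+k\,\delta\,c}\,,\qquad\text{so that}\qquad 0<k'\le k\le 1 .
\]
I claim $A'\hypo_{k'}B$, i.e. $B-k'A'\psd 0$. Using $A'=A+\delta\rho$ and then the scaled sandwiching estimate $k'\delta\rho\loewner k'\delta c\,B$,
\[
  B-k'A' \;=\; B-k'A-k'\delta\rho \;\psd\; B-k'A-k'\delta c\,B \;=\; (1-k'\delta c)\,B-k'A .
\]
A direct computation gives $1-k'\delta c=(1+k\delta c)^{-1}>0$ and $k'/(1-k'\delta c)=k$, so the last expression equals $(1-k'\delta c)\,(B-kA)$, which is positive because $A\hypo_k B$ means $B-kA\psd 0$. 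Hence $B-k'A'\psd 0$, as required. (The algebra here is exact, so even monotonicity of $\hypo$ in $k$ is not needed.)

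Finally, for the gap, $k-k' = k-\dfrac{k}{1+k\delta c} = \dfrac{k^{2}\delta c}{1+k\delta c} \le k^{2}\delta c < \varepsilon$ by the choice of $\delta$. Thus $A'=A+\delta\rho$ implies $A'\hypo_{k'}B$ with $|k-k'|<\varepsilon$, completing the argument. An alternative route would be to invoke Theorem~\ref{thm:kmax}: since $supp(A')\subseteq supp(B)$, the optimal rate for $A'$ is $1/\lambda_{\max}(B^{+}A')$ and $B^{+}A'$ is similar to the Hermitian operator $(B^{+})^{1/2}A'(B^{+})^{1/2}$, whose top eigenvalue depends continuously on $\delta$ by Weyl's inequality; but the explicit estimate above is self-contained and gives the quantitative $\delta$ directly.
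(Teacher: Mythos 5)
Your proposal is correct, and it takes a genuinely different route from the paper. The paper's proof works with the \emph{maximal} entailment strength: it invokes Theorem~\ref{thm:kmax} to write $k = 1/\lambda_{\max}(B^{+}A)$ and $k' = 1/\lambda_{\max}(B^{+}A')$, observes that $B^{+}A$ and $B^{+}\rho$ are similar to the Hermitian operators $A^{1/2}B^{+}A^{1/2}$ and $\rho^{1/2}B^{+}\rho^{1/2}$, and then controls $\lambda_{\max}(B^{+}A')-\lambda_{\max}(B^{+}A)$ by Weyl's eigenvalue perturbation inequality, arriving at $k-k'\le \delta\,\lambda_{\max}(B^{+})/\bigl(\lambda_{\max}(B^{+}A')\lambda_{\max}(B^{+}A)\bigr)$. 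You instead prove the L\"owner domination $\rho \loewner cB$ with $c=\norm{(B^{+})^{1/2}\rho(B^{+})^{1/2}}$ (your conjugation argument is sound: $B^{1/2}(B^{+})^{1/2}=\proj{B}$ and $\proj{B}\rho \proj{B}=\rho$ by the support hypothesis), and then exhibit the explicit witness $k'=k/(1+k\delta c)$, for which $B-k'A'\psd (1-k'\delta c)(B-kA)\psd 0$ is an exact identity and $k-k'\le k^{2}\delta c$. What your version buys: it is elementary and self-contained (no pseudo-inverse eigenvalue formula, no Weyl), it yields a fully explicit $\delta$ in terms of $\varepsilon$, $k$ and $c$, and it applies to \emph{any} $k$ with $A\hypo_k B$ rather than only the maximal one — which is actually closer to the literal statement of the theorem. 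What the paper's version buys: it tracks the perturbation of the optimal entailment strength $k_{\max}$ itself, which is the quantity of practical interest, at the cost of leaning on Theorem~\ref{thm:kmax} and on the similarity trick needed to apply Weyl's inequality to non-Hermitian products. Both arguments correctly use the hypothesis $supp(\rho)\subseteq supp(B)$ (yours to get $c<\infty$, the paper's to get $supp(A')\subseteq supp(B)$), and both give a one-sided bound $k'\le k$, which suffices for $|k-k'|<\varepsilon$.
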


\subsection{Scaling}
\label{sec:norm}
When comparing positive operators, in order to standardize the magnitudes resulting from calculations,
it is natural to consider normalizing their trace so that we work with density operators. Unfortunately,
this is a poor choice when working with the L\"owner order as distinct pairs of density operators are
never ordered with respect to each other. Instead we consider bounding our operators as having maximum
eigenvalue 1, as suggested in~\citep{DHondt2006}. With this ordering, the projection operators regain
their usual ordering and we recover quantum logic as a suborder of our setting.

Our framework is flexible enough to support other normalization strategies. The optimal choice for linguistic
applications is left to future empirical work. More interesting ideas are also possible.
For example we can embed the Bayesian order~\cite{coecke2011a} within our setting via a suitable transformation on positive operators.
This is described in more detail in appendix section~\ref{sec:Bayesian}.
Further theoretical investigations of this type are left to future work.

\subsection{Examples}
In this section we give three simple examples and illustrate the order for 2-dimensional matrices in the Bloch sphere.

\begin{exa}\em
\label{exa:1}
Consider the density matrix:
\begin{equation*}
  \semantics{\lang{pet}} = 1/2\ket{\lang{dog}}\bra{\lang{dog}} + 1/2\ket{\lang{cat}}\bra{\lang{cat}}
\end{equation*}
The entailment strength~$k$ such that~$k\ket{\lang{dog}}\bra{\lang{dog}} \leq \semantics{\lang{pet}}$ is~$\frac{1}{2}$.
\end{exa}
Further, if two mixed states~$\rho$, $\sigma$ can both be expressed as convex combinations of the same two pure states,
the extent to which one state entails the other can also be derived.
\begin{exa}\em
\label{exa:2}
For states:
\begin{align*}
  \rho &= r\ket{\psi}\bra{\psi} + (1 - r)\ket{\phi}\bra{\phi}\\
  \sigma &= s\ket{\psi}\bra{\psi} + (1 - s)\ket{\phi}\bra{\phi}
\end{align*}
the entailment strength~$k$ such that~$k\sigma \leq \rho$ is given by:
\[
 k =
  \begin{cases}
   \frac{r}{s} & \text{if } r < s\\
   \frac{(1 - r)}{(1 - s)} & \text{otherwise}
  \end{cases}
\]
\end{exa}

\begin{exa}\em
Suppose that $\semantics{B} = k_j \semantics{A} + \sum_{i \not = j} k_i \: \semantics{X_i} $. Then:
\[ 
\semantics{A} \hypo_{k} \semantics{B}
\]
for any $k \leq k_j$.
\end{exa}

From the above example we notice that the value~$k_1$ definitely gives us $k_1$-hyponymy between $A$ and $B$,
but it is actually possible that there exists a value, say~$l$, such that~$l > k_1$ and for which we have $l$-hyponymy between~$A$ and~$B$.
Indeed, this happens whenever we have an~$l$ for which:
\begin{equation*}
  (k_1 - l)\bra{x} \semantics{A} \ket{x} \geq - \sum_{i\not=1} p_i \bra{x} \semantics{X_i} \ket{x}
\end{equation*}
Thus, $k_1$ may not be the maximum value for hyponymy between~$A$ and~$B$.
In general, however, we are interested in making the strongest assertion we can and therefore we are interested in the maximum value of~$k$,
which we call the \emph{entailment strength}.
For matrices on~$\mathbb{R}^2$, we can represent these entailment strengths visually using the Bloch sphere restricted to~$\mathbb{R}^2$ - the `Bloch disc'.

\subsubsection{Representing the order in the `Bloch disc'}
The Bloch sphere, \cite{Bloch1946}, is a geometrical representation of quantum states. Very briefly, points on the sphere correspond to pure states, and states within the sphere to impure states. Since we consider matrices only over $\mathbb{R}^2$, we disregard the complex phase which allows us to represent the pure states on a circle. A pure state  $\cos(\theta/2)\ket{0} + \sin(\theta/2)\ket{1}$ is represented by the vector $(\sin(\theta), \cos(\theta))$ on the circle. 

We can calculate the entailment factor~$k$ between any two points on the disc.
For example, in figure  \ref{fig:entailment_strengths} we show contour maps of the entailment strengths for the state with Bloch vector $(\frac{3}{4} \sin(\pi/5), \frac{3}{4} \cos(\pi/5))$, using the maximum eigenvalue normalization.

\begin{figure}[htbp]
  \centering
  \begin{tikzpicture}
    \draw (0, 6) node {$\ket{0}$};
    \draw (-3.25, 3) node {$\frac{\ket{0} - \ket{1}}{\sqrt{2}}$};
    \draw (0, 3) node {\includegraphics[width=0.65\linewidth]{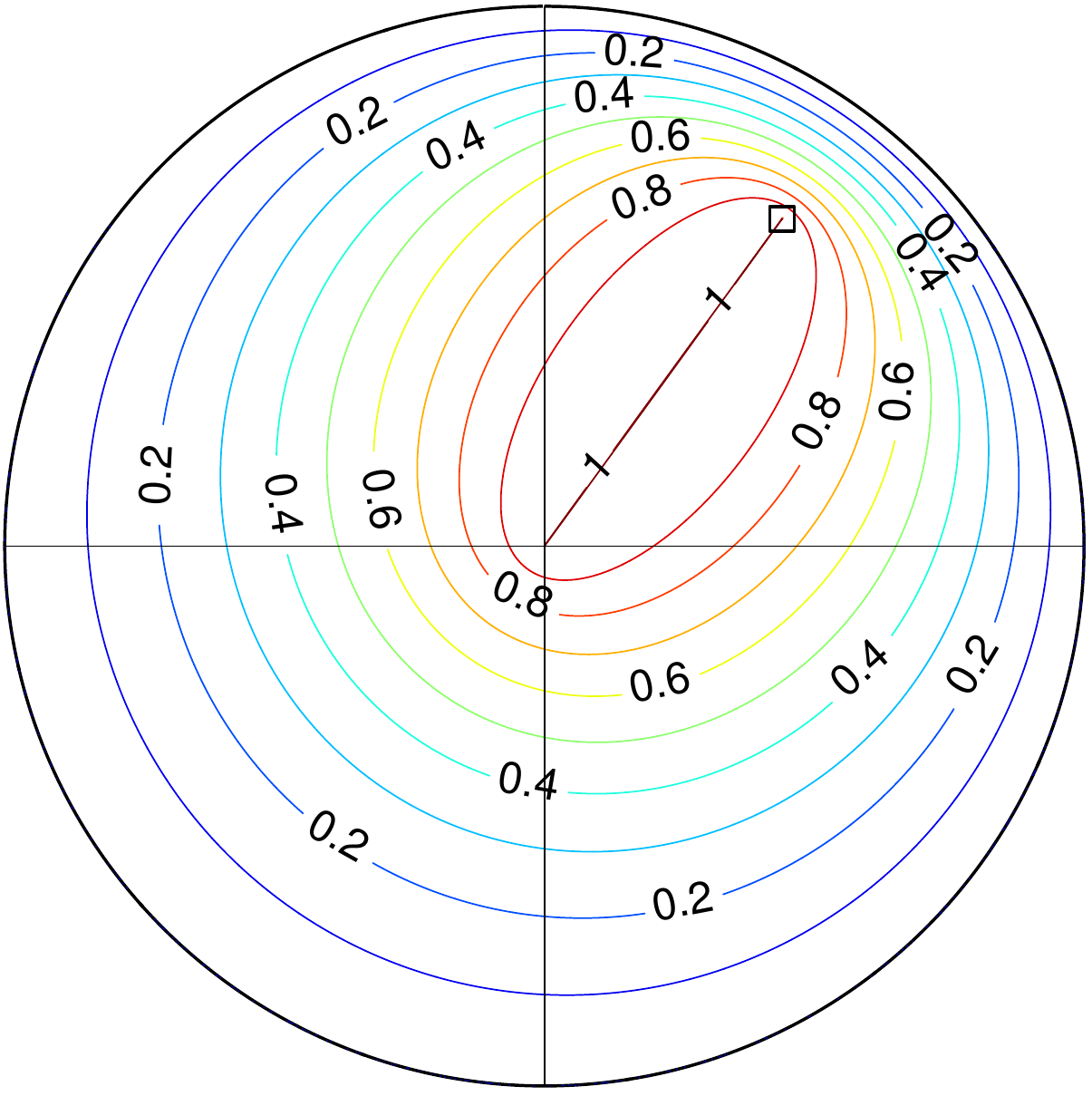}};
    \draw (3.25, 3) node {$\frac{\ket{0} + \ket{1}}{\sqrt{2}}$};
    \draw (0, 0) node {$\ket{1}$};
  \end{tikzpicture}
  \caption{Entailment strengths in the Bloch disc for the state with Bloch vector $(\frac{3}{4} \sin(\pi/5), \frac{3}{4} \cos(\pi/5))$.}
  \label{fig:entailment_strengths}
\end{figure}

\section{Main Results on Compositionality} 

We will now consider what happens when we have two sentences such that one of them contains one or more hyponyms of one or more words from the other. We will show that in this case the hyponymy is `lifted' to the sentence level, and that the $k$-values are preserved in a very intuitive fashion. After considering a couple of specific sentence constructions, we will generalise this result to account for a broad category of sentence patterns that work in the compositional distributional model. 
\subsection{$k$-hyponymy in positive transitive sentences} 
A positive transitive sentence has the diagrammatic representation in~$\CPM{\FHilb}$ given in figure~\ref{fig:mixed_tscpmc}.
The meaning of the sentence~\lang{subj verb obj} is given by:
     \[
     (\ep_N \ot 1_S \ot \ep_N)\left( \semantics{\lang{subj}} \ot \semantics{\lang{verb}} \ot \semantics{\lang{obj}} \right) \: ,
     \]
where the~$\ep_N$ and~$1_S$ morphisms are those from~$\CPM{\FHilb}$. We will represent the subject and object by: 
     \[
          \semantics{\lang{subj}} = \sum_{ik} a_{ik} \ket{n_i}\bra{n_k}  \hspace{0.5cm} \text{and} \hspace{0.5cm} \semantics{obj} = \sum_{jl} b_{jl} \ket{n_j}\bra{n_l} .
     \]
Finally, let the verb be given by: 
     \[
          \semantics{\lang{verb}} = \sum_{pqrtuv} C_{pqrtuv}  \ket{n_p} \bra{n_t} \ot \ket{s_q} \bra{s_u} \ot \ket{n_r} \bra{n_v} 
     \]

     \dm{I rewrote the below slightly, stripping out some redundancy. Check Martha that it's still ok and discuss improvements}
\begin{thm}
\label{thm:trans}
Let~$n_1, n_2, n_3, n_4$ be nouns with  corresponding density matrix representations~$\semantics{n_1}$, $\semantics{n_2}$, $\semantics{n_3}$ and~$\semantics{n_4}$,
such that~$n_1$ is a $k$-hyponym of~$n_2$ and~$n_3$ is a $l$-hyponym of~$n_4$. Then:
          \[
               \varphi \left( n_1  \: \lang{verb} \:  n_3  \right) \hypo_{kl} \varphi \left(  n_2 \: \lang{verb} \:  n_4 \right),
          \]
where~$\varphi = \ep_N \ot 1_S \ot \ep_N$ is the sentence meaning map for positive transitive sentences. 
\end{thm}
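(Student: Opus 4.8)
The plan is to reduce the statement to three elementary facts: (i) the sentence map $\varphi = \ep_N \ot 1_S \ot \ep_N$ is a morphism of $\CPM{\FHilb}$, hence a completely positive — in particular positive — $\mathbb{R}$-linear map on operators; (ii) the L\"owner order $\loewner$ is preserved by tensoring with a fixed positive operator on either side; and (iii) $\loewner$ is transitive and closed under multiplication by non-negative scalars. Facts (ii) and (iii) are immediate from bilinearity of $\ot$ together with the observation that the tensor product of positive operators is positive, and (i) holds because $\varphi$ is built from the structural morphisms of $\CPM{\FHilb}$ (the two $\ep_N$, the identity $1_S$, and the symmetry maps permuting the noun factors into the slots acted on by the $\ep_N$'s), each of which is completely positive, and completely positive maps compose and send positive operators to positive operators.

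First I would unfold the hypotheses: $n_1 \hypo_k n_2$ means $k\semantics{n_1} \loewner \semantics{n_2}$ and $n_3 \hypo_l n_4$ means $l\semantics{n_3} \loewner \semantics{n_4}$, with $k,l \in (0,1]$; in particular $\semantics{n_1},\dots,\semantics{n_4},\semantics{\lang{verb}}$ are positive operators and $kl \in (0,1]$. Next I would establish the operator inequality
\[
kl\,\bigl(\semantics{n_1} \ot \semantics{\lang{verb}} \ot \semantics{n_3}\bigr) \loewner \semantics{n_2} \ot \semantics{\lang{verb}} \ot \semantics{n_4}
\]
by two applications of (ii): tensoring $k\semantics{n_1} \loewner \semantics{n_2}$ on the right with the positive operator $\semantics{\lang{verb}} \ot \semantics{n_3}$ gives $k\,(\semantics{n_1} \ot \semantics{\lang{verb}} \ot \semantics{n_3}) \loewner \semantics{n_2} \ot \semantics{\lang{verb}} \ot \semantics{n_3}$, and tensoring $l\semantics{n_3} \loewner \semantics{n_4}$ on the left with the positive operator $\semantics{n_2} \ot \semantics{\lang{verb}}$ gives $l\,(\semantics{n_2} \ot \semantics{\lang{verb}} \ot \semantics{n_3}) \loewner \semantics{n_2} \ot \semantics{\lang{verb}} \ot \semantics{n_4}$. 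Multiplying the first inequality by $l \geq 0$ and chaining with the second by transitivity of $\loewner$ yields the display.

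Finally I would apply the positive linear map $\varphi$ to both sides: positivity gives $\varphi(X) \loewner \varphi(Y)$ whenever $X \loewner Y$ (since $\varphi(Y)-\varphi(X) = \varphi(Y-X) \geq 0$), and linearity gives $\varphi(kl\,X) = kl\,\varphi(X)$, so
\[
kl\,\varphi\bigl(\semantics{n_1} \ot \semantics{\lang{verb}} \ot \semantics{n_3}\bigr) \loewner \varphi\bigl(\semantics{n_2} \ot \semantics{\lang{verb}} \ot \semantics{n_4}\bigr),
\]
which, recalling that $\varphi(n_i\,\lang{verb}\,n_j)$ abbreviates $\varphi(\semantics{n_i} \ot \semantics{\lang{verb}} \ot \semantics{n_j})$, is precisely $\varphi(n_1\,\lang{verb}\,n_3) \hypo_{kl} \varphi(n_2\,\lang{verb}\,n_4)$ by the definition of $k$-hyponymy (the bound $kl \in (0,1]$ keeping us in the admissible range). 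The argument is essentially routine once the ingredients are in place; the only point requiring care — and the closest thing to an obstacle — is fact (i), namely checking that the full reduction morphism interpreting the sentence, symmetry maps included, is genuinely a $\CPM{\FHilb}$-morphism and therefore a completely positive, $\mathbb{R}$-linear map on the operator spaces, so that ``$\varphi$ positive and linear'' is legitimate. I would also note that, reading the chain of inequalities with $k = k_{\max}$ and $l = l_{\max}$, the same computation gives the maximal-$k$ bound $m_{\max} \geq k_{\max} l_{\max}$ for the induced sentence-level entailment strength.
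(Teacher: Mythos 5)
Your proof is correct and takes essentially the same route as the paper: the paper expands $\varphi\bigl((k\semantics{n_1}+D)\ot\semantics{\lang{verb}}\ot(l\semantics{n_3}+D')\bigr)$ directly and observes that the remainder beyond the $kl$-term is $\varphi$ applied to a sum of tensor products of positive operators, which is exactly the content of your two tensor-monotonicity steps, the transitivity chain, and the final application of the positive linear map $\varphi$. The only cosmetic difference is that you package the computation as Löwner-order lemmas rather than as a one-shot algebraic expansion.
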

\dm{I think this theorem could be clearer, and again could do with slightly more fanfare! Does it make any assumptions about the nature of the verb?}

\subsection{General Sentence $k$-hyponymy}
We can show that the application of $k$-hyponymy to various phrase types holds in the same way. In this section we provide a general proof for varying phrase types. We adopt the following conventions: 
\begin{itemize}
\item A \emph{positive phrase} is assumed to be a phrase in which individual words are upwardly monotone in the sense described by \cite{maccartney2007}. This means that, for example, the phrase does not contain any negations, including words like \emph{not}. 
\item The \emph{length} of a phrase is the number of words in it, not counting definite and indefinite articles. 
\end{itemize}

\begin{thm}[Generalised Sentence $k$-Hyponymy]
\label{thm:general}
Let~$\Phi$ and~$\Psi$ be two positive phrases of the same length and grammatical structure, expressed in the same noun spaces $N$ and sentence spaces $S$. Denote the nouns and verbs of~$\Phi$, in the order in which they appear, by~$A_1, \ldots , A_n$. Similarly, denote these in~$\Psi$ by~$B_1 \ldots B_n$. Let their corresponding density matrices be
denoted by~$\semantics{A_1}, \ldots , \semantics{A_n}$ and~$\semantics{B_1}, \ldots , \semantics{B_n}$ respectively.
Suppose that~$\semantics{A_i} \hypo_{k_i} \semantics{B_i}$ for~$i \in \{ 1 , \ldots , n \} $ and some~$k_i \in (0,1]$.
Finally, let~$\varphi$ be the sentence meaning map for both~$\Phi$ and~$\Psi$,
such that~$\varphi(\Phi)$ is the meaning of~$\Phi$ and~$\varphi(\Psi)$ is the meaning of~$\Psi$.
Then: 
\[
    \varphi(\Phi) \hypo_{k_1 \cdots k_n} \varphi(\Psi).
\]
so $k_1 \cdots k_n$ provides a lower bound on the extent to which $\varphi(\Phi)$ entails $\varphi(\Psi)$
\end{thm}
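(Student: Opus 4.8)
The plan is to lift the word-level inequalities through the sentence meaning map, using only that this map is multilinear and that every morphism of $\CPM{\FHilb}$ sends positive operators to positive operators.

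First I would unpack the hypothesis. By the definition of $k$-hyponymy, $\semantics{A_i} \hypo_{k_i} \semantics{B_i}$ says exactly that $E_i := \semantics{B_i} - k_i \semantics{A_i} \geq 0$, so $\semantics{B_i} = k_i \semantics{A_i} + E_i$ with both summands positive operators. Next I would pin down $\varphi$: since $\Phi$ and $\Psi$ have the same length and grammatical structure and live in the same $N$ and $S$, they are interpreted by the same type reduction, and hence by one and the same morphism $\varphi$ of $\CPM{\FHilb}$, which we may regard as acting on the tensor of the content-word density matrices, so that $\varphi(\Phi) = \varphi\bigl(\semantics{A_1} \ot \cdots \ot \semantics{A_n}\bigr)$ and $\varphi(\Psi) = \varphi\bigl(\semantics{B_1} \ot \cdots \ot \semantics{B_n}\bigr)$ (the function words, being shared, are folded into $\varphi$). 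Here the positive-phrase hypothesis does its work: it ensures $\varphi$ is built solely from the compact closed data $\ep$, $\eta$, identities and swaps of $\CPM{\FHilb}$, so $\varphi$ really is a $\CPM{\FHilb}$-morphism --- equivalently a completely positive map --- and therefore carries positive operators to positive operators; a negation would inject an order-reversing, non-completely-positive ingredient and destroy this. Being a $\CPM{\FHilb}$-morphism, $\varphi$ is also linear, and since tensoring of operators is multilinear, $(\rho_1, \dots, \rho_n) \mapsto \varphi(\rho_1 \ot \cdots \ot \rho_n)$ is multilinear.

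The last step is the computation. Substituting $\semantics{B_i} = k_i \semantics{A_i} + E_i$ into $\semantics{B_1} \ot \cdots \ot \semantics{B_n}$ and expanding multilinearly produces the term $(k_1 \cdots k_n)\,\semantics{A_1} \ot \cdots \ot \semantics{A_n}$ together with a collection of tensor products in each of which at least one factor is some $E_i$ and all factors are positive operators; each such term is therefore positive (a tensor product of positive operators is positive). Applying the linear, positivity-preserving map $\varphi$ and gathering the latter terms into one operator $E$ yields
\[
  \varphi(\Psi) \;=\; (k_1 \cdots k_n)\,\varphi(\Phi) \;+\; E ,
\]
where $E$, being a sum of positive operators, is positive. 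Hence $\varphi(\Psi) - (k_1 \cdots k_n)\,\varphi(\Phi) = E \geq 0$, i.e.\ $0 \loewner \varphi(\Psi) - (k_1 \cdots k_n)\,\varphi(\Phi)$; and as each $k_i \in (0,1]$ we have $k_1 \cdots k_n \in (0,1]$, so this is precisely $\varphi(\Phi) \hypo_{k_1 \cdots k_n} \varphi(\Psi)$. Exactly as in the single-operator case $E$ may be strictly larger than forced, so there can be a $k > k_1 \cdots k_n$ with $\varphi(\Psi) - k\,\varphi(\Phi) \geq 0$, which is why $k_1 \cdots k_n$ is only a lower bound.

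The single genuinely delicate point is positivity preservation by $\varphi$; granting it, the rest is the routine multilinear expansion above, which recovers Theorem~\ref{thm:trans} as the special case $n = 3$, $\varphi = \ep_N \ot 1_S \ot \ep_N$. Establishing it cleanly comes down to (i) checking that for a positive phrase the morphism delivered by $\cpmsem$ is genuinely a composite of cups, caps, identities and swaps in $\CPM{\FHilb}$, and (ii) appealing to the fact, built into Selinger's CPM construction, that morphisms of $\CPMC$ carry positive operators to positive operators.
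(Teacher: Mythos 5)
Your proposal is correct and follows essentially the same route as the paper's proof: decompose each $\semantics{B_i}$ as $k_i\semantics{A_i}$ plus a positive remainder, expand the tensor product multilinearly, and use complete positivity of $\varphi$ to conclude that the residual term is positive. Your additional remarks on why the positive-phrase hypothesis guarantees that $\varphi$ is a genuine $\CPM{\FHilb}$-morphism make explicit a step the paper's proof simply asserts, but do not change the argument.
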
 
Intuitively, this means that if (some of) the functional words of a sentence~$\Phi$ are $k$-hyponyms
of (some of) the functional words of sentence~$\Psi$, then this hyponymy is translated into sentence hyponymy. 
Upward-monotonicity is important here, and in particular implicit quantifiers. It might be objected that~\lang{dogs bark} should not imply~\lang{pets bark}. 
If the implicit quantification is universal, then this is true, however the universal quantifier is downward monotone, and therefore does not conform to the convention concerning positive phrases. 
If the implicit quantification is existential, then \lang{some dogs bark} does entail \lang{some pets bark}, and the problem is averted. 
Discussion of the behaviour of quantifiers and other word types is given in~\cite{maccartney2007}.

The quantity~$k_1 \cdots k_n$ is not necessarily maximal, and indeed usually is not. As we only have a lower bound, zero entailment strength between a pair of components
does not imply zero entailment strength between entire sentences. Results for phrases involving relative clauses may be found in appendix \ref{app:frobenius}.
\begin{cor}
\label{cor:strict}
Consider two sentences:
\begin{equation*}
  \Phi = \bigotimes_i \semantics{A_i} \qquad \Psi = \bigotimes_i \semantics{B_i}
\end{equation*}
such that for each~$i \in \{1,..., n\}$ we have~$\semantics{A_i} \loewner \semantics{B_i}$,
i.e. there is strict entailment in each component. Then there is strict entailment between the sentences~$\varphi(\Phi)$ and~$\varphi(\Psi)$.
\end{cor}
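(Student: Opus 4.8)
The plan is to obtain the corollary as an immediate specialisation of Theorem~\ref{thm:general} to the boundary value $k = 1$. First I would observe that for positive operators the relation $\semantics{A_i} \loewner \semantics{B_i}$ is, by Definition~\ref{dfn:lowner}, exactly the statement that $\semantics{B_i} - \semantics{A_i}$ is positive, i.e. $0 \loewner \semantics{B_i} - 1\cdot\semantics{A_i}$. Comparing with the definition of $k$-hyponym, this says precisely $\semantics{A_i} \hypo_{1} \semantics{B_i}$, with $1 \in (0,1]$. Hence the hypothesis ``strict entailment in each component'' is just the case $k_i = 1$ of the hypothesis of Theorem~\ref{thm:general}.

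Next I would invoke Theorem~\ref{thm:general} directly: the phrases $\Phi = \bigotimes_i \semantics{A_i}$ and $\Psi = \bigotimes_i \semantics{B_i}$ have, by assumption, the same length and grammatical structure and are expressed in the same noun and sentence spaces, so they share a single sentence meaning map $\varphi$. Feeding in the data $\semantics{A_i} \hypo_{1} \semantics{B_i}$ for all $i$, the theorem gives
\[
\varphi(\Phi) \hypo_{k_1\cdots k_n} \varphi(\Psi), \qquad k_1\cdots k_n = 1\cdots 1 = 1,
\]
so $\varphi(\Phi) \hypo_{1} \varphi(\Psi)$. Finally I would unfold the definition of $\hypo_1$ in the reverse direction: $\varphi(\Phi)\hypo_1\varphi(\Psi)$ means $0 \loewner \varphi(\Psi) - \varphi(\Phi)$, which is exactly $\varphi(\Phi) \loewner \varphi(\Psi)$, i.e. strict L\"owner-order entailment between the sentence meanings, as claimed.

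There is essentially no obstacle here; the only points requiring a moment's care are the two pieces of bookkeeping already used above — that $\hypo_1$ and $\loewner$ coincide on positive operators, which is immediate from the definition of $k$-hyponym at $k=1$, and that the lower bound $k_1\cdots k_n$ furnished by Theorem~\ref{thm:general} degenerates to the sharp value $1$ when every $k_i = 1$. One might additionally remark that this shows the lower bound of Theorem~\ref{thm:general} is tight in this extreme case, though that observation is not needed for the statement.
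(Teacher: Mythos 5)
Your proposal is correct and follows essentially the same route as the paper: the hypothesis $\semantics{A_i}\loewner\semantics{B_i}$ is read as $\semantics{A_i}\hypo_1\semantics{B_i}$, Theorem~\ref{thm:general} is applied with every $k_i=1$ to get $\varphi(\Phi)\hypo_1\varphi(\Psi)$, and unfolding $\hypo_1$ gives $\varphi(\Phi)\loewner\varphi(\Psi)$. No gaps.
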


We consider a concrete example.
\paragraph{Compositionality of $k$-hyponymy in a transitive sentence.}
More examples may be found in appendix \ref{app:examples}. Suppose we have a noun space $N$ with basis $\{\ket{e_i}\}_i$, and sentence space $S$ with basis $\{\ket{x_j}\}_j$
We consider the verbs $\lang{nibble}$, $\lang{scoff}$ and the nouns $\lang{cake}$, $\lang{chocolate}$, with semantics:
\ml{Perhaps I don't need to write all of these out.}
\begin{align*}
     \semantics{\lang{nibble}} & = \sum_{pqrtuv} a_{pqr}a_{tuv}\ket{e_p}\bra{e_t} \ot \ket{x_q}\bra{x_u} \ot \ket{e_r}\bra{e_v}\\
	\semantics{\lang{scoff}} & = \sum_{pqrtuv} b_{pqr}b_{tuv}\ket{e_p}\bra{e_t} \ot \ket{x_q}\bra{x_u} \ot \ket{e_r}\bra{e_v}\\
	\semantics{\lang{cake}} & = \sum_i c_ic_j\ket{e_i}\bra{e_j}\\
	\semantics{\lang{chocolate}} & = \sum_i d_id_j\ket{e_i}\bra{e_j}
\end{align*} 
which make these nouns and verbs pure states. The more general $\lang{eat}$ and $\lang{sweets}$ are given by:
\begin{align*}
     \semantics{\lang{eat}} & = \frac{1}{2}(\semantics{\lang{nibble}} + \semantics{\lang{scoff}})\\
     \semantics{\lang{sweets}} & = \frac{1}{2}(\semantics{\lang{cake}} + \semantics{\lang{chocolate}})
\end{align*} 
Then 
\begin{align*}
\semantics{\lang{scoff}} &\hypo_{1/2} \semantics{\lang{eat}}\\
\semantics{\lang{cake}} &\hypo_{1/2} \semantics{\lang{sweets}}
\end{align*}
We consider the sentences:
\begin{align*}
     s_1 &= \lang{John scoffs cake} \\
     s_2 &= \lang{John eats sweets}
\end{align*} 
The semantics of these sentences are:
\begin{align*}
\semantics{s_1} &= \varphi(\semantics{\lang{Mary}} \ot \semantics{\lang{scoffs}} \ot \semantics{\lang{cake}})\\
\semantics{s_2} &= \varphi(\semantics{\lang{Mary}} \ot \semantics{\lang{eats}} \ot \semantics{\lang{sweets}})
\end{align*}
and as per theorem~\ref{thm:general}, we will show that~$\semantics{s_1} \hypo_{kl} \semantics{s_2}$ where~$kl = \frac{1}{2} \times \frac{1}{2} = \frac{1}{4}$
Expanding~$\semantics{s_2}$ we obtain:
\begin{align*}
\semantics{s_2} &= \varphi(\semantics{\lang{Mary}} \ot \frac{1}{2}(\semantics{nibbles} + \semantics{\lang{scoffs}})\\
&\qquad \ot \frac{1}{2}(\semantics{\lang{cake}} + \semantics{\lang{choc}}))\\
&=\frac{1}{4}(\varphi(\semantics{\lang{Mary}} \ot \semantics{\lang{scoffs}}\ot \semantics{\lang{cake}})\\
&\qquad + \varphi(\semantics{\lang{Mary}} \ot \semantics{\lang{scoffs}}\ot \semantics{\lang{choc}})\\
&\qquad + \varphi(\semantics{\lang{Mary}} \ot \semantics{\lang{nibbles}}\ot \semantics{\lang{cake}})\\
&\qquad + \varphi(\semantics{\lang{Mary}} \ot \semantics{\lang{nibbles}}\ot \semantics{\lang{choc}}))\\
&=\frac{1}{4}\semantics{s_1} + \frac{1}{4}(\varphi(\semantics{\lang{Mary}} \ot \semantics{\lang{scoffs}}\ot \semantics{\lang{choc}})\\
&\qquad + \varphi(\semantics{\lang{Mary}} \ot \semantics{\lang{nibbles}}\ot \semantics{\lang{cake}})\\
&\qquad + \varphi(\semantics{\lang{Mary}} \ot \semantics{\lang{nibbles}}\ot \semantics{\lang{choc}}))
\end{align*}
Therefore:
\begin{align*}
\semantics{s_2} - \frac{1}{4}\semantics{s_1} & = \varphi(\semantics{\lang{Mary}} \ot \semantics{\lang{choc}}\ot \semantics{\lang{choc}})\\
&\qquad + \varphi(\semantics{\lang{Mary}} \ot \semantics{\lang{nibbles}}\ot \semantics{\lang{cake}})\\
&\qquad + \varphi(\semantics{\lang{Mary}} \ot \semantics{\lang{nibbles}}\ot \semantics{\lang{choc}}))
\end{align*}
We can see that~$\semantics{s_2} - \frac{1}{4}\semantics{s_1}$ is positive by
positivity of the individual elements and the fact that positivity is preserved under addition and tensor product.  Therefore: 
\[
\semantics{s_1} \hypo_{kl} \semantics{s_2}
\]
as required.

\section{Conclusion}
Integrating a logical framework with compositional distributional semantics is an important step in improving this model of language.
By moving to the setting of density matrices, we have described a graded measure of entailment that may be used to describe the extent of
entailment between two words represented within this enriched framework. This approach extends uniformly to provide entailment strengths
between phrases of any type. We have also shown how a lower bound on entailment strength of phrases of the same structure can be
calculated from their components.

We can extend this work in several directions. Firstly, we can examine how narrowing down a concept using an adjective might operate.
For example, we should have that~\lang{red car} entails~\lang{car}. Other adjectives should not operate in this way, such as~\lang{former} in~\lang{former president}.
 
Another line of inquiry is to examine transitivity behaves. In some cases entailment can strengthen.
We had that \lang{dog} entails \lang{pet} to a certain extent,
and that \lang{pet} entails \lang{mammal} to a certain extent, but that \lang{dog} completely entails \lang{mammal}.

Our framework supports different methods of scaling the positive operators representing propositions. Empirical work will
be required to establish the most appropriate method in linguistic applications.
 
Sentences with non-identical structure must also be taken into account.
One approach to this might be to look at the first stage  in the sentence reductions at which the elementwise comparison can be made.
For example, in the two sentences \lang{John runs very slowly}, and \lang{Hungry boys sprint quickly},
we can compare the noun phrases \lang{John}, and \lang{Hungry boys}, the verbs \lang{runs} and \lang{sprints},
and the adverb phrases \lang{very slowly} and \lang{quickly}.
Further, the inclusion of negative words like \lang{not}, or negative prefixes, should be modelled.

\acks
Bob Coecke, Martha Lewis, and Daniel Marsden gratefully acknowledge funding from AFOSR grant Algorithmic and Logical Aspects when Composing Meanings.


\bibliographystyle{abbrvnat}
\bibliography{hyponymy}
%
%
%

\clearpage

\appendix
\section{Proofs}
\label{app:proofs}

\begin{proof}[Proof of Theorem \ref{thm:kmax}]
  We wish to find the maximum~$k$ for which
  \begin{equation*}
    \forall \ket{x} \in \mathbb{R}^n .\bra{x}(B - pA)\ket{x} \geq 0
  \end{equation*}
  Since~$supp(A) \subseteq supp(B)$, such a~$k$ exists.
  We assume that for~$k = 1$, there is at least one~$\ket{x}$ such that~$\bra{x}(B - kA)\ket{x} \leq 0$, since otherwise we're done.
  For all~$\ket{x} \in \mathbb{R}^n$, $\bra{x}(B - kA)\ket{x}$ increases continuously as~$k$ decreases.
  We therefore decrease~$k$ until~$\bra{x}(B - kA)\ket{x} \geq 0$, and there will be at least one~$\ket{x_0}$ at which~$\bra{x_0}(B - kA)\ket{x_0} = 0$.
  These points are minima so that the vector of partial derivatives~$\nabla \bra{x_0}(B - k_0A)\ket{x_0}$ will be zero everywhere.
\[
\nabla \bra{x_0}(B - k_0A)\ket{x_0} = 2(B - k_0A)\ket{x_0} = \overrightarrow{0}
\]
(requires $B$, $A$ self-adjoint).

Therefore~$B\ket{x_0} = k_0A\ket{x_0}$, and so~$1/k_0B^+B\ket{x_0} = B^+A\ket{x_0}$. Since~$B^+ B$ is a projector onto the support of~$B$ and~$supp(A) \subseteq supp(B)$, we have: 
\[
1/k_0\ket{v_0} = B^+A\ket{v_0}
\]
where~$\ket{v_0} = B^+B\ket{x_0}$, i.e., $1/k_0$ is an eigenvalue of~$B^{+}A$. 

Now, $B^+A$ has only non-negative eigenvalues, and in fact any pair of eigenvalue~$1/k$ and eigenvector~$\ket{v}$ will satisfy the condition~$B\ket{v} = kA\ket{v}$.
We now claim that to satisfy~$\forall \ket{x} \in \mathbb{R}^n .\bra{x}(B - kA)\ket{x} \geq 0$,
we must choose~$k_0$ equal to the reciprocal of the maximum eigenvalue~$\lambda_0$ of $B^+A$.
For a contradiction, take~$\lambda_1 < \lambda_0$, so~$1/\lambda_1 = k_1 > k_0 = 1/\lambda_0$.
Then we require that~$\forall \ket{x} \in \mathbb{R}^n .\bra{x}(B - k_1A)\ket{x} \geq 0$, and in particular for~$\ket{v_0}$. However:
\begin{align*}
  \bra{v_0}(B - k_1A)\ket{v_0} \geq 0 &\iff \bra{v_0}B\ket{v_0} \geq k_1\bra{v_0}A\ket{v_0}\\
  &\iff k_0\bra{v_0}A\ket{v_0} \geq k_1\bra{v_0}A\ket{v_0}\\
  &\text{contradiction, since $k_0 < k_1$}
\end{align*}
We therefore choose~$k_0$ equal to~$1/\lambda_0$ where~$\lambda_0$ is the maximum eigenvalue of~$B^+A$,
and~$\bra{x}(B - k_0A)\ket{x} \geq 0$ is satisfied for all~$\ket{x} \in \mathbb{R}^n$. 
\end{proof}

\begin{proof}[Proof of Theorem \ref{thm:continuity}]
We wish to show that we can choose $\delta$ such that $|k - k'| < \varepsilon$. We use the notation $\lambda_\lang{max}(A)$ for the maximum eigenvalue of $A$, and $A^+$ for the Moore-Penrose pseudo-inverse of $A$.
$A' = A + \delta\rho$ satisfies the condition of theorem \ref{thm:kmax}, that $supp(A') \subseteq supp(B)$, since suppose $\ket{x} \not \in supp(B)$. $supp(A) \subseteq supp(B)$, so $\ket{x} \not \in supp(A)$ and $A\ket{x} = 0$. Similarly, $\rho\ket{x} = 0$. Therefore $(A + \rho)\ket{x} = A'\ket{x} = 0$, so $\ket{x} \not \in supp(A')$.

By theorem  \ref{thm:kmax} we have:
\begin{align*}
k &= \frac{1}{\lambda_\lang{max}(B^+A)}\\
k'&= \frac{1}{\lambda_\lang{max}(B^+A')}\\
\end{align*}
\begin{equation}
\label{eq:diffk}
k - k' = \frac{\lambda_\lang{max}(B^+A') - \lambda_\lang{max}(B^+A)}{\lambda_\lang{max}(B^+A')\lambda_\lang{max}(B^+A)}
\end{equation}
The denominator of \ref{eq:diffk} we may treat as a constant. We expand the numerator and apply Weyl's inequalities \cite{weyl1912}. These inequalities apply only to Hermitian matrices, whereas we need to apply these to products of Hermitian matrices. Note that since $B^+$, $A$, and $\rho$ are all real-valued positive semidefinite, the products $B^+A$ and $B^+\rho$ have the same eigenvalues as the Hermitian matrices $A^\frac{1}{2}B^+A^\frac{1}{2}$ and $\rho^\frac{1}{2}B^+\rho^\frac{1}{2}$ which are Hermitian. Now:
\begin{align*}
\lambda_\lang{max}(B^+A') - \lambda_\lang{max}(B^+A) &= \lambda_\lang{max}(B^+A + \delta B^+\rho) - \lambda_\lang{max}(B^+A)\\
& \leq \lambda_\lang{max}(B^+A) + \delta \lambda_\lang{max}(B^+\rho) - \lambda_\lang{max}(B^+A)\\ 
& = \delta \lambda_\lang{max}(B^+\rho)\\
& \leq \delta \lambda_\lang{max}(B^+)\lambda_\lang{max}(\rho) \leq \delta \lambda_\lang{max}(B^+)
\end{align*}
Therefore:
\begin{equation}
k - k' \leq \delta \frac{\lambda_\lang{max}(B^+)}{\lambda_\lang{max}(B^+A')\lambda_\lang{max}(B^+A)}
\end{equation}
so that given $\varepsilon$, $A$, $B$, we can always choose a $\delta$ to make $k - k' \leq \varepsilon$
\end{proof}

\begin{proof}[Proof of Theorem \ref{thm:trans}]
  Let the density matrix corresponding to the verb be given by~$\semantics{Z}$ and the linear map~$(\ep_N \ot 1_S \ot \ep_N)$ be given by~$\varphi$.
  Then we can write the meanings of the two sentences as: 
\[
 \varphi \left( \semantics{n_1} \otimes \semantics{verb} \otimes{n_3} \right)= \varphi \left( \semantics{n_1} \ot \semantics{Z} \ot \semantics{n_3} \right)
\]
\[
\varphi \left( \semantics{n_2} \otimes \semantics{verb} \otimes \semantics{n_4} \right) = \varphi \left( \semantics{n_2} \ot \semantics{Z} \ot \semantics{n_4} \right)
\]
Substituting~$\semantics{n_2} = k\semantics{n_1} + D$ and~$\semantics{n_4} = l \: \semantics{n_3} + D'$ in the expression for the meaning of~``$n_2$ verb $n_4$'' gives:
\begin{align*}
&\varphi(\semantics{n_2} \ot \semantics{Z} \ot \semantics{n_4}) = \varphi (( k\, \semantics{n_1} + D ) \ot \semantics{Z} \ot ( l \, \semantics{n_3} + D')) \\                                                                                     
& \qquad = kl\varphi(\semantics{n_1} \ot \semantics{Z} \ot \semantics{n_3}) + \varphi(( k\: \semantics{n_1} \ot \semantics{Z} \ot D' ) \\
& \qquad \qquad + (D \ot \semantics{Z} \ot l\, \semantics{n_3})  + (D \ot \semantics{Z} \ot D' ))\\
\end{align*}
Therefore, $\varphi(\semantics{n_2} \ot \semantics{Z} \ot \semantics{n_4} ) - kl\,\varphi(\semantics{n_1} \ot \semantics{Z} \ot \semantics{n_3} )$
is equal to~$\varphi(( k\: \semantics{n_1} \ot \semantics{Z} \ot D' ) + (D \ot \semantics{Z} \ot l\, \semantics{n_3})  + (D \ot \semantics{Z} \ot D' ))$
which is positive by positivity of~$\semantics{n_1}$, $\semantics{Z}$, $\semantics{n_3}$, $D$, $D'$, and the scalars~$k$ and~$l$.
Therefore:
\[
\varphi(\semantics{n_1} \ot \semantics{Z} \ot \semantics{n_3} ) \hypo_{kl} \varphi(\semantics{n_2} \ot \semantics{Z} \ot \semantics{n_4} ) 
\]
as needed.
\end{proof}

\begin{proof}[Proof of Theorem \ref{thm:general}]
  First of all, we have~$\semantics{A_i} \hypo_{k_i} \semantics{B_i}$ for~$i \in \{1, ..., n\}$.
  This means that for each $i$, we have positive matrices~$\rho_{i}$ and non-negative reals~$k_i$ such that~$\semantics{B_i} = k_i \semantics{A_i} + \rho_i$.
  Now consider the meanings of the two sentences. We have: 
  \begin{align*}
    \varphi(\Phi) &= \phi(\semantics{A_1} \ot \ldots \ot \semantics{A_n}), \\
    \varphi(\Psi) & = \varphi(\semantics{B_1} \ot \ldots \ot \semantics{B_n}) \\
    & = \varphi \left( (k_1 \semantics{A_1} + \rho_1) \ot \ldots \ot (k_n\, \semantics{A_n} + \rho_n  \right)\\
    &= (k_1 \cdots k_n)\varphi(\semantics{A_1} \ot ... \ot \semantics{A_n}) + \varphi(P)
  \end{align*}
  where~$P$ consists of a sum of tensor products of positive matrices, namely:
  \[
  P = \sum_{S\subset\{1, ..., n\}} \bigotimes_{i = 1}^n \sigma_i
  \]
  where:
  \begin{align}
    \sigma_i = \begin{cases}
      k_i \semantics{A_i} &\quad  \text{ if $i \in S$}\\
      \rho_i & \quad \text{ if $i \not \in S$}\\
    \end{cases}
  \end{align}
  Then we have:
  \[
  \varphi(\Psi) - (k_1 ... k_n) \varphi(\Phi) = \varphi(P) \geq 0
  \]
  since~$P$ is a sum of tensor products of positive matrices, and~$\varphi$ is a completely positive map. Therefore:
  \[
  \varphi(\Phi) \hypo_{k_1 \cdots k_n} \varphi(\Psi)
  \]
  as required.
\end{proof}

\begin{proof}[Proof of Corollary \ref{cor:strict}]
  Since~$k_i = 1$ for each~$i = \{1, ..., n\}$, 
  \begin{align*}
    \varphi(\Phi) \hypo_{k_1 \cdots k_n} \varphi(\Psi) &\implies \varphi(\Phi) \hypo_1 \varphi(\Psi)\\
    & \implies \varphi(\Phi) \leq \varphi(\Psi)
  \end{align*}
\end{proof}
We refer to the following simple factor, verified here:
\begin{lem}
  Let~$\sigma, \tau$ be density operators. Then:
  \begin{equation*}
    \sigma \loewner \tau \quad\Rightarrow\quad \sigma = \tau
  \end{equation*}
\end{lem}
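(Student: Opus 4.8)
The plan is to exploit the trace normalization of density operators together with the basic fact that a positive operator has non-negative trace, with equality only for the zero operator. First I would note that $\sigma \loewner \tau$ means, by Definition~\ref{dfn:lowner}, that $\tau - \sigma$ is a positive operator. Since positivity is preserved under the obvious decomposition and the trace is linear, I would compute
\[
\trace(\tau - \sigma) = \trace(\tau) - \trace(\sigma) = 1 - 1 = 0,
\]
using that both $\sigma$ and $\tau$ are density operators and hence have trace $1$.

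Next I would invoke the spectral theorem: the positive operator $\tau - \sigma$ is self-adjoint, so it is diagonalizable with a real spectrum, and positivity forces all its eigenvalues to be non-negative. Its trace is the sum of these eigenvalues, which we have just shown is $0$; a sum of non-negative reals equal to zero forces each of them to be zero. Hence $\tau - \sigma$ has all eigenvalues equal to $0$, and being diagonalizable it must be the zero operator. Therefore $\sigma = \tau$.

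I do not anticipate a genuine obstacle here; the only point requiring a little care is the justification that a positive operator with vanishing trace is zero, which is exactly where the spectral decomposition (rather than, say, a purely order-theoretic argument) is needed. Everything else is a one-line trace computation.
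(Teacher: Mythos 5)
Your argument is correct and follows essentially the same route as the paper: observe that $\tau - \sigma$ is positive, use trace normalization of density operators to conclude $\trace(\tau - \sigma) = 0$, and then invoke the fact that a positive operator with vanishing trace is zero. The only difference is that you spell out the spectral-theorem justification of that last fact, which the paper leaves implicit.
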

\begin{proof}
  If $\sigma \loewner \tau$ then $\tau - \sigma$ is positive, and clearly:
  \begin{equation*}
    \sigma + (\tau - \sigma) = \tau
  \end{equation*}
  therefore, applying the trace and noting density operators all have trace 1:
  \begin{equation*}
    \trace(\tau - \sigma) = 0
  \end{equation*}
  and as $\tau - \sigma$ is positive, it must be the zero operator.
\end{proof}

\subsection{Incorporating the Bayesian Order}
\label{sec:Bayesian}
We can work with the Bayesian order on density operators~\citep{coecke2011a}. In order
to do this, we apply the following operations to transform our density operators:
\begin{enumerate}
\item Diagonalize the operator, choosing a permutation of the basis vectors such that the diagonal elements are in descending order.
\item Let~$d_i$ denotes the $i^{th}$ diagonal element. We define the diagonal of a new diagonal matrix inductively as follows:
  \begin{equation*}
    d'_0 = d_0 \qquad\qquad d'_{i+1} = d'_{i} * d_{i + 1}
  \end{equation*}
\item Transform the new operator back to the original basis
\end{enumerate}

\section{Examples}
\label{app:examples}
\subsection{Examples of $k$-hyponymy in positive transitive sentences}
\label{sec:ktrans}
In this section we give three toy examples of the use of $k$-hyponymy in positive transitive sentences. Each example uses a different sentence space.

\subsubsection{Truth-theoretic sentence spaces} 
The following example illustrates what happens to positive transitive sentence hyponymy if we take a truth-theoretic approach to sentence meaning. Suppose that our sentence space $S$ is 1-dimensional, with its single non-trivial vector being $\ket{1}$ . We will take $\ket{1}$ to stand for \emph{True} and the origin $0$ for \emph{False}.
The sentences we will consider are: 
\begin{align*}
  s_1 &= \lang{Annie enjoys holidays}\\
  s_2 &= \lang{Students enjoy holidays} 
\end{align*} 
Let the vector space for the subjects of the sentences be~$\mathbb{R}^3$ with chosen basis~$\{ \ket{e_1}, \ket{e_2}, \ket{e_3}  \}$. Let:
\[
\semantics{\lang{Annie}} = \ket{e_1}\bra{e_1}, \, \semantics{\lang{Betty}} = \ket{e_2}\bra{e_2}, \, \semantics{\lang{Chris}} = \ket{e_3}\bra{e_3}
\]
Let the object vector space be~$\mathbb{R}^n$ for some arbitrary~$n \in \mathbb{N}$,
where we take~$\{\ket{v_i}\}_i$ to be the standard basis for~$\mathbb{R}^n$,
where~$\ket{v_i}$ has 1 in the~$i$th position and 0 elsewhere. Let $\ket{\lang{holidays}} = \ket{v_1}$. We
will treat the word \lang{students} as being a hypernym of the individual students in our universe. 
\[
\semantics{\lang{students}} = \frac{1}{3} \, \semantics{\lang{Annie}} + \frac{1}{3} \, \semantics{\lang{Betty}} + \frac{1}{3} \, \semantics{\lang{Chris}}
\]
We have the choices for normalization that we outlined in section~\ref{sec:norm}.
Since we are viewing the sentence space as truth-theoretic, we keep the normalization to trace 1.

Finally, let the verb be given by:
\[
\semantics{\lang{enjoy}} = \sum_{\substack{(p,q)\in R \\ (r,s) \in R}} \ket{e_p}\bra{e_r} \ot \ket{v_q}\bra{v_s}.
\]
where~$R = \{(i, j) | \ket{e_i} enjoys \ket{v_j}\}$

Suppose that Annie and Betty are known to enjoy holidays, while Chris does not. Clearly, we have that:
\[
\semantics{\lang{Annie}} \hypo_{k_{max}} \semantics{\lang{students}}
\]for $k_{max} = \frac{1}{3}$ since:
\[
\semantics{\lang{students}} - \frac{1}{3} \semantics{\lang{Annie}} = \frac{1}{3}(\semantics{\lang{Betty}} + \semantics{\lang{Chris}}) \geq 0
\] 
and any higher value than~$\frac{1}{3}$ will no longer be positive.

We will see that the $k$-hyponymy for~$k = \frac{1}{3}$ does translate into~$k$-hyponymy of sentence~$s_1$ to sentence~$s_2$.
First of all, consider the meanings of the two sentences:
\dm{Are these two correct? I see $v_i$ terms appearing in the expression for $s_1$, but not for $s_2$}
\ml{They were correct, but I have added a few extra lines to clarify.} 
\begin{align*}
  \semantics{s_1} & = (\ep_N \ot 1_S \ot \ep_N ) (\semantics{\lang{Annie}} \ot \semantics{\lang{enjoys}} \ot \semantics{\lang{holidays}}) \\
  &= (\ep_N \ot 1_S \ot \ep_N )(\ket{e_1}\bra{e_1}\\
  & \qquad \ot \left(\sum_{\substack{(p,q)\in R \\ (r,s) \in R}} \ket{e_p}\bra{e_r} \ot \ket{v_q}\bra{v_s} \right) \ot \ket{v_1}\bra{v_1})\\
&= \sum_{\substack{(p,q)\in R \\ (r,s) \in R}} \braket{ e_1  | e_p} \braket{ e_1 | e_r} \braket{ v_q | v_1}\braket{ v_s | v_1 } \\
&= \sum_{\substack{(p, q)\in R \\ (r,s) \in R}} \delta_{1p}\, \delta_{1r} \, \delta_{q1} \, \delta_{s1} = \sum_{\substack{(1,1)\in R \\ (1,1) \in R}} 1 \: \: \: = 1 
\end{align*}

\begin{align*}
\semantics{s_2} & = (\ep_N \ot 1_S \ot \ep_N) \left( \semantics{students} \ot \semantics{enjoy} \ot \semantics{holidays} \right) \\
&= (\ep_N \ot 1_S \ot \ep_N )(\frac{1}{3}(\ket{e_1}\bra{e_1} + \ket{e_2}\bra{e_2} + \ket{e_3}\bra{e_3}
\\& \qquad \ot \left(\sum_{\substack{(p,q)\in R \\ (r,s) \in R}} \ket{e_p}\bra{e_r} \ot \ket{v_q}\bra{v_s} \right) \ot \ket{v_1}\bra{v_1})\\
& = \frac{1}{3}   \sum_{\substack{(p,q)\in R \\ (r,s) \in R}} ( \braket{ e_1 | e_p}\braket{ e_1 | e_r} + \braket{ e_2 | e_p}\braket{ e_2 | e_r} \\
& \qquad +  \braket {e_3 | e_p}\braket{ e_3 | e_r })\braket{ v_q | v_1}\braket{ v_s | v_1 }\\
& = \frac{1}{3}   \sum_{\substack{(p,1)\in R \\ (r,1) \in R}} ( \braket{ e_1 | e_p}\braket{ e_1 | e_r} + \braket{ e_2 | e_p}\braket{ e_2 | e_r} \\
&\qquad +  \braket {e_3 | e_p}\braket{ e_3 | e_r })\\
&  = \frac{1}{3} \times 2 = \frac{2}{3} 
\end{align*}
Clearly, we have that~$\semantics{s_1} \hypo_k \semantics{s_2} $ for~$k = \frac{1}{3}$, as $\frac{2}{3} - \frac{1}{3}\times 1 \geq 0$,
but this is not the maximum value of~$k$ for which this $k$-hyponymy holds.
The maximum value for which this works is~$k= \frac{2}{3}$.
  
\subsubsection{Simple case of object hyponymy}
We now give a simple case with a non-truth-theoretic sentence space.
We show that the $k$-hyponymy of the objects of two sentences translates into $k$-hyponymy between the sentences,
and that in this case the maximality of the value of~$k$ is also preserved. 

Let~$m \in \mathbb{N}$, $m > 2$ be such that $\{ \ket{n_i} \}_{i=1}^{m}$ is a collection of standard basis vectors for~$\mathbb{R}^m$. We will use the nouns: 
\[
\semantics{\lang{Gretel}} = \ket{n_1}\bra{n_1}, \hspace{0.3cm} \semantics{\lang{gingerbread}} = \ket{n_2} \bra{n_2}
\]
\[
\semantics{\lang{cake}} = \ket{n_3} \bra{n_3}, \hspace{0.3cm} \semantics{\lang{pancakes}} = \ket{n_4} \bra{n_4}
\]
Let the density matrix corresponding to the hypernym \lang{sweets} be given by: 
\[
\semantics{\lang{sweets}} = \frac{1}{10} \ket{n_2}\bra{n_2} + \sum_{i=3}^{m} p_i \, \ket{n_i}\bra{n_i}.
\]
Our object and subject vector space will be~$\mathbb{R}^m$ and for the sentence space we take~$S = \mathbb{R}^m \ot \mathbb{R}^m$.
Using this sentence space simplifies the calculations needed, as shown in~\cite{Grefenstette2011}.
For the rest of this example, we will adopt the following of notation for the purpose of brevity: 
\[
\ket{x_{jk}} = \ket{n_j}\ket{n_k}, \hspace{0.3cm} \bra{x_{jk}} = \bra{n_j} \bra{n_k}, 
\]
\[
\ket{x_{ij}}\bra{x_{kl}} = \ket{n_i}\bra{n_k} \ot \ket{n_j}\bra{n_l}.
\]
Then the density matrix representation of our verb becomes:
\[
\semantics{\lang{likes}} = \sum_{jklp} C_{jklp}\ket{n_j}\bra{n_l} \ot \ket{x_{jk}}\bra{x_{lp}} \ot \ket{n_k}\bra{n_p}
\]
 We will consider the following two sentences:
\begin{align*}
     s_1 &= \lang{Gretel likes sweets} \\
     s_2 &= \lang{Gretel likes gingerbread} 
\end{align*} 
     Let the corresponding sentence meanings be given by: 
\begin{align*}
\semantics{s_1} &= (\ep_N \ot 1_S \ot \ep_N)\left( \semantics{\lang{Gretel}} \ot \semantics{\lang{likes}} \ot \semantics{\lang{sweets}} \right) \\
\semantics{s_2} & = (\ep_N \ot 1_S \ot \ep_N) \left(\semantics{\lang{Gretel}} \ot \semantics{\lang{likes}} \ot \semantics{\lang{gingerbread}} \right) 
\end{align*}
Observe that:
\[
\semantics{\lang{gingerbread}} \hypo_k \semantics{\lang{sweets}} \hspace{0.3cm} \text{for} \hspace{0.2cm} k \leq \frac{1}{10} .
\]
In particular, we have~$k_{max}$-hyponymy between \lang{gingerbread} and \lang{sweets} for~$k_{max} = \frac{1}{10}$.
We will now show that this hyponymy translates to the sentence level. With~$\varphi = \ep_N \ot 1_S \ot \ep_N$ and~$\rho = \sum_{i=3}^{m} p_i \ket{n_i}\bra{n_i}$ we have:
\begin{align*}
  \semantics{s_1} &= \varphi\left( \ket{n_1}\bra{n_1} \: \ot \semantics{\lang{likes}}  \ot \left( \frac{1}{10} \,\ket{n_2} \bra{n_2} + \rho  \right) \right) \\
  & = \frac{1}{10}\, \varphi\left( \ket{n_1}\bra{n_1} \: \ot \semantics{\lang{likes}} \ot \: \ket{n_2} \bra{n_2} \right) \\
  & \qquad + \varphi \left( \ket{n_1}\bra{n_1} \: \ot \semantics{\lang{likes}} \ot \rho \right) \\
  \semantics{s_2} & =  \varphi \left( \ket{n_1}\bra{n_1}  \ot \semantics{\lang{likes}} \ot \ket{n_2} \bra{n_2} \right) 
\end{align*} 
We claim that the maximum $k$-hyponymy between~$\semantics{s_2}$ and~$\semantics{s_1}$ is achieved for~$k = \frac{1}{10}$.
In other words, this is the maximum value of~$k$ for which we have~$\semantics{s_2} \hypo_p \semantics{s_1}$, i.e. $\semantics{s_1} - p \, \semantics{s_2} \psd 0$.
We first show that~$\semantics{s_1} - \frac{1}{10}\semantics{s_2}$ is positive.
\begin{align*} \label{eq:2}
  \semantics{s_1} - \frac{1}{10} \semantics{s_2} &= \varphi \left( \ket{n_1}\bra{n_1} \: \ot \semantics{\lang{likes}} \ot \rho \right) \\
  & = (\ep_N \ot 1_S \ot \ep_N) \left( \ket{n_1}\bra{n_1} \: \ot \semantics{\lang{likes}} \ot \rho \right)\\
  &= \sum_{i=3}^{m}C_{1i1i} p_i \, \ket{n_1}\bra{n_1} \ot \ket{n_i}\bra{n_i}      
\end{align*}
This is positive by positivity of~$C_{1i1i}$ and~$p_i$.

For a value of~$k = \frac{1}{10} + \epsilon$, by a similar calculation we obtain:
\[
\varphi(\semantics{s_1} - k\semantics{s_2}) = \sum_{i=3}^{m} C_{1i1i} p_i \ket{x_{1i}}\bra{x_{1i}}  - \epsilon\ket{x_{12}}\bra{x_{12}}
\]
We then note that:
\[
\bra{x_{12}}(\varphi(\semantics{A} - k\semantics{B}))\ket{x_{12}} = -\epsilon
\]
and therefore~$k = \frac{1}{10}$ is maximal.

\subsubsection{$k$-hyponymy for positive transitive sentences}

Now suppose that the subject and object vector spaces are two-dimensional with bases~$\ket{e_1}, \ket{e_2}$
and~$\ket{n_1}, \ket{n_2}$ respectively. We let: 
\[
\semantics{\lang{Hansel}} = \ket{e_1}\bra{e_1}, \hspace{0.5cm} \semantics{Gretel} = \ket{e_2}\bra{e_2}
\]
\[
\semantics{\lang{gingerbread}} = \ket{n_1}\bra{n_1} , \hspace{0.5cm} \semantics{cake} = \ket{n_2}\bra{n_2}
\]
The density matrices for the hypernyms \lang{the siblings} and \lang{sweets} are: 
\[
\semantics{\lang{the siblings}} = \frac{1}{2}\, \semantics{\lang{Hansel}} + \frac{1}{2} \, \semantics{\lang{Gretel}}
\]
\[
\semantics{\lang{sweets}} = \frac{1}{2}\, \semantics{\lang{gingerbread}} + \frac{1}{2} \, \semantics{\lang{cake}} .
\]     
The verb \lang{like} is given as before and we assume that Gretel likes gingerbread but not cake and Hansel likes both.
The sentence reduction map~$\varphi$ is again~$(\ep_N \ot 1_S \ot \ep_N)$.  Then we have: 
\begin{align*}
  &\semantics{s_1} = \varphi\left( \semantics{\lang{Gretel}} \ot \semantics{\lang{likes}} \ot \semantics{\lang{gingerbread}} \right) \\
  &\semantics{s_2} = \varphi \left( \semantics{\lang{the siblings}} \ot \semantics{\lang{like}} \ot \semantics{\lang{sweets}} \right) \\
  & =\frac{1}{4} \, \varphi \left(\semantics{\lang{Gretel}} \ot \semantics{\lang{likes}} \ot \semantics{\lang{gingerbread}} \right)\\
  &\qquad + \frac{1}{4} \,\varphi \left( \semantics{\lang{Gretel}} \ot \semantics{\lang{likes}} \ot \semantics{\lang{cake}} \right) \\ 
  & \qquad + \frac{1}{4} \, \varphi \left( \semantics{\lang{Hansel}} \ot \semantics{\lang{likes}} \ot \left(\semantics{\lang{gingerbread}} + \semantics{\lang{cake}} \right) \right) 
\end{align*}
We then have:
\begin{align*}
  & \semantics{s_2} - \frac{1}{4} \semantics{s_1} =  \frac{1}{4} \varphi \left( \semantics{\lang{Gretel}} \ot \semantics{\lang{likes}} \ot \semantics{\lang{cake}} \right) \\
  & \quad + \left( \semantics{\lang{Hansel}} \ot \semantics{\lang{likes}} \ot \left(\semantics{\lang{gingerbread}} + \semantics{\lang{cake}} \right) \right) \\
  &= \frac{1}{4}(\ket{x_{22}} \bra{x_{22}} + \ket{x_{11}}\bra{x_{11}} + \ket{x_{12}}\bra{x_{12}})
\end{align*}
which is clearly positive.

Again, $k = \frac{1}{4}$ is maximal, since taking~$k' = \frac{1}{4} + \epsilon$ gives us the following:
\begin{align*}
  & \semantics{s_2} - k'\semantics{s_1}  =  \frac{1}{4} \varphi \left( \semantics{\lang{Gretel}} \ot \semantics{\lang{like}} \ot \semantics{\lang{cake}} \right) \\
  & + \left( \semantics{\lang{Hansel}} \ot \semantics{\lang{like}} \ot \left(\semantics{\lang{gingerbread}} + \semantics{\lang{cake}} \right) \right) \\
  & - \epsilon \left(\semantics{\lang{Gretel}} \ot \semantics{\lang{like}} \ot \semantics{\lang{gingerbread}} \right)\\
  & = \frac{1}{4}(\ket{x_{22}} \bra{x_{22}} + \ket{x_{11}}\bra{x_{11}} + \ket{x_{12}}\bra{x_{12}}) \\
  & \qquad - \epsilon (\ket{x_{21}}\bra{x_{21}})
\end{align*}
Then~$\semantics{s_2} - k'\semantics{s_1}$ is no longer positive, since:
\[
\bra{x_{21}}(\semantics{B} - k'\semantics{A})\ket{x_{21}} = -\epsilon
\]
and therefore~$\frac{1}{4}$ is maximal.

In these last two examples, the value of~$k$ that transfers to the sentence space is maximal. In general this will not be the case.
The reason that the maximality of the~$k$ transfers in these examples is due to the orthogonality of the noun vectors that we work with.

\section{Applying the theory to Frobenius Algebras}
\label{app:frobenius}
This appendix details techniques that we have not included in the main body of the text.
\subsection{Frobenius Algebras}
\label{sec:frob}
We state here how a Frobenius algebra is implemented within a vector space over~$\mathbb{R}$.
For a mathematically rigorous presentation see~\cite{sadrzadeh2013}.
A real vector space with a fixed basis~$\{\ket{v_i}\}_i$ has a Frobenius algebra given by:
\[
\Delta::\ket{v_i} \mapsto \ket{v_i} \otimes \ket{v_i} \quad \iota :: \ket{v_i} \mapsto 1 
\]
\[
\mu :: \ket{v_i} \otimes \ket{v_i} \mapsto \delta_{ij} \ket{v_i} \quad \zeta :: 1 \mapsto \sum_i \ket{v_i}
\]
This algebra is commutative, so for the swap map~$\sigma: X \otimes Y \rightarrow Y\otimes X$, we have~$\sigma \circ \Delta = \Delta$ and~$\mu \circ \sigma = \mu$.
It is also special so that~$\mu \circ \Delta = 1$. Essentially, the~$\mu$ morphism amounts to taking the diagonal of a matrix,
and~$\Delta$ to embedding a vector within a diagonal matrix. This algebra may be used to model the flow of information in noun phrases with relative pronouns.

\subsubsection{An example noun phrase}
In~\citet{sadrzadeh2013}, the authors describe how the subject and object relative pronouns may be analyzed.
We describe here the subject relative pronoun. The phrase \lang{John who kicks cats} is a noun phrase; it describes John.
The meaning of the phrase should therefore be \lang{John}, modified somehow.
The word~\lang{who} is typed~$n^r n s^l n$, so the sentence~\lang{John who kicks cats} may be reduced as follows:

\begin{figure}[htbp]
\centering
\begin{tikzpicture}[text height=1.5 ex]
	\begin{pgfonlayer}{nodelayer}
		\node [style=none] (0) at (-3.5, 1.75) {John};
		\node [style=none] (1) at (-1.5, 1.75) {who};
		\node [style=none] (2) at (1, 1.75) {kicks};
		\node [style=none] (3) at (2.75, 1.75) {cats};
		\node [style=none] (4) at (-3.5, 0.75) {$n$};
		\node [style=none] (5) at (-2.25, 0.75) {$n^r$};
		\node [style=none] (6) at (-1.75, 0.75) {$n$};
		\node [style=none] (7) at (-1.25, 0.75) {$s^l$};
		\node [style=none] (8) at (-0.75, 0.75) {$n$};
		\node [style=none] (9) at (0.5, 0.75) {$n^r$};
		\node [style=none] (10) at (1, 0.75) {$s$};
		\node [style=none] (11) at (1.5, 0.75) {$n^l$};
		\node [style=none] (12) at (2.75, 0.75) {$n$};
		\node [style=none] (13) at (-3.5, 0.25) {};
		\node [style=none] (14) at (-2.25, 0.25) {};
		\node [style=none] (15) at (-1.75, 0.25) {};
		\node [style=none] (16) at (-1.25, 0.25) {};
		\node [style=none] (17) at (-0.75, 0.25) {};
		\node [style=none] (18) at (0.5, 0.25) {};
		\node [style=none] (19) at (1, 0.25) {};
		\node [style=none] (20) at (1.5, 0.25) {};
		\node [style=none] (21) at (2.75, 0.25) {};
		\node [style=none] (22) at (-1.75, -1) {};
	\end{pgfonlayer}
	\begin{pgfonlayer}{edgelayer}
		\draw [thick, bend right=90] (13.center) to (14.center);
		\draw [thick, bend right=90, looseness=1.25] (17.center) to (18.center);
		\draw [thick, bend right=90, looseness=1.25] (16.center) to (19.center);
		\draw [thick, bend left=90] (21.center) to (20.center);
		\draw [thick] (15.center) to (22.center);
	\end{pgfonlayer}
\end{tikzpicture}
\end{figure}
\citet{sadrzadeh2013} analyse the subject-relative pronoun \lang{who} as having a structure that can be formalised using the Frobenius algebra as follows:

\begin{figure}[h!]
\centering
\begin{tikzpicture}
	\begin{pgfonlayer}{nodelayer}
		\node [style=none] (0) at (-3.5, 3) {John};
		\node [style=none] (1) at (-1.5, 3) {who};
		\node [style=none] (2) at (1, 3) {kicks};
		\node [style=none] (3) at (3, 3) {cats};
		\node [style=blank] (4) at (-3.5, 1.5) {$N$};
		\node [style=blank] (5) at (-2.5, 1.5) {$N$};
		\node [style=blank] (6) at (-1.5, 1.25) {$N$};
		\node [style=blank] (7) at (-0.5, 1.5) {$N$};
		\node [style=blank] (8) at (0.5, 1.5) {$N$};
		\node [style=blank] (9) at (1, 1.5) {$S$};
		\node [style=blank] (10) at (1.5, 1.5) {$N$};
		\node [style=blank] (11) at (3, 1.5) {$N$};
		\node [style=small_node] (12) at (-1.5, 1.75) {};
		\node [style=none] (13) at (-2.5, 2.25) {};
		\node [style=none] (14) at (-2, 2.25) {};
		\node [style=none] (15) at (-1, 2.25) {};
		\node [style=none] (16) at (-0.5, 2.25) {};
		\node [style=small_node] (17) at (1, 1) {};
		\node [style=none] (18) at (-4, 2) {};
		\node [style=none] (19) at (-3, 2) {};
		\node [style=none] (20) at (-3.5, 2.5) {};
		\node [style=none] (21) at (0, 2) {};
		\node [style=none] (22) at (1, 2.75) {};
		\node [style=none] (23) at (2, 2) {};
		\node [style=none] (24) at (2.5, 2) {};
		\node [style=none] (25) at (3, 2.5) {};
		\node [style=none] (26) at (3.5, 2) {};
		\node [style=none] (27) at (-3.5, 2) {};
		\node [style=none] (28) at (0.5, 2) {};
		\node [style=none] (29) at (1, 2) {};
		\node [style=none] (30) at (1.5, 2) {};
		\node [style=none] (31) at (3, 2) {};
		\node [style=none] (32) at (-1.5, 0.75) {};
	\end{pgfonlayer}
	\begin{pgfonlayer}{edgelayer}
		\draw [thick, bend left=90, looseness=2.00] (13.center) to (14.center);
		\draw [thick, in=180, out=-90, looseness=1.25] (14.center) to (12);
		\draw [thick, in=0, out=-90, looseness=1.25] (15.center) to (12);
		\draw [thick, bend left=90, looseness=1.75] (15.center) to (16.center);
		\draw [thick] (16.center) to (7);
		\draw [thick] (12) to (6);
		\draw [thick, bend right=90, looseness=1.25] (4) to (5);
		\draw [thick, bend right=90, looseness=1.25] (7) to (8);
		\draw [thick] (9) to (17);
		\draw [thick, bend right=90] (10) to (11);
		\draw [thick] (18.center) to (19.center);
		\draw [thick] (20.center) to (18.center);
		\draw [thick] (20.center) to (19.center);
		\draw [thick] (21.center) to (23.center);
		\draw [thick] (21.center) to (22.center);
		\draw [thick] (22.center) to (23.center);
		\draw [thick] (25.center) to (24.center);
		\draw [thick] (24.center) to (26.center);
		\draw [thick] (25.center) to (26.center);
		\draw [thick] (13.center) to (5);
		\draw [thick] (27.center) to (4);
		\draw [thick] (28.center) to (8);
		\draw [thick] (29.center) to (9);
		\draw [thick] (30.center) to (10);
		\draw [thick] (31.center) to (11);
		\draw [thick] (6) to (32.center);
	\end{pgfonlayer}
\end{tikzpicture}
\end{figure}Straightening wires allows us to simplify this to the following:

\begin{figure}[h!]
\centering
\begin{tikzpicture}
	\begin{pgfonlayer}{nodelayer}
		\node [style=none] (0) at (-1.75, 1.75) {John};
		\node [style=none] (1) at (0.25, 1.75) {kicks};
		\node [style=none] (2) at (2.25, 1.75) {cats};
		\node [style=blank] (3) at (-1.75, 0.25) {$N$};
		\node [style=blank] (4) at (-0.25, 0.25) {$N$};
		\node [style=blank] (5) at (0.25, 0.25) {$S$};
		\node [style=blank] (6) at (0.75, 0.25) {$N$};
		\node [style=blank] (7) at (2.25, 0.25) {$N$};
		\node [style=small_node] (8) at (0.25, -0.5) {};
		\node [style=none] (9) at (-2.25, 0.75) {};
		\node [style=none] (10) at (-1.25, 0.75) {};
		\node [style=none] (11) at (-1.75, 1.25) {};
		\node [style=none] (12) at (-0.75, 0.75) {};
		\node [style=none] (13) at (0.25, 1.5) {};
		\node [style=none] (14) at (1.25, 0.75) {};
		\node [style=none] (15) at (1.75, 0.75) {};
		\node [style=none] (16) at (2.25, 1.25) {};
		\node [style=none] (17) at (2.75, 0.75) {};
		\node [style=none] (18) at (-1.75, 0.75) {};
		\node [style=none] (19) at (-0.25, 0.75) {};
		\node [style=none] (20) at (0.25, 0.75) {};
		\node [style=none] (21) at (0.75, 0.75) {};
		\node [style=none] (22) at (2.25, 0.75) {};
		\node [style=small_node] (23) at (-1, -0.25) {};
		\node [style=none] (24) at (-1, -1) {};
	\end{pgfonlayer}
	\begin{pgfonlayer}{edgelayer}
		\draw [thick] (5) to (8);
		\draw [thick, bend right=90] (6) to (7);
		\draw [thick] (9.center) to (10.center);
		\draw [thick] (11.center) to (9.center);
		\draw [thick] (11.center) to (10.center);
		\draw [thick] (12.center) to (14.center);
		\draw [thick] (12.center) to (13.center);
		\draw [thick] (13.center) to (14.center);
		\draw [thick] (16.center) to (15.center);
		\draw [thick] (15.center) to (17.center);
		\draw [thick] (16.center) to (17.center);
		\draw [thick] (18.center) to (3);
		\draw [thick] (19.center) to (4);
		\draw [thick] (20.center) to (5);
		\draw [thick] (21.center) to (6);
		\draw [thick] (22.center) to (7);
		\draw [thick, bend right=45] (3) to (23);
		\draw [thick, bend left=45] (4) to (23);
		\draw [thick] (23) to (24.center);
	\end{pgfonlayer}
\end{tikzpicture}
\end{figure}

\begin{table}
\centering
\caption{Table of diagrams for Frobenius algebras in $\CPMC$ and $\mathcal{C}$}
\label{tab:frob} 
\begin{tabular}{ c c}
  $\CPMC$ & $\mathcal{C}$\\
  \hline
    $E(\mu) = \mu_* \otimes \mu$ & $\mu : A^* \otimes A \otimes A^* \otimes A \rightarrow A^* \otimes A$ \\
  \begin{tikzpicture}
	\begin{pgfonlayer}{nodelayer}
		\node [style=none] (0) at (-1, 0.25) {$A$};
		\node [style=none] (1) at (1, 0.25) {$A$};
		\node [style=none] (2) at (-1, 0) {};
		\node [style=none] (3) at (1, 0) {};
		\node [style=small circ] (4) at (0, -1) {};
		\node [style=none] (5) at (0, -2) {};
		\node [style=none] (6) at (0, -2.25) {$A$};
	\end{pgfonlayer}
	\begin{pgfonlayer}{edgelayer}
		\draw [ultra thick, bend left=45, looseness=1.25] (4) to (2.center);
		\draw [ultra thick, bend right=45, looseness=1.25] (4) to (3.center);
		\draw [ultra thick] (5.center) to (4);
	\end{pgfonlayer}
\end{tikzpicture}&\begin{tikzpicture}
	\begin{pgfonlayer}{nodelayer}
		\node [style=none] (0) at (-1, 0.25) {$A^*$};
		\node [style=none] (1) at (1, 0.25) {$A^*$};
		\node [style=none] (2) at (-1, 0) {};
		\node [style=none] (3) at (1, 0) {};
		\node [style=small circ] (4) at (0, -1) {};
		\node [style=none] (5) at (0, -2) {};
		\node [style=none] (6) at (0, -2.25) {$A^*$};
		\node [style=small circ] (7) at (1.5, -1) {};
		\node [style=none] (8) at (2.5, 0) {};
		\node [style=none] (9) at (1.5, -2.25) {$A$};
		\node [style=none] (10) at (0.5, 0.25) {$A$};
		\node [style=none] (11) at (0.5, 0) {};
		\node [style=none] (12) at (1.5, -2) {};
		\node [style=none] (13) at (2.5, 0.25) {$A$};
	\end{pgfonlayer}
	\begin{pgfonlayer}{edgelayer}
		\draw [thick, bend left=45, looseness=1.25] (4) to (2.center);
		\draw [thick, bend right=45, looseness=1.25] (4) to (3.center);
		\draw [thick] (5.center) to (4);
		\draw [thick, bend left=45, looseness=1.25] (7) to (11.center);
		\draw [thick, bend right=45, looseness=1.25] (7) to (8.center);
		\draw [thick] (12.center) to (7);
	\end{pgfonlayer}
\end{tikzpicture}\\
  \multicolumn{2}{c}{$\mu: \ket{e_i} \otimes \ket{e_j} \otimes \ket{e_k} \otimes \ket{e_l} \mapsto \braket{e_i| e_k}\braket{ e_j| e_l}(\ket{e_i} \otimes \ket{e_j})$} \\
$E(\Delta) = \Delta_* \otimes \Delta$ &  $\Delta : A^* \otimes A  \rightarrow A^* \otimes A \otimes A^* \otimes A $\\
\begin{tikzpicture}
	\begin{pgfonlayer}{nodelayer}
		\node [style=none] (0) at (-1, -2.25) {$A$};
		\node [style=none] (1) at (1, -2.25) {$A$};
		\node [style=none] (2) at (-1, -2) {};
		\node [style=none] (3) at (1, -2) {};
		\node [style=small circ] (4) at (0, -1) {};
		\node [style=none] (5) at (0, 0) {};
		\node [style=none] (6) at (0, 0.25) {$A$};
	\end{pgfonlayer}
	\begin{pgfonlayer}{edgelayer}
		\draw [ultra thick, bend right=45, looseness=1.25] (4) to (2.center);
		\draw [ultra thick, bend left=45, looseness=1.25] (4) to (3.center);
		\draw [ultra thick] (5.center) to (4);
	\end{pgfonlayer}
\end{tikzpicture}&\begin{tikzpicture}
	\begin{pgfonlayer}{nodelayer}
		\node [style=none] (0) at (-1, -2.25) {$A^*$};
		\node [style=none] (1) at (1, -2.25) {$A^*$};
		\node [style=none] (2) at (-1, -2) {};
		\node [style=none] (3) at (1, -2) {};
		\node [style=small circ] (4) at (0, -1) {};
		\node [style=none] (5) at (0, 0) {};
		\node [style=none] (6) at (0, 0.25) {$A^*$};
		\node [style=small circ] (7) at (1.5, -1) {};
		\node [style=none] (8) at (2.5, -2) {};
		\node [style=none] (9) at (1.5, 0.25) {$A$};
		\node [style=none] (10) at (0.5, -2.25) {$A$};
		\node [style=none] (11) at (0.5, -2) {};
		\node [style=none] (12) at (1.5, 0) {};
		\node [style=none] (13) at (2.5, -2.25) {$A$};
	\end{pgfonlayer}
	\begin{pgfonlayer}{edgelayer}
		\draw [thick, bend right=45, looseness=1.25] (4) to (2.center);
		\draw [thick, bend left=45, looseness=1.25] (4) to (3.center);
		\draw [thick] (5.center) to (4);
		\draw [thick, bend right=45, looseness=1.25] (7) to (11.center);
		\draw [thick, bend left=45, looseness=1.25] (7) to (8.center);
		\draw [thick] (12.center) to (7);
	\end{pgfonlayer}
\end{tikzpicture}\\
\multicolumn{2}{c}{$\Delta: \ket{e_i} \otimes \ket{e_j} \mapsto \sum_{ij}\ket{e_i} \otimes \ket{e_j} \otimes \ket{e_i} \otimes \ket{e_j} $} \\
$E(\iota) = \iota_* \otimes \iota$ & $\iota : I \rightarrow A^* \otimes A$ \\
  \begin{tikzpicture}
	\begin{pgfonlayer}{nodelayer}
		\node [style=small circ] (0) at (0, -1) {};
		\node [style=none] (1) at (0, 0) {};
		\node [style=none] (2) at (0, 0.25) {$A$};
	\end{pgfonlayer}
	\begin{pgfonlayer}{edgelayer}
		\draw [ultra thick] (1.center) to (0);
	\end{pgfonlayer}
\end{tikzpicture}&\begin{tikzpicture}
	\begin{pgfonlayer}{nodelayer}
		\node [style=small circ] (0) at (0, -1) {};
		\node [style=none] (1) at (0, 0) {};
		\node [style=none] (2) at (0, 0.25) {$A^*$};
		\node [style=small circ] (3) at (1.5, -1) {};
		\node [style=none] (4) at (1.5, 0.25) {$A$};
		\node [style=none] (5) at (1.5, 0) {};
	\end{pgfonlayer}
	\begin{pgfonlayer}{edgelayer}
		\draw [thick] (1.center) to (0);
		\draw [thick] (5.center) to (3);
	\end{pgfonlayer}
\end{tikzpicture}\\
  \multicolumn{2}{c}{$\iota: \ket{e_i} \otimes \ket{e_j} \mapsto 1$} \\
$E(\zeta) = \zeta_* \otimes \zeta$ &  $\zeta : A^* \otimes A  \rightarrow I $\\ 
\begin{tikzpicture}
	\begin{pgfonlayer}{nodelayer}
		\node [style=small circ] (0) at (0, 0.25) {};
		\node [style=none] (1) at (0, -0.75) {};
		\node [style=none] (2) at (0, -1) {$A$};
	\end{pgfonlayer}
	\begin{pgfonlayer}{edgelayer}
		\draw [ultra thick] (1.center) to (0);
	\end{pgfonlayer}
\end{tikzpicture}&\begin{tikzpicture}
	\begin{pgfonlayer}{nodelayer}
		\node [style=small circ] (0) at (0, 0.25) {};
		\node [style=none] (1) at (0, -0.75) {};
		\node [style=none] (2) at (0, -1) {$A^*$};
		\node [style=small circ] (3) at (1.5, 0.25) {};
		\node [style=none] (4) at (1.5, -1) {$A$};
		\node [style=none] (5) at (1.5, -0.75) {};
	\end{pgfonlayer}
	\begin{pgfonlayer}{edgelayer}
		\draw [thick] (1.center) to (0);
		\draw [thick] (5.center) to (3);
	\end{pgfonlayer}
\end{tikzpicture}\\
\multicolumn{2}{c}{$\zeta: 1 \mapsto \sum_{i}\ket{e_1} \otimes \ket{e_i}$} \\ 
  \hline
\end{tabular}
\end{table}

\subsection{$k$-hyponymy in relative clauses}
\label{sec:krelpron}
Relative clauses are expressions such as~\lang{John who kicks cats}.
These are noun phrases, and the diagrammatic representation of such phrases was introduced in section~\ref{sec:frob}.
As for sentences, the diagram in~$\CPM{\FHilb}$ is equivalent to the diagram in~$\FHilb$ but with thick wires, given in figure \ref{fig:frob-sub-cpmc}
The diagrammatic representation of subject relative clauses in~$\FHilb$ is given in figure \ref{fig:frob-sub-c}
\begin{figure}[htbp]
\centering
\begin{tikzpicture}
	\begin{pgfonlayer}{nodelayer}
		\node [style=none] (0) at (-1, 3) {subject};
		\node [style=none] (1) at (1.5, 3) {verb};
		\node [style=none] (2) at (4, 3) {object};
		\node [style=none] (3) at (-1, 2) {};
		\node [style=none] (4) at (0, 0.5) {};
		\node [style=none] (5) at (1, 2) {};
		\node [style=none] (6) at (1.5, 2) {};
		\node [style=none] (7) at (2, 2) {};
		\node [style=none] (8) at (4, 2) {};
		\node [style=small_node] (9) at (0, 1.25) {};
		\node [style=small_node] (10) at (1.5, 1) {};
		\node [style=none] (11) at (-1.5, 2) {};
		\node [style=none] (12) at (-0.5, 2) {};
		\node [style=none] (13) at (-1, 2.5) {};
		\node [style=none] (14) at (0.5, 2) {};
		\node [style=none] (15) at (1.5, 2.75) {};
		\node [style=none] (16) at (2.5, 2) {};
		\node [style=none] (17) at (3.5, 2) {};
		\node [style=none] (18) at (4, 2.5) {};
		\node [style=none] (19) at (4.5, 2) {};
	\end{pgfonlayer}
	\begin{pgfonlayer}{edgelayer}
		\draw [ultra thick] (9) to (4.center);
		\draw [ultra thick] (6.center) to (10);
		\draw [ultra thick, bend right=90, looseness=1.25] (7.center) to (8.center);
		\draw [ultra thick] (11.center) to (12.center);
		\draw [ultra thick] (13.center) to (11.center);
		\draw [ultra thick] (13.center) to (12.center);
		\draw [ultra thick] (14.center) to (16.center);
		\draw [ultra thick] (14.center) to (15.center);
		\draw [ultra thick] (15.center) to (16.center);
		\draw [ultra thick] (18.center) to (17.center);
		\draw [ultra thick] (17.center) to (19.center);
		\draw [ultra thick] (18.center) to (19.center);
		\draw [ultra thick, bend right=45] (3.center) to (9);
		\draw [ultra thick, bend left=45] (5.center) to (9);
	\end{pgfonlayer}
\end{tikzpicture}
\caption{A noun phrase generated by the subject relative pronoun sentence in $\CPMC$}
\label{fig:frob-sub-cpmc}
\end{figure}
\begin{figure}[htbp]
\centering
\begin{tikzpicture}
	\begin{pgfonlayer}{nodelayer}
		\node [style=none] (0) at (0.25, -0.75) {};
		\node [style=none] (1) at (0.75, 0) {};
		\node [style=none] (2) at (1.25, -0.75) {};
		\node [style=none] (3) at (3.25, -0.75) {};
		\node [style=none] (4) at (3.75, 0) {};
		\node [style=none] (5) at (3, 0) {};
		\node [style=none] (6) at (3, 0.5) {};
		\node [style=none] (7) at (3.5, 0.5) {};
		\node [style=none] (8) at (3.25, 0.5) {};
		\node [style=none] (9) at (2.75, 0.5) {};
		\node [style=none] (10) at (2, 0) {};
		\node [style=none] (11) at (2.25, 0.5) {};
		\node [style=none] (12) at (2.5, 0.5) {};
		\node [style=none] (13) at (2.5, -0.75) {};
		\node [style=none] (14) at (2.75, 0) {};
		\node [style=none] (15) at (1.75, 0) {};
		\node [style=none] (16) at (1.5, 0.5) {};
		\node [style=none] (17) at (0, 0.5) {};
		\node [style=none] (18) at (0, 0) {};
		\node [style=blank] (19) at (0.25, -0.5) {$N$};
		\node [style=blank] (20) at (0.75, -0.5) {$S$};
		\node [style=blank] (21) at (1.25, -0.5) {$N'$};
		\node [style=blank] (22) at (2.5, -0.5) {$N'$};
		\node [style=blank] (23) at (3.25, -0.5) {$N'$};
		\node [style=none] (24) at (0.25, 0.5) {};
		\node [style=none] (25) at (0.75, 0.5) {};
		\node [style=none] (26) at (1.25, 0.5) {};
		\node [style=small_node] (27) at (0.75, -1.75) {};
		\node [style=none] (28) at (-1, 0) {};
		\node [style=none] (29) at (-3, 0.5) {};
		\node [style=none] (30) at (-1.5, 0.5) {};
		\node [style=blank] (31) at (-0.5, -0.5) {$N$};
		\node [style=blank] (32) at (-2.75, -0.5) {$N$};
		\node [style=none] (33) at (-1.75, 0.5) {};
		\node [style=none] (34) at (-0.5, -0.75) {};
		\node [style=none] (35) at (-3.25, 0) {};
		\node [style=none] (36) at (-2, 0) {};
		\node [style=blank] (37) at (-3.5, -0.5) {$N$};
		\node [style=none] (38) at (-2.75, -0.75) {};
		\node [style=none] (39) at (-2.25, 0) {};
		\node [style=none] (40) at (-4, 0) {};
		\node [style=none] (41) at (-3.5, 0.5) {};
		\node [style=none] (42) at (-0.25, 0) {};
		\node [style=none] (43) at (-2.5, 0.5) {};
		\node [style=small_node] (44) at (-1, -1.75) {};
		\node [style=none] (45) at (-3.25, 0.5) {};
		\node [style=blank] (46) at (-1.5, -0.5) {$N'$};
		\node [style=none] (47) at (-1, 0.5) {};
		\node [style=none] (48) at (-0.5, 0.5) {};
		\node [style=none] (49) at (-1.5, -0.75) {};
		\node [style=none] (50) at (-2.75, 0.5) {};
		\node [style=none] (51) at (-3, 0) {};
		\node [style=none] (52) at (-3.75, 0.5) {};
		\node [style=none] (53) at (-0.25, 0.5) {};
		\node [style=blank] (54) at (-1, -0.5) {$S$};
		\node [style=none] (55) at (-3.5, -0.75) {};
		\node [style=none] (56) at (-3.5, 0) {};
		\node [style=none] (57) at (-2.75, 0) {};
		\node [style=none] (58) at (-1.5, 0) {};
		\node [style=none] (59) at (-0.5, 0) {};
		\node [style=none] (60) at (0.25, 0) {};
		\node [style=none] (61) at (1.25, 0) {};
		\node [style=none] (62) at (2.5, 0) {};
		\node [style=none] (63) at (3.25, 0) {};
		\node [style=none] (64) at (-3.25, 1.5) {subject};
		\node [style=none] (65) at (-0.25, 1.5) {verb};
		\node [style=none] (66) at (3, 1.5) {object};
		\node [style=none] (67) at (-2.25, -2.25) {};
		\node [style=none] (68) at (-1.5, -2.25) {};
		\node [style=small_node] (69) at (-2.25, -1.5) {};
		\node [style=small_node] (70) at (-1.5, -1.5) {};
	\end{pgfonlayer}
	\begin{pgfonlayer}{edgelayer}
		\draw (52.center) to (40.center);
		\draw (40.center) to (35.center);
		\draw (35.center) to (45.center);
		\draw (45.center) to (52.center);
		\draw (29.center) to (51.center);
		\draw (51.center) to (39.center);
		\draw (39.center) to (43.center);
		\draw (43.center) to (29.center);
		\draw (33.center) to (36.center);
		\draw (36.center) to (42.center);
		\draw (42.center) to (53.center);
		\draw (53.center) to (33.center);
		\draw (17.center) to (18.center);
		\draw (18.center) to (15.center);
		\draw (15.center) to (16.center);
		\draw (16.center) to (17.center);
		\draw (54) to (44);
		\draw (20) to (27);
		\draw (11.center) to (10.center);
		\draw (9.center) to (14.center);
		\draw (10.center) to (14.center);
		\draw (9.center) to (11.center);
		\draw (6.center) to (5.center);
		\draw (5.center) to (4.center);
		\draw (4.center) to (7.center);
		\draw (7.center) to (6.center);
		\draw (28.center) to (54);
		\draw (1.center) to (20);
		\draw [bend left=90] (41.center) to (50.center);
		\draw [bend left=90] (48.center) to (24.center);
		\draw [bend left=90, looseness=0.75] (47.center) to (25.center);
		\draw [bend left=90, looseness=0.75] (30.center) to (26.center);
		\draw [bend right=90] (8.center) to (12.center);
		\draw (56.center) to (37);
		\draw (57.center) to (32);
		\draw (58.center) to (46);
		\draw (59.center) to (31);
		\draw (60.center) to (19);
		\draw (61.center) to (21);
		\draw (62.center) to (22);
		\draw (63.center) to (23);
		\draw [bend left=90, looseness=0.75] (3.center) to (2.center);
		\draw [bend left=90, looseness=0.50] (13.center) to (49.center);
		\draw [bend right, looseness=0.50] (55.center) to (69);
		\draw [bend right, looseness=0.50] (69) to (34.center);
		\draw [bend right, looseness=0.75] (38.center) to (70);
		\draw [bend right=15] (70) to (0.center);
		\draw (69) to (67.center);
		\draw (70) to (68.center);
	\end{pgfonlayer}
\end{tikzpicture}
\caption{A noun phrase generated by the subject relative pronoun sentence in $\mathcal{C}$}
\label{fig:frob-sub-c}
\end{figure}

We assume that the relative pronoun is \lang{which}.
Then the meaning map for the relative clause \lang{subj which verb obj}  in~$\CPM{\FHilb}$ is~$\mu_N \ot \iota_S \ot \ep_N$ and the meaning of the relative clause is given by: 
\[
(\mu_N \ot \iota_S \ot \ep_N)(\semantics{\lang{subj}} \ot \semantics{\lang{verb}} \ot \semantics{\lang{obj}}) .
\]
We can now characterise the relationship between relative clauses `\lang{A which verb C}' and `\lang{B which verb D}' where $\semantics{A} \hypo_k \semantics{B}$
and $\semantics{C} \hypo_l \semantics{D}$, and obtain a result very similar to the one we had for the positive semi-definite sentence types, under the same assumptions. 
\begin{thm}
  Let~$n_1, n_2, n_3, n_4$ be nouns with corresponding density matrix representations $\semantics{n_1}$, $\semantics{n_2}$, $\semantics{n_3}$ and $\semantics{n_4}$,
  and such that $\semantics{n_2} = k \semantics{n_1} + D$ and $\semantics{n_4} = l \semantics{n_3} + D'$ for some $k, l \in (0,1]$. Then we have that: 
\[
\varphi \left( \semantics{n_1}\otimes\semantics{verb}\otimes\semantics{n_3}\right) \hypo_{kl} \varphi \left(\semantics{n_2}\otimes\semantics{verb}\otimes\semantics{n_4}\right)
\]
\end{thm}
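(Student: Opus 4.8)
The plan is to mirror the proof of Theorem~\ref{thm:trans}, replacing the transitive sentence map $\ep_N \ot 1_S \ot \ep_N$ with the relative-clause meaning map $\varphi = \mu_N \ot \iota_S \ot \ep_N$, and exploiting the fact that $\varphi$ is again a completely positive morphism in $\CPM{\FHilb}$. First I would write $\semantics{Z}$ for the density matrix representing the verb and unpack the hypotheses: $\semantics{n_2} = k\,\semantics{n_1} + D$ and $\semantics{n_4} = l\,\semantics{n_3} + D'$ for positive operators $D$, $D'$ and $k,l\in(0,1]$. Substituting these into $\varphi(\semantics{n_2} \ot \semantics{Z} \ot \semantics{n_4})$ and using bilinearity of $\ot$ together with linearity of $\varphi$, this expands as
\[
\varphi(\semantics{n_2} \ot \semantics{Z} \ot \semantics{n_4}) = kl\,\varphi(\semantics{n_1} \ot \semantics{Z} \ot \semantics{n_3}) + \varphi(P),
\]
where $P = k\,\semantics{n_1}\ot\semantics{Z}\ot D' + D\ot\semantics{Z}\ot l\,\semantics{n_3} + D\ot\semantics{Z}\ot D'$.

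Next I would observe that $P$ is a sum of tensor products of positive operators: each of $\semantics{n_1}$, $\semantics{n_3}$, $\semantics{Z}$, $D$, $D'$ is positive, the scalars $k,l$ are non-negative, and positivity is preserved under tensor product and addition. Hence $P \psd 0$. Applying $\varphi$ and using that $\varphi$, being a $\CPM{\FHilb}$-morphism, sends positive operators to positive operators, we get $\varphi(P) \psd 0$, so that
\[
\varphi(\semantics{n_2}\ot\semantics{Z}\ot\semantics{n_4}) - kl\,\varphi(\semantics{n_1}\ot\semantics{Z}\ot\semantics{n_3}) = \varphi(P) \psd 0,
\]
which is exactly the statement $\varphi(\semantics{n_1}\ot\semantics{Z}\ot\semantics{n_3}) \hypo_{kl} \varphi(\semantics{n_2}\ot\semantics{Z}\ot\semantics{n_4})$.

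The one ingredient genuinely new relative to Theorem~\ref{thm:trans} is the claim that $\varphi = \mu_N \ot \iota_S \ot \ep_N$ is completely positive. I would justify this by noting that the Frobenius structure maps lift to $\CPM{\FHilb}$ as $E(\mu)$ and $E(\iota)$ (Table~\ref{tab:frob}), that $\ep$ lifts to $E(\ep)$ (Table~\ref{tab:dc}), and that completely positive morphisms are closed under tensor product; consequently $\varphi$ is a legitimate $\CPM{\FHilb}$-morphism and in particular maps positive operators to positive operators. I expect this verification — checking that the Frobenius maps really do behave as completely positive maps once the CPM construction is unwound — to be the only point requiring care; the algebraic expansion above is otherwise entirely routine, exactly as in the transitive case. (One should also remark that, as in Theorem~\ref{thm:general}, the resulting $kl$ is merely a lower bound and need not be the maximal hyponymy strength.)
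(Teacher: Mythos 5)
Your proposal is correct and follows essentially the same route as the paper: substitute $\semantics{n_2} = k\,\semantics{n_1}+D$ and $\semantics{n_4} = l\,\semantics{n_3}+D'$, expand by bilinearity, and observe that the leftover term is $\varphi$ applied to a sum of tensor products of positive operators, hence positive because the relative-clause map preserves positivity. The paper merely asserts that $\mu_N$, $\iota_S$ and $\ep_N$ send density matrices to density matrices, whereas you justify this via the CPM lifting of the Frobenius maps and closure of completely positive morphisms under tensor product — a slightly more explicit version of the same argument.
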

\begin{proof} The proof of this result is identical to that of theorem \ref{thm:trans}, except for the fact that when we consider 
\[
\varphi(\semantics{n_2}\otimes\semantics{verb}\otimes\semantics{n_4}) - kl \, \varphi(\semantics{n_1}\otimes\semantics{verb}\otimes\semantics{n_3}) 
\]
we get~$\varphi = \mu_N \ot \iota_S \ot \ep_N$ applied to 
\[
(k \semantics{n_1} \ot \semantics{Z} \ot D') + (D \ot \semantics{Z} \ot l \semantics{n_3}) + (D \ot \semantics{Z} \ot D') 
\]
instead of~$\varphi = (\ep_N \ot 1_S \ot \ep_N)$ applied to the same.
The result is, however, still a positive quantity by the property of the morphisms~$\mu_N$, $1_S$ and $\ep_N$ to map density matrices to density matrices.
Thus, we can conclude as before that:
\[
\varphi \left(\semantics{n_2}\otimes\semantics{verb}\otimes\semantics{n_4}\right) \hypo_{kl} \varphi\left( \semantics{n_1}\otimes\semantics{verb}\otimes\semantics{n_3}\right).
\]
\end{proof}

\subsubsection{$k$-hyponymy applied to relative clauses}
We will consider the containment of the sentence:
\begin{equation*}
  s_1 = \lang{Elderly ladies who own cats} 
\end{equation*}
in the sentence:
\begin{equation*}
  s_2 = \lang{Women who own animals}
\end{equation*}
First of all, let the subject and object space for the vectors corresponding to the subjects and object of our sentences be~$\mathbb{R}^2$ and~$\mathbb{R}^3$ respectively. Let:
\[
\semantics{\lang{elderly ladies}} = \ket{e_1}\bra{e_1}, \hspace{0.5cm} \semantics{\lang{young ladies}} = \ket{e_2}\bra{e_2}
\]
and the density matrix for the hypernym \lang{women} be:
\[
\semantics{\lang{women}} = \frac{1}{3} \, \semantics{\lang{elderly ladies}} + \frac{2}{3}\, \semantics{\lang{young ladies}}
\]
Similarly, let: 
\begin{align*}
  \semantics{\lang{cats}} &= \ket{n_1}\bra{n_1}\\
  \semantics{\lang{dogs}} &= \ket{n_2}\bra{n_2}\\
  \semantics{\lang{hamsters}} &= \ket{n_3}\bra{n_3}
\end{align*}
and take the density matrix for \lang{animals} to be: 
\[
\semantics{\lang{animal}} = \frac{1}{2} \semantics{\lang{cats}} + \frac{1}{4} \semantics{\lang{dogs}} + \frac{1}{4} \semantics{\lang{hamsters}}
\]
The sentence space will not matter in this case, as it gets deleted by the~$\iota_S$ morphism, so we just take it to be an unspecified~$S$. Let the verb \lang{own} be given by:
\[
\semantics{\lang{own}} = \sum_{ijkl} C_{ijkl} \, \ket{e_i} \bra{e_k} \ot \ket{s} \bra{s'} \ot \ket{n_j} \bra{n_l} .
\]
and the sentence map~$\psi$ is given by~$\mu_N \ot \iota_S \ot \ep_N$
Then the meaning of sentences $s_1$ and $s_2$ are given by: 
\begin{align*}
\semantics{s_1} & = \psi (\semantics{\lang{elderly ladies}} \ot \semantics{\lang{own}} \ot \semantics{\lang{cats}})\\
\semantics{s_2} & = \psi(\semantics{women} \ot \semantics{\lang{own}} \ot \semantics{animals}) \\
& = \frac{1}{6} \psi (\semantics{\lang{elderly ladies}} \ot \semantics{\lang{own}} \ot \semantics{\lang{cats}}) \\
& + \frac{1}{12} \psi (\semantics{\lang{elderly ladies}}\ot \semantics{\lang{own}} \ot \semantics{\lang{dogs}}\\
& + \frac{1}{12} \psi (\semantics{\lang{elderly ladies}}\ot \semantics{\lang{own}} \ot\semantics{\lang{hamsters}}))\\
& + \frac{1}{3} \psi (\semantics{\lang{young ladies}} \ot \semantics{\lang{own}} \ot \semantics{animals})
\end{align*}
Then~$\semantics{s_2} - \frac{1}{6} \, \semantics{s_1}$ is given by: 
\begin{align*}
  &\frac{1}{12} \psi(\semantics{\lang{elderly ladies}} \ot \semantics{\lang{own}} \ot \semantics{\lang{dogs}})\\
 + &\frac{1}{12} \psi(\semantics{\lang{elderly ladies}} \ot \semantics{\lang{own}} \ot\semantics{\lang{hamsters}})\\
 + &\frac{1}{3} \psi(\semantics{\lang{young ladies}} \ot \semantics{\lang{own}} \ot \semantics{\lang{animals}})
\end{align*}
which is clearly positive.

\end{document}